\newtheorem{theorem}{Theorem}
\newtheorem{lemma}{Lemma}
\newtheorem{proposition}{Proposition}
\theoremstyle{definition}
\newtheorem{definition}{Definition}
\theoremstyle{assumption}
\newtheorem{assumption}{Assumption}
\newtheorem{remark}{Remark}
\title{Is Plug-in Solver Sample-Efficient for Feature-based Reinforcement Learning?}
\author{
	Qiwen Cui\\
	School of Mathematical Science,\\
	Peking University\\
	\texttt{cuiqiwen@pku.edu.cn}\\
	\And
	Lin F. Yang\\
	Electrical and Computer Engineering Department,\\
	University of California, Los Angles\\
	\texttt{linyang@ee.ucla.edu}
}
\begin{document}
	
	\maketitle
	
	\begin{abstract}
		It is believed that a model-based approach for reinforcement learning (RL) is the key to reduce sample complexity. However, the understanding of the sample optimality of model-based RL is still largely missing, even for the linear case. 
		This work considers sample complexity of finding an $\epsilon$-optimal policy in a Markov decision process (MDP) that admits a linear additive feature representation, given only access to a generative model. 
		We solve this problem via a plug-in solver approach, which builds an empirical model and plans in this empirical model via an arbitrary plug-in solver. 
		We prove that under the anchor-state assumption, which implies implicit non-negativity in the feature space, 
		the minimax sample complexity of finding an $\epsilon$-optimal policy in a $\gamma$-discounted MDP is $O(K/(1-\gamma)^3\epsilon^2)$, which only depends on the dimensionality $K$ of the feature space and has no dependence on the state or action space.
		We further extend our results to a relaxed setting where anchor-states may not exist and show that a plug-in approach can be sample efficient as well, providing a flexible approach to design model-based algorithms for RL. 
	\end{abstract}

	
	\section{Introduction}

	Reinforcement learning (RL) \citep{sutton2018reinforcement} is about learning to make optimal decisions in an unknown environment. It has been believed to be one of the key approaches to reach  artificial general intelligence.
	In recent years, RL achieves phenomenal empirical successes in many real-world applications, e.g., game-AI \citep{vinyals2017starcraft}, robot control \citep{duan2016benchmarking}, health-care \citep{li2018hybrid}.
	Most of these successful applications are based on a model-free approach, where the agent directly learns the value function of the environment.
	Despite superior performance, these algorithms usually take tremendous amount of samples. E.g. a typical model-free Atari-game agent takes about several hours of training data in order to perform well \citep{mnih2013playing}.
	Reducing sample complexity becomes a critical research topic of in RL. 
	
	It is well believed that model-based RL, where the agent learns the model of the environment and then performs planning in the model, is significantly more sample efficient than model-free RL. 
	Recent empirical advances also justify such a belief (e.g. \citep{kaiser2019model,wang2019benchmarking}).
	However, the understanding of model-based RL is still far from complete. E.g., how to deal with issues like model-bias and/or error compounding due to long horizon, in model-based RL is still an open question \citep{jiang2015dependence}, 
	especially with the presence of a function approximator (e.g. a neural network) on the model.
	In order to get a better understanding on these issues, we target on the sample complexity question of model-based RL from a very basic setting: feature-based RL.
	In feature-based RL, we are given a hand-crafted or learned low-dimensional feature-vector for each state-action pair and the transition model can be represented by a linear combination of the feature vectors. 
	Such a model recently has attracted much interest due to its provable guarantee with model-free algorithms (e.g. \citep{Yang2019,yang2019reinforcement,jin2019provably}). 
	In particular, we aim on answering the following fundamental question.
	
	\begin{center}
		\emph{Does a model-based approach on feature-based RL achieve near-optimal sample complexity?}
	\end{center}
	
	In particular, we focus on the generative model setting, where the agent is able to query samples freely from any chosen state-action pairs. 
	Such a model is proposed by \citep{kearns1999finite,kakade2003sample} 
	and gains a great deal of interests recently \citep{azar2013minimax,sidford2018near,Yang2019,zanette2019limiting}.
	Moreover, we focus on the \emph{plug-in solver approach}, which is probably the most intuitive and simplest approach for model-based RL: we first build an empirical model with an estimate of the transition probability matrix and then find a near optimal policy by planning in this empirical model via arbitrary plug-in solver. 
	In the tabular setting, where the state and action spaces, $\mathcal{S}$ and $\mathcal{A}$, are finite,
	\citep{azar2013minimax}  shows that the \emph{value estimation} of a plug-in approach is minimax optimal in samples. 
	In particular, they show that to obtain an $\epsilon$-optimal value, the number of samples required to estimate the model is $\widetilde{O}(|\mathcal{S}||\mathcal{A}|/\epsilon^2(1-\gamma)^3)$.\footnote{In $\widetilde{O}(f)$, $\log f$ factors are ignored.} 
	Very recently, \citep{agarwal2019optimality} proves that the \emph{policy estimation} is also minimax optimal and with the same sample complexity.
	Unfortunately, these results cannot be applied to the function approximation setting, especially when the number of states becomes infinity.

	In this paper, 
	we show that the plug-in solver approach do achieve \emph{near-optimal sample complexity} even in the feature-based setting, provided that the features are well conditioned.
	In particular, we show that under an anchor-state condition, where all features can be represented by the convex combination of some anchor-state features, 
	an $\epsilon$-optimal policy can be obtained from an approximate model with only   $\widetilde{O}(K/\epsilon^{2}(1-\gamma)^3)$ samples from the generative model, where $K$ is the feature dimension, independent of the size of state and action spaces. 
	Under a more relaxed setting on the features, we also prove that finding an $\epsilon$-optimal policy only needs $\widetilde{O}(K\cdot poly(1/(1-\gamma))/\epsilon^{2})$ samples. 
	To achieve our results, we observe that the value function actually lies in an one-dimensional manifold and thus we can construct a series of auxiliary MDPs to approximate the value function. This auxiliary MDP technique breaks the statistical dependence that impedes the analysis.
	We have also extended our techniques to other settings e.g. finite horizon MDP (FHMDP) and two-players turn-based stochastic games (2-TBSG).
	To our best knowledge, this work first proves that plug-in approach is sample-optimal for feature-based RL and we hope our  technique can boost analysis in broader settings.
	
	\begin{table}
		\caption{Sample complexity to compute $\epsilon$-optimal policy with generative model}
		\label{sample-table}
		\centering
		\begin{tabular}{llll}
			\toprule
			Algorithm     & Sample Complexity     & $\epsilon$-range & Problem type \\
			\midrule
			\makecell[l]{Empirical QVI\\ \citep{azar2013minimax}} & $\frac{|\mathcal{S}||\mathcal{A}|}{(1-\gamma)^3\epsilon^2}$ & $(0,\frac{1}{\sqrt{(1-\gamma)|\mathcal{S}|}}]$&Tabular MDP    \\
			\makecell[l]{Variance-reduced QVI\\ \citep{sidford2018variance}}     & $\frac{|\mathcal{S}||\mathcal{A}|}{(1-\gamma)^3\epsilon^2}$ &$(0,1]$   & Tabular MDP \\
			\makecell[l]{Empirical MDP\\ \citep{agarwal2019optimality}} & $\frac{|\mathcal{S}||\mathcal{A}|}{(1-\gamma)^3\epsilon^2}$ &$(0,\frac{1}{\sqrt{1-\gamma}}]$   &  Tabular MDP \\
			\makecell[l]{Perturbed empirical MDP\\ \citep{li2020breaking}} &  $\frac{|\mathcal{S}||\mathcal{A}|}{(1-\gamma)^3\epsilon^2}$ &$(0,\frac{1}{1-\gamma}]$   &  Tabular MDP \\
			\makecell[l]{QVI-MDVSS\\ \citep{Sidford2019}} & $\frac{|\mathcal{S}||\mathcal{A}|}{(1-\gamma)^3\epsilon^2}$ &$(0,1]$   &  Tabular TBSG \\
			\makecell[l]{OPPQ-Learning\\ \citep{Yang2019}} & $\frac{K}{(1-\gamma)^3\epsilon^2}$ & $(0,1]$ &  Linear MDP\\
			\makecell[l]{Two side PQ-Learning\\ \citep{jia2019feature}}&  $\frac{KL^2}{(1-\gamma)^4\epsilon^2}$ &$(0,1]$   &  Linear TBSG \\
			\makecell[l]{Empirical MDP\\ (This work)} & $\frac{K}{(1-\gamma)^3\epsilon^2}$ &$(0,\frac{1}{\sqrt{1-\gamma}}]$ & Linear MDP/TBSG \\
			\bottomrule
		\end{tabular}
		\caption*{Here $|\mathcal{S}|$ is the number of states, $|\mathcal{A}|$ is the number of actions, $\gamma$ is the discount factor, $K$ is the number of representative features, $\epsilon$ is the policy accuracy and $L$ is a coefficient measuring well-conditioned features.}
	\end{table}

	\section{Related Work}
	
	
	\paragraph{Generative Model} There is a line of research focusing  on improving the sample complexity with a generative model, e.g. \citep{kearns1999finite,kakade2003sample,azar2012sample,azar2013minimax,sidford2018near,sidford2018variance,Yang2019,Sidford2019,zanette2019limiting,li2020breaking}. 
	A classic algorithm under generative model setting is phased Q-learning \citep{kearns1999finite}. It uses $\widetilde{O}(|\mathcal{S}||\mathcal{A}|/\epsilon^2/\mathrm{poly}(1-\gamma))$ samples to find an $\epsilon$-optimal policy, which is sublinear to the model size $|\mathcal{S}|^2|\mathcal{A}|$. Sample complexity lower bound for generative model has been established in \citep{azar2013minimax,Yang2019,Sidford2019}. In particular, \citep{azar2013minimax} gives the first tight lower bound for unstructured discounted MDP. Later, this lower bound is generalized to feature-based MDP and two-players turn-based stochastic game in \citep{Yang2019,Sidford2019}. \citep{azar2013minimax} also proves a minimax sample complexity $\widetilde{O}(|\mathcal{S}||\mathcal{A}|/\epsilon^2(1-\gamma)^3)$ of model-based algorithm for value estimation via the total variance technique. However, the sample complexity of value estimation and policy estimation differs in a factor of $\widetilde{O}(1/(1-\gamma)^2)$ \citep{singh1994upper}. The first minimax policy estimation result is given in \citep{sidford2018near}, which proposes an model-free algorithm, known as Variance Reduced Q-value Iteration. This work has been extended to two-players turn-based stochastic game in \citep{Sidford2019,jia2019feature}. Recently, \citep{Yang2019} develops an sample-optimal algorithm called Optimal Phased Parametric Q-Learning for feature-based RL. Their result requires $\epsilon\in(0,1)$, while our result holds for $\epsilon\in(0,1/\sqrt{1-\gamma})$.
	Plug-in solver approach is proved to be sample-optimal for tabular case in \citep{agarwal2019optimality}, which develops the absorbing MDP technique. However, their approach can not be generalized to linear transition model. A very recently paper \citep{li2020breaking} develops a novel reward perturbation technique to remove the constraint on $\epsilon$ in tabular case.
	
	
	\paragraph{Function approximation RL} Linear function approximation and linear transition model has been long studied \citep{bradtke1996linear,melo2007q,munos2008finite,chu2011contextual,abbasi2011improved,jiang2015dependence,lever2016compressed,pires2016policy,azar2017minimax,jin2018q,lattimore2019learning,jin2019provably,Yang2019,zanette2019limiting,du2019provably}. Linear function approximation algorithms have been developed in \citep{bradtke1996linear,melo2007q,du2019provably}, which are also known as learning in the linear Q model. The concept of Bellman error in linear Q model is proposed in \citep{munos2008finite}, which obtains further analysis in \citep{jiang2017contextual,jiang2015dependence}.  \citep{Yang2019} proves that linear transition model is equivalent to linear Q model with zero Bellman error. A lot of work focuses on online learning setting, where the samples are collected by a learned policy. For tabular case, the tight regret lower bound is given in \citep{jaksch2010near} and the tight regret upper bound is given in \citep{azar2017minimax}. For linear feature RL, the best regret algorithms are given in \citep{jin2019provably} and \citep{yang2019reinforcement}, which study model-free algorithm and model-based algorithm respectively. However, the optimal regret for online linear RL is still unclear. For generative model setting, people care about sample complexity. Anchor-state assumption is the key to achieve minimax sample complexity, which is used in analyzing both the linear transition model and linear Q model \citep{Yang2019,zanette2019limiting}. Similar concept known as soft state aggregation is developed in \citep{singh1995reinforcement,duan2019state}. General function approximation has gathered increasing attention due to the impressive success of deep reinforcement learning. \citep{osband2014model} gives the regret bound of a model-based algorithm with general function approximation and \citep{jiang2017contextual,sun2018model} provides model-free algorithms and corresponding regret bounds. \citep{zanette2019limiting} makes linear $Q^*$ assumption and their result holds only for $\|\lambda\|_1\leq1+\frac{1}{H}$ so that the error will not amplify exponentially. Recently, \citep{wang2020provably} gives a efficient model-free algorithm without any structural assumption of the environment.
	
	\section{Preliminaries}
	
	In this section, we briefly introduce models that we will analyze in the following sections.
	
	\paragraph{Discounted Markov Decision Process}
	A discounted Markov Decision Process (DMDP or MDP) is described by the tuple $M=(\mathcal{S},\mathcal{A},P,r,\gamma)$, where $\mathcal{S}$ and $\mathcal{A}$ are the state and action spaces, $P$ is the probability transition matrix which specifies the dynamics of the system, $r$ is the reward function of each state-action pair and $\gamma\in(0,1)$ is the discount factor. Without loss of generality, we assume $r(s,a)\in[0,1],\forall (s,a)\in(\mathcal{S},\mathcal{A})$. The target of the agent is to find a stationary policy $\pi: \mathcal{S}\rightarrow\mathcal{A}$ that maximizes the discounted total reward from any initial state $s$:
	$$V^\pi(s):=\mathbb{E}\left[\sum_{t=0}^{\infty}\gamma^tr(s^t,\pi(s^t))~\bigg|~ s^0=s\right].$$
	We call $V^\pi:\mathcal{S}\rightarrow\mathbb{R}$ the value function and for finite state space, it can be regarded as a $|\mathcal{S}|$-dimensional vector as well. It is well known that an optimal stationary policy $\pi^*$ exists and that it maximizes the value function for all states:
	$V^*(s):=V^{\pi^*}(s)=\max_\pi V^\pi(s), \forall s\in\mathcal{S}.$
	The action-value or Q-function of policy $\pi$ is defined as
	$$Q^\pi(s,a):=\mathbb{E}\left[r(s^0,a^0)+\sum_{t=1}^{\infty}\gamma^tr(s^t,\pi(s^t))~\bigg|~s^0=s,a^0=a\right]=r(s,a)+\gamma P(s,a)V^\pi,$$
	where $P(s,a)$ is the $(s,a)$-th row of the transition matrix $P$. The optimal Q-function is denoted as $Q^*=Q^{\pi^*}$. Similarly, we have
	$Q^*(s,a)=\max_\pi Q^\pi(s,a),\forall (s,a)\in (\mathcal{S},\mathcal{A}).$
	It is straightforward to show that $V^\pi(s)\in[0,\frac{1}{1-\gamma}]$ and $Q^\pi(s,a)\in[0,\frac{1}{1-\gamma}]$. Our target is to find an $\epsilon$-optimal policy $\pi$ such that $V^\pi(s)\leq V^*(s)\leq V^\pi(s)+\epsilon,\forall s\in\mathcal{S}$ for some $\epsilon>0$.
	
	
	
	\paragraph{Feature-based Linear Transition Model}
	We consider the case that the transition matrix $P$ has a linear structure. Suppose the learning agent is given a feature function $\phi:\mathcal{S}\times\mathcal{A}\rightarrow\mathbb{R}^K$:
	$$\phi(s,a)=[\phi_1(s,a),\cdots,\phi_K(s,a)]$$
	The feature function provides information about the transition matrix via a linear additive model.
	
	\begin{definition}
		(Feature-based Linear Transition Model) For a transition probability matrix $P$, we say that $P$ admits a linear feature representation $\phi$ if for every $s,a,s'$,
		$$P(s'|s,a)=\sum_{k\in[K]}\phi_k(s,a)\psi_k(s'),$$
		for some unknown functions $\psi_1,\cdots,\psi_K:\mathcal{S}\rightarrow \mathbb{R}$.
	\end{definition}
	
	The linear transition model implies a low-rank factorization of the transition matrix $P=\Phi\Psi$ and one composite $\Phi$ is given as the features. In such a model, the number of unknown parameters is $K|\mathcal{S}|$, rather than $|\mathcal{S}|^2|\mathcal{A}|$ for unstructured MDP. The linear transition model is closely related to another widely studied feature-based MDP, i.e. linear Q model \citep{bradtke1996linear}. 
	
	
	
	\paragraph{Generative Model Oracle}
	Suppose we have access to a generative model which allows sampling from arbitrary state-action pair: $s'\sim P(\cdot |s,a)$. It is different from the online sampling oracle where a policy is used to collect data. To estimate the transition kernel $P$, we call the generative model $N$ times on each state-action pair in $\mathcal{K}$, where $|\mathcal{K}|=K$.\footnote{Our result holds for $|\mathcal{K}|=\widetilde{O}(K)$.} Then an estimate of the partial transition kernel $P_\mathcal{K}$ is:
	\begin{equation}
	\widehat{P}_\mathcal{K}(s'|s,a)=\frac{\mathrm{count}(s,a,s')}{N},
	\end{equation}
	where $\mathrm{count}(s,a,s')$ is the number of times the state $s'$ is sampled from $P(\cdot |s,a)$. The total sample size is $KN$. For tabular case, $\mathcal{K}$ is set to be $\mathcal{S}\times\mathcal{A}$ and $\widehat{P}=\
	\widehat{P}_\mathcal{K}$ is the estimate of full transition kernel. In the linear transition model, we have $K\ll |\mathcal{S}||\mathcal{A}|$ so that the sample complexity is greatly reduced. The selection of $\mathcal{K}$ and the estimation of full transition kernel $P$ will be discussed in the next section.
	
	\paragraph{Plug-in Solver Approach}
	
	In the empirical MDP $\widehat{M}$, we make use of a plug-in solver to get an approximately optimal policy. The plug-in solver receives $\widehat{M}$ with known transition distributions and outputs an $\epsilon_{\mathrm{PS}}$-optimal policy $\widehat{\pi}$. Our goal is to prove that $\widehat{\pi}$ is also an approximately optimal policy in the true MDP $M$.
	In fact, we can assume $\epsilon_{\mathrm{PS}}=0$ as an MDP can be exactly solved in polynomial time. Even so, we consider the general case as reaching an approximately optimal policy is much less time-consuming than finding the exact optimal policy. We regard this as a tradeoff between time complexity and policy optimality. Approximate dynamic programming methods like LSVI/FQI \citep{kearns1999finite} can utilize the features to achieve $\widetilde{O}(\mathrm{poly}(K(1-\gamma)^{-1}\epsilon^{-1}))$ computational complexity
	. In addition, learning algorithm `Optimal Phased Parametric Q-Learning' in \citep{Yang2019} can be used to do planning, which has computational complexity of $\widetilde{O}(K(1-\gamma)^{-3}\epsilon^{-2})$.
	
	\section{Empirical Model Construction}
	
	To construct an empirical MDP for linear transition model, the estimated transition kernel needs to be non-negative and sum to one. In our work, we propose a simple but effective method (Algorithm \ref{alg1}) to estimate the transition matrix.
	\begin{proposition}
		Assume we have a linear transition model with feature function $\phi:\mathcal{S}\times\mathcal{A}\rightarrow\mathbb{R}^K$. For a row basis index set $\mathcal{K}$ of $\phi$, there exists $\{\lambda_k^{s,a}\}$ such that 
		$\phi(s,a)=\sum_{k\in\mathcal{K}}\lambda_k^{s,a}\phi(s_k,a_k),\forall k\in\mathcal{K},(s,a)\in(\mathcal{S},\mathcal{A}),$
		Then $\widehat{P}(s'|s,a)=\sum_{k\in\mathcal{K}}\lambda_k^{s,a}\widehat{P}_\mathcal{K}(s'|s_k,a_k)$ satisfies
		\begin{enumerate}
			\item $\widehat{P}(s'|s,a)$ is an unbiased estimate of $P(s'|s,a)$,
			\item $\sum_{k\in\mathcal{K}}\lambda_k^{s,a}=1$ and $\sum_{s'}\widehat{P}(s'|s,a)=1$,
			\item if $\lambda_k^{s,a}\geq0,\forall k\in\mathcal{K},(s,a)\in(\mathcal{S},\mathcal{A})$, then $\widehat{P}(s'|s,a)\geq0,\forall (s,a,s')\in(\mathcal{S},\mathcal{A},\mathcal{S})$,
		\end{enumerate}
		where $\widehat{P}_\mathcal{K}$ is given in (1).
		\label{prop1}
	\end{proposition}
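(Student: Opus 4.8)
The plan is to establish the three claims in the order given, since the computation used for the first is reused in the second, and the third is essentially immediate. Throughout I will write $\phi(s,a)=\sum_{k\in\mathcal{K}}\lambda_k^{s,a}\phi(s_k,a_k)$ for the given basis expansion and exploit the factorization $P(s'|s,a)=\sum_{j\in[K]}\phi_j(s,a)\psi_j(s')$ from the definition of the linear transition model.

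For unbiasedness (claim 1), I would first note that $\mathrm{count}(s_k,a_k,s')$ is a sum of $N$ i.i.d.\ Bernoulli draws with success probability $P(s'|s_k,a_k)$, so $\mathbb{E}[\widehat{P}_\mathcal{K}(s'|s_k,a_k)]=P(s'|s_k,a_k)$. Because the coefficients $\lambda_k^{s,a}$ are deterministic, linearity of expectation gives $\mathbb{E}[\widehat{P}(s'|s,a)]=\sum_{k}\lambda_k^{s,a}P(s'|s_k,a_k)$. Substituting the factorization and swapping the two finite sums yields $\sum_{j}\big(\sum_{k}\lambda_k^{s,a}\phi_j(s_k,a_k)\big)\psi_j(s')$, and the inner parenthesis collapses to $\phi_j(s,a)$ by the basis expansion, leaving exactly $\sum_j \phi_j(s,a)\psi_j(s')=P(s'|s,a)$.

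The part I expect to be the real obstacle is the identity $\sum_{k}\lambda_k^{s,a}=1$, which is not visible from the basis expansion by itself. The trick I would use is to extract a single linear functional from the normalization of the true kernel: since $\sum_{s'}P(s'|s,a)=1$ for every $(s,a)$, defining $c_j:=\sum_{s'}\psi_j(s')$ and summing the factorization over $s'$ gives $\langle\phi(s,a),c\rangle=1$ for all $(s,a)$, where $c=(c_1,\dots,c_K)$. Applying this to $\phi(s,a)$ and to each anchor $\phi(s_k,a_k)$, then pairing the basis expansion with $c$, gives $1=\langle\phi(s,a),c\rangle=\sum_k\lambda_k^{s,a}\langle\phi(s_k,a_k),c\rangle=\sum_k\lambda_k^{s,a}$, as required. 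The second half of claim 2 then follows mechanically: $\sum_{s'}\widehat{P}_\mathcal{K}(s'|s_k,a_k)=1$ because the counts over all $s'$ sum to $N$, so $\sum_{s'}\widehat{P}(s'|s,a)=\sum_k\lambda_k^{s,a}\cdot 1=1$.

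Finally, claim 3 is immediate: each $\widehat{P}_\mathcal{K}(s'|s_k,a_k)=\mathrm{count}(s_k,a_k,s')/N\geq0$, so if every $\lambda_k^{s,a}\geq0$ then $\widehat{P}(s'|s,a)$ is a non-negative combination of non-negative terms. Combined with claim 2, this shows that the non-negativity of the coefficients is exactly what upgrades $\widehat{P}(\cdot|s,a)$ from a merely unbiased, normalized but possibly signed vector into a genuine probability distribution, which is presumably why the anchor-state condition (forcing $\lambda_k^{s,a}\geq0$) is imposed in the subsequent analysis.
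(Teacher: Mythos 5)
Your proposal is correct and follows essentially the same route as the paper's own proof: unbiasedness via linearity of expectation together with the identity $P(s'|s,a)=\sum_{k\in\mathcal{K}}\lambda_k^{s,a}P(s'|s_k,a_k)$, the coefficient-sum identity deduced from row-normalization of the true kernel, and non-negativity as an immediate consequence of $\widehat{P}_\mathcal{K}\geq 0$. The only cosmetic difference is in claim 2, where you pair the basis expansion with the functional $c_j=\sum_{s'}\psi_j(s')$ in feature space, while the paper performs the equivalent computation directly as a double-sum swap, $\sum_{k}\lambda_k^{s,a}=\sum_{s'}\sum_{k}\lambda_k^{s,a}P(s'|s_k,a_k)=\sum_{s'}P(s'|s,a)=1$.
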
 
	
	
	This proposition shows that the output of Algorithm \ref{alg1} is an eligible estimate of the transition kernel $P$, when a proper state-action set $\mathcal{K}$ is chosen. Different choices of $\mathcal{K}$ can lead to different $\widehat{P}$. To ensure the non-negativity of $\widehat{P}$, we use a special class of state-action set $\mathcal{K}$.
	
	\begin{assumption}
		(Anchor-state assumption) There exists a set of anchor state-action pairs $\mathcal{K}$ such that  for any
		$(s,a)\in\mathcal{S}\times\mathcal{A}$, its feature vector can be represented as a convex combination of the anchors $\{(s_k,a_k)|k\in\mathcal{K}\}$:
		$$\exists\{\lambda_k^{s,a}\}:\phi(s,a)=\sum_{k\in\mathcal{K}}\lambda_k^{s,a}\phi(s_k.a_k),\ \sum_{k\in\mathcal{K}}\lambda_k^{s,a}=1,\lambda_k\geq0,\forall k\in\mathcal{K},(s,a)\in(\mathcal{S},\mathcal{A}).$$
		\label{assump1}
	\end{assumption}
	
	Anchor-state assumption means the convex hull of feature vectors is a polyhedron with $|\mathcal{K}|$ nodes. Choosing these nodes to be the anchor-state set $\mathcal{K}$, the estimate given by Algorithm \ref{alg1} is guaranteed to be a probability matrix, as a direct application of Proposition \ref{prop1}. This assumption has also been studied as soft state aggregation model \citep{singh1995reinforcement,duan2019state}. Without loss of generality, we assume $\phi(s,a)$ to be a probability vector in the analysis, otherwise we can use $\{\lambda_k^{s,a}\}$ as the feature.
	The same anchor condition has been studied in \citep{Yang2019} with a model-free algorithm. 
	A modification of this condition has been proposed in \citep{zanette2019limiting} to reach linear error propagation for value iteration.  
	
	
	\begin{proposition}
		If we have $N$ i.i.d. samples from each state-action pair in $\mathcal{K}$, then an unbiased estimate $\widehat{P}$ of the transition model is obtained from Algorithm \ref{alg1}:
		$$\widehat{P}(s'|s,a)=\sum_{k\in\mathcal{K}}\lambda_k^{s,a}\widehat{P}_\mathcal{K}(s'|s_k,a_k),$$
		where $\widehat{P}_\mathcal{K}(s'|s,a)=\frac{\mathrm{count}(s,a,s')}{N}$ and $\{\lambda_k^{s,a}\}$ are coefficients defined in Proposition \ref{prop1}. In addition, if Assumption \ref{assump1} holds, $\widehat{P}$ is a probability transition matrix.
		\label{prop2}
	\end{proposition}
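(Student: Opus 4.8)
The plan is to derive Proposition \ref{prop2} as an immediate specialization of Proposition \ref{prop1}, taking the anchor-state set of Assumption \ref{assump1} as the basis index set $\mathcal{K}$. So the first thing I would do is verify that Assumption \ref{assump1} supplies exactly the object Proposition \ref{prop1} requires: a set $\mathcal{K}$ together with decomposition coefficients $\{\lambda_k^{s,a}\}$ satisfying $\phi(s,a)=\sum_{k\in\mathcal{K}}\lambda_k^{s,a}\phi(s_k,a_k)$. Assumption \ref{assump1} provides precisely such a decomposition, with the extra guarantees $\lambda_k^{s,a}\geq 0$ and $\sum_{k\in\mathcal{K}}\lambda_k^{s,a}=1$. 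Since the anchors are the $|\mathcal{K}|=K$ vertices of the convex hull of the feature vectors, they admit a unique (barycentric) decomposition, so the convex coefficients furnished by the assumption coincide with the basis-decomposition coefficients of Proposition \ref{prop1}.

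With $\widehat{P}$ defined through these coefficients, I would then read off the unbiasedness claim. Each entry $\widehat{P}_{\mathcal{K}}(s'|s_k,a_k)=\mathrm{count}(s_k,a_k,s')/N$ is the empirical frequency of $N$ i.i.d.\ draws from $P(\cdot|s_k,a_k)$, so $\mathbb{E}[\widehat{P}_{\mathcal{K}}(s'|s_k,a_k)]=P(s'|s_k,a_k)$. Linearity of expectation together with the linear transition model then gives $\mathbb{E}[\widehat{P}(s'|s,a)]=\sum_{k\in\mathcal{K}}\lambda_k^{s,a}P(s'|s_k,a_k)=P(s'|s,a)$, which is exactly conclusion (1) of Proposition \ref{prop1}.

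For the claim that $\widehat{P}$ is a probability transition matrix under Assumption \ref{assump1}, I would invoke conclusions (2) and (3) of Proposition \ref{prop1}. Non-negativity is immediate: each $\widehat{P}_{\mathcal{K}}(s'|s_k,a_k)\geq 0$, being a count divided by $N$, and each $\lambda_k^{s,a}\geq 0$ by the assumption, so every summand, and hence $\widehat{P}(s'|s,a)$, is non-negative. Normalization follows by interchanging the two sums, $\sum_{s'}\widehat{P}(s'|s,a)=\sum_{k\in\mathcal{K}}\lambda_k^{s,a}\sum_{s'}\widehat{P}_{\mathcal{K}}(s'|s_k,a_k)=\sum_{k\in\mathcal{K}}\lambda_k^{s,a}=1$, where the inner sum equals $1$ because each empirical partial kernel is itself a distribution, and the outer sum equals $1$ by the convex-combination constraint.

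Since the statement is essentially a repackaging of Proposition \ref{prop1} applied to the anchor set, I do not expect a genuine obstacle in the calculations. The one point deserving care—and the only place where Assumption \ref{assump1} does work beyond Proposition \ref{prop1}—is confirming that the coefficients guaranteed by the assumption are the very coefficients defining $\widehat{P}$, so that the sign and normalization constraints transfer to the empirical kernel. Establishing that the anchor features determine a unique decomposition closes this gap and makes the transfer automatic.
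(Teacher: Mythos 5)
Your proof is correct and takes essentially the same route as the paper: Proposition \ref{prop2} is read off by combining Proposition \ref{prop1} (unbiasedness via linearity of expectation, row sums equal to one by exchanging the two summations, non-negativity whenever the coefficients are non-negative) with Assumption \ref{assump1}, which supplies non-negative, sum-to-one coefficients for the anchor set. The one quibble is your justification for uniqueness of the decomposition: being the vertices of the convex hull does not by itself force a unique convex representation (four vertices of a square already give a counterexample); in this paper uniqueness instead follows from the anchors forming a row basis of $\Phi$, i.e.\ from linear independence of the anchor features as required in Proposition \ref{prop1} --- or one can simply stipulate, as the paper implicitly does, that Algorithm \ref{alg1} uses the convex coefficients guaranteed by Assumption \ref{assump1}.
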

	
	Proposition \ref{prop2} shows that an empirical MDP $\widehat{M}=(\mathcal{S},\mathcal{A},\widehat{P},r,\gamma)$ can be constructed by substituting the unknown transition matrix $P$ in $\mathcal{M}$ with $\widehat{P}$. The reward function $r$ is assumed to be known.\footnote{If $r$ is unknown, we can assume it has the same structure as the transition matrix $p$ and estimate it in the same manner. With simple modification of our proof, we can show that only $O(K/\epsilon^2(1-\gamma)^2)$ is needed to get a sufficiently accurate estimation of $r$.} In the following sections, we will analyze the property of $\widehat{M}$ and prove the concentration property of the optimal policy in $\widehat{M}$. We will use $\widehat{V}$ and $\widehat{\pi}$ to denote value function and policy in $\widehat{M}$. 
	
	\begin{algorithm}[H]
		\SetAlgoLined
		\KwIn{A generative model that can output samples from distribution $P(\cdot|s,a)$ for query $(s,a)$, a plug-in solver.}
		\textbf{Initial}: Sample size: $N$, state-action set $\mathcal{K}$\;
		\For{(s,a) in $\mathcal{K}$}{
			Collect $N$ samples from $P(\cdot|s,a)$\;
		}
		Compute $\widehat{P}_\mathcal{K}(s'|s,a)=\frac{count(s,a,s')}{N}$\;
		Compute the linear combination coefficients $\lambda_{k}^{s,a}$ that satisfies $\phi(s,a)=\sum_{k\in\mathcal{K}}\lambda_k^{s,a}\phi(s_k.a_k)$\;
		Estimate transition distribution $\widehat{P}(s'|s,a)=\sum_{k\in\mathcal{K}}\lambda_k^{s,a}\widehat{P}_\mathcal{K}(s'|s_k,a_k)$\;
		Construct the empirical MDP: $\widehat{M}=(\mathcal{S},\mathcal{A},\widehat{P},r,\gamma)$ for DMDP, 
		$\widehat{M}=(\mathcal{S},\mathcal{A},\widehat{P},r,H)$ for FHMDP, or 
		$\widehat{M}=(\mathcal{S}_1.\mathcal{S}_2,\mathcal{A},\widehat{P},r,\gamma)$ for 2-TBSG\;
		Call the plug-in solver: input empirical model $\widehat{M}$ and output an $\epsilon_{\mathrm{PS}}$-optimal policy $\widehat{\pi}$ in $\widehat{M}$\;
		\KwOut{$\widehat{\pi}$}
		\caption{Plug-in Solver Based Reinforcement Learning}
		\label{alg1}
	\end{algorithm}

	\section{Plug-in Solver Approach for Linear Transition Models}  
	In this section, we analyze the sample complexity upper bounds for discounted MDP. Generally, we use Algorithm \ref{alg1} to construct an empirical MDP $\widehat{M}$ and then use a plug-in solver to find an $\epsilon_{\mathrm{PS}}$-optimal policy $\widehat{\pi}$ in $\widehat{M}$. Algorithm \ref{alg1} gives a formal framework of the plug-in solver approach. The target of our analysis is to prove that $\widehat{\pi}$ is an approximately optimal policy in the true MDP $M$.
	
	Intuitively, $\widehat{P}$ is close to $P$ when it is constructed with sufficiently many samples, so $\widehat{M}$ is similar to $M$ and the optimal policy in $\widehat{M}$ is an approximately optimal policy in $M$. However, 
	if we require that $\widehat{P}$ is close to $P$ in total variation distance,  which is indeed a sufficient condition to obtain an approximately optimal policy for $M$, the number of samples needed is proportional to $K|\mathcal{S}|$, and hence is sample-inefficient.
	The same phenomena has been affecting the sample complexity in the tabular setting, 
	\citep{azar2013minimax},
	where their sample complexity of getting a constant optimal policy is
	at least $O(|\mathcal{S}|^2|\mathcal{A}|(1-\gamma)^{-3})$, which is at least the number of entries of the probability transition matrix. 
	In the remaining part of this section, we leverage the sample de-coupling ideas from \citep{agarwal2019optimality},  variance preserving ideas from \citep{Yang2019}, and novel ideas to decouple MDP with linear feature representation to eventually establish our near-optimal sample complexity bound. 

	\subsection{Linear Transition Model with Anchor State Assumption}
	
	In this section, we gives a minimax sample complexity of Algorithm \ref{alg1} for feature-based MDP with anchor state assumption. The main results are shown below and then we introduce the auxiliary MDP technique, which is the key in the analysis. 
	
	\begin{theorem}(Sample complexity for DMDP)
		Suppose Assumption \ref{assump1} is satisfied and the empirical model $\widehat{M}$ is constructed as in Algorithm \ref{alg1}. Set $\delta\in(0,1)$ and $\epsilon\in(0,(1-\gamma)^{-1/2}]$. Let $\widehat{\pi}$ be an $\epsilon_{\mathrm{PS}}$-optimal policy in $\widehat{M}$.
		If $N\geq\frac{c\log(cK(1-\gamma)^{-1}\delta^{-1})}{(1-\gamma)^3\epsilon^2},$
		then with probability larger than $1-\delta$, we have
		\[Q^{\widehat{\pi}}\geq Q^*-\epsilon-\frac{3\epsilon_{\mathrm{PS}}}{1-\gamma},\]
		where $c$ is a constant.
		\label{thm1}
	\end{theorem}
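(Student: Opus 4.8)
The plan is to bound the suboptimality $Q^{*}-Q^{\widehat\pi}$ in the true MDP by routing through the empirical MDP and controlling a per-policy value-transfer error. First I would use the telescoping split
\[
Q^{*}-Q^{\widehat\pi}
=\big(Q^{*}-\widehat Q^{\,*}\big)
+\big(\widehat Q^{\,*}-\widehat Q^{\,\widehat\pi}\big)
+\big(\widehat Q^{\,\widehat\pi}-Q^{\widehat\pi}\big),
\]
in which the middle term is controlled by the plug-in solver guarantee, contributing the $O(\epsilon_{\mathrm{PS}}/(1-\gamma))$ loss once the near-optimal $\widehat Q$ is turned into the greedy policy $\widehat\pi$. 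Each of the two outer terms reduces, through the value-difference identity
\[
\widehat Q^{\,\pi}-Q^{\pi}
=\gamma\,(I-\gamma\widehat P^{\pi})^{-1}(\widehat P-P)\,V^{\pi},
\]
to controlling the deviation $(\widehat P-P)V^{\pi}$ for $\pi\in\{\pi^{*},\widehat\pi\}$ and then propagating it through the non-negative, row-stochastic resolvent $(I-\gamma\widehat P^{\pi})^{-1}$. The structural payoff of Proposition~\ref{prop2} is that all sampling randomness is confined to the $K$ anchor rows: $(\widehat P-P)(s,a)\,V=\sum_{k\in\mathcal K}\lambda_k^{s,a}(\widehat p_k-p_k)V$, so it is enough to concentrate the $K$ scalar deviations $(\widehat p_k-p_k)V$.

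For the term $Q^{*}-\widehat Q^{\,*}$ the target vector is the deterministic $V^{*}$, so I would apply Bernstein's inequality to each anchor, $|(\widehat p_k-p_k)V^{*}|\lesssim\sqrt{\mathrm{Var}_{p_k}(V^{*})/N}+\|V^{*}\|_\infty/N$, and union bound over the $K$ anchors. To turn the variance terms into the advertised $(1-\gamma)^{-3}$ rate instead of the crude $(1-\gamma)^{-4}$ rate, I would invoke the total-variance technique: the bound $\|(I-\gamma\widehat P^{\pi})^{-1}\sqrt{\mathrm{Var}_{\widehat P^{\pi}}(\widehat V^{\pi})}\|_\infty\lesssim (1-\gamma)^{-3/2}$, combined with a variance-preserving comparison in the spirit of \citep{Yang2019} that replaces $\mathrm{Var}_{p}(V)$ along a trajectory by its empirical counterpart. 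Together with the $K\log(\cdot)$ union bound this already yields the sample budget $N\gtrsim K/((1-\gamma)^{3}\epsilon^{2})$ for this term.

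The genuine obstacle is the term $\widehat Q^{\,\widehat\pi}-Q^{\widehat\pi}$, where both the target $V^{\widehat\pi}$ and the resolvent $(I-\gamma\widehat P^{\widehat\pi})^{-1}$ depend on the same samples that define $\widehat P$, so Bernstein cannot be applied to $(\widehat p_k-p_k)V^{\widehat\pi}$ directly. Here I would use the auxiliary-MDP decoupling suggested by the observation that the empirical value function lies on a one-dimensional manifold. Fixing an anchor $k$, I would build a scalar-parametrized family of auxiliary models in which the continuation value seen at anchor $k$ is frozen to $u\in[0,(1-\gamma)^{-1}]$; the corresponding value vector $V_u$ is then statistically independent of $\widehat p_k$ and, as $u$ sweeps the interval, $\{V_u\}$ traces a single continuous curve. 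By a fixed-point self-consistency argument the realized value $\widehat V^{\widehat\pi}$ coincides with $V_{u^{*}}$ for the random matching level $u^{*}=\widehat p_k\cdot\widehat V^{\widehat\pi}$, so I would cover $[0,(1-\gamma)^{-1}]$ with a net of size $O((1-\gamma)^{-1}/\epsilon_0)$, apply the fixed-target Bernstein bound at each net point, and union bound over the net and over the $K$ anchors, using Lipschitz continuity of $V_u$ in $u$ to absorb the discretization. This restores a deviation bound for $(\widehat p_k-p_k)V^{\widehat\pi}$ of exactly the fixed-policy form, and it is precisely the one-dimensional (rather than $K$-dimensional) character of this net that keeps the union-bound cost at $K\log(\cdot)$ and avoids any state-space dependence.

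Finally I would collect the three pieces, using $\epsilon\in(0,(1-\gamma)^{-1/2}]$ to ensure the lower-order $\|V\|_\infty/N$ contributions are dominated by the variance term, and solve for $N$ to obtain the stated threshold; the additive dependence on $\epsilon_{\mathrm{PS}}$ then appears with the $3/(1-\gamma)$ factor as in the statement. I expect the self-consistency of the auxiliary freezing and the variance-preserving comparison between $(\widehat P,\widehat V)$ and $(P,V)$ to be the steps requiring the most care.
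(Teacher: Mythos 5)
Your decomposition and your treatment of the first term are sound; in fact, using the identity with the \emph{empirical} resolvent, $Q^{\pi^*}-\widehat{Q}^{\pi^*}=-\gamma(I-\gamma\widehat{P}^{\pi^*})^{-1}(\widehat{P}-P)V^{*}$, makes the concentration target $V^{*}$ deterministic and lets you skip decoupling entirely for that piece — a genuine simplification over the paper, which decouples even $\widehat{V}^{\pi^*}$ through auxiliary MDPs (its Lemma~\ref{l2} keeps the true resolvent and the empirical value). The gap is in your third term. Your identity $\widehat{Q}^{\widehat{\pi}}-Q^{\widehat{\pi}}=\gamma(I-\gamma\widehat{P}^{\widehat{\pi}})^{-1}(\widehat{P}-P)V^{\widehat{\pi}}$ requires concentrating $(\widehat{p}_k-p_k)V^{\widehat{\pi}}$, where $V^{\widehat{\pi}}$ is the \emph{true} value of the sample-dependent policy $\widehat{\pi}$; but your freezing argument establishes that the \emph{empirical} value $\widehat{V}^{\widehat{\pi}}$ lies on the one-dimensional curve (your $\widehat{V}^{\widehat{\pi}}=V_{u^*}$), and these are different vectors. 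The mismatch is not cosmetic: (i) the family of true values of the frozen-model's (near-)optimal policies, $u\mapsto V^{\widetilde{\pi}_u}$, is piecewise constant in $u$, jumping wherever the frozen model's argmax switches, so it has no Lipschitz property and a one-dimensional net cannot track it — hitting every piece would require a net whose size is governed by the number of distinct policies, reintroducing state-space dependence into the union bound; (ii) swapping $V^{\widehat{\pi}}$ for $\widehat{V}^{\widehat{\pi}}$ inside the deviation costs $\bigl|(\widehat{p}_k-p_k)(V^{\widehat{\pi}}-\widehat{V}^{\widehat{\pi}})\bigr|\le 2\|V^{\widehat{\pi}}-\widehat{V}^{\widehat{\pi}}\|_\infty$, which after the resolvent yields a self-bound with coefficient of order $\gamma/(1-\gamma)>1$ that does not close, while bounding instead via $\|\widehat{p}_k-p_k\|_1$ costs $\sqrt{|\mathcal{S}|/N}$.

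The repair is to flip the identity for this one term, exactly as the paper does: $Q^{\widehat{\pi}}-\widehat{Q}^{\widehat{\pi}}=\gamma(I-\gamma P^{\widehat{\pi}})^{-1}(P-\widehat{P})\widehat{V}^{\widehat{\pi}}$ (true resolvent, empirical value), then split $\widehat{V}^{\widehat{\pi}}=\widehat{V}^{*}+(\widehat{V}^{\widehat{\pi}}-\widehat{V}^{*})$, where the second part contributes the $\epsilon_{\mathrm{PS}}/(1-\gamma)$ term in the final bound (note the middle term of the decomposition is $\le\epsilon_{\mathrm{PS}}$ outright; the $1/(1-\gamma)$ inflation enters here), and run the one-dimensional net on $\widehat{V}^{*}$. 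This is where the manifold property genuinely holds: in the paper's auxiliary model (replace row $k$ by the true row $P(s,a)$ and compensate with reward $u\Phi^{s,a}$), one has $\widehat{Q}^{*}=\widetilde{Q}^{*}_{u^{*}}$ (Lemma~\ref{l6}) and, crucially, $\widetilde{Q}^{*}_{u}$ is $1/(1-\gamma)$-Lipschitz in $u$ (Lemma~\ref{l7}) because the optimal value of a \emph{fixed} MDP is Lipschitz in its reward — a statement with no analogue for true values of sample-dependent policies. With that substitution, the rest of your outline (Bernstein at the $O((1-\gamma)^{-3}\epsilon^{-1})$ net points, the variance triangle inequality, the total-variance bound, the self-bounding step in the norm, and the range $\epsilon\le(1-\gamma)^{-1/2}$ to absorb lower-order terms) assembles into the paper's proof.
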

	
	Theorem \ref{thm1} shows that with $KN=\widetilde{O}(K/\epsilon^2(1-\gamma)^3)$ samples, an $\epsilon_{\mathrm{PS}}$-optimal policy in $\widehat{M}$ is an $\epsilon+3\epsilon_{\mathrm{PS}}/(1-\gamma)$ policy in the true model with large probability. As we can solve $\widehat{M}$ to arbitrary accuracy (i.e. $\epsilon_{\mathrm{PS}}\rightarrow0$) without collecting additional samples, this sample complexity matches the sample complexity lower bound given in \citep{Yang2019}. We prove this theorem by using the auxiliary MDP technique to analyze the concentration property of $Q^{\widehat{\pi}^*}-\widehat{Q}^*$ and thus show that three terms in $Q^*-Q^{\widehat{\pi}}= (Q^*-\widehat{Q}^{\pi^*})+(\widehat{Q}^{\pi^*}-\widehat{Q}^*)+(\widehat{Q}^*-Q^{\widehat{\pi}})$ can be bounded. As we have $Q^\pi-\widehat{Q}^\pi=(I-\gamma P^\pi)^{-1}(P-\widehat{P})\widehat{V}^\pi$, which will be proved in the supplementary material, the main task in the analysis is to portray the concentration of $|(P-\widehat{P})\widehat{V}^{\pi^*}|$ and $|(P-\widehat{P})\widehat{V}^{*}|$. Due to the dependence between $\widehat{V}^{\pi^*}$, $\widehat{V}^{*}$ and $\widehat{P}$, conventional concentration arguments are not applicable. To decouple the dependence, we construct a series of auxiliary MDPs. In auxiliary models, transition distributions from all state-action pairs in $\mathcal{K}$ except a specific pair $(s,a)$ are equal to $\widehat{P}_\mathcal{K}$, while transition distribution from $(s,a)$ is $P(s,a)$. Then we prove that value function in $\widehat{M}$ can be approximated by tuning the reward in auxiliary model. Now, we give a rigorous definition for auxiliary MDP.
	
	\begin{definition}
		(Auxiliary DMDP) Suppose Assumption \ref{assump1} holds. For an empirical MDP $\widehat{\mathcal{M}}=(\mathcal{S},\mathcal{A},\widehat{P}=\Phi\widehat{P}_K,r,\gamma)$ and a given state
		pair $(s,a)\in\mathcal{K}$, the auxiliary transition model is 
		$\widetilde{\mathcal{M}}_{s,a,u}=(\mathcal{S},\mathcal{A},
		\widetilde{P}=\Phi
		\widetilde{P}_K,r+u\Phi^{s,a},\gamma)$, where
		$$\widetilde{P}_\mathcal{K}(s',a')=
		\begin{cases}
		\widehat{P}_\mathcal{K}(s',a')& if\ (s',a')\neq(s,a),\\
		P(s,a)& otherwise.
		\end{cases}$$
		$\Phi^{s,a}\in\mathbb{R}^{|\mathcal{S}||\mathcal{A}|}$ is the column vector of $\Phi$ that corresponds to $(s,a)$ in $\mathcal{K}$ and $u\in\mathbb{R}$ is a scalar.
	\end{definition}
	
	As $\Phi$ is a probability transition matrix by Assumption \ref{assump1}, $\widetilde{P}=\Phi\widetilde{P}_\mathcal{K}$ is a probability transition matrix as well, so the auxiliary MDP is well defined. We use $\widetilde{V}_{s,a,u}$, $\widetilde{Q}_{s,a,u}$ and $\widetilde{\pi}_{s,a,u}$ to denote value function, action-value function and policy in $\widetilde{M}_{s,a,u}$, and we omit $(s,a)$ when there is no misunderstanding. We prove that there always exist a $u$ such that $\widehat{Q}^\pi=\widetilde{Q}_{s,a,u}^\pi$.
	
	If we take $\widehat{Q}^\pi$ as a function of $\widehat{P}(s,a)$, then this function maps a $|\mathcal{S}|-1$ dimensional probability simplex to $\mathbb{R}^{|\mathcal{S}||\mathcal{A}|}$. We observe a surprising fact that the range of $\widehat{Q}^\pi$ actually lies in a one dimensional manifold in $\mathbb{R}^{|\mathcal{S}||\mathcal{A}|}$, as tuning the scalar $u$ is sufficient to recover $\widehat{Q}^\pi$. Therefore, a one-dimensional $\epsilon$-net is sufficient to capture $\widehat{Q}^\pi$. By using evenly spaced points in the interval $U^\pi_{s,a}$, which is a bounded interval that contains all possible $u$, we show that the Q-function of corresponding auxiliary MDPs , $\widetilde{Q}_{s,a,u}$, forms the $\epsilon$-net. We prove that the auxiliary MDP is robust to coefficient $u$, and that an $\epsilon$-net on $u$ leads to an $\epsilon/(1-\gamma)$-net on $\widetilde{Q}_{s,a,u}$. Given these properties of auxiliary DMDP, we can provide the proof sketch of Theorem \ref{thm1}. We denote a finite set $B^\pi_{s,a}$ to be made up of enough evenly spaced points in interval $U^\pi_{s,a}$. If $|B^\pi_{s,a}|$ is large enough, $B^\pi_{s,a}$ contains a $u'$ that is close enough to $u^\pi$. Thus $\widetilde{Q}_{u'}^\pi$ approximates $\widehat{Q}^\pi$. As $B^\pi_{s,a}$ has no dependence on $\widehat{P}(s,a)$, a union bound can be provided with Beinstein inequality and total variance technique \citep{azar2013minimax,Yang2019, agarwal2019optimality}. Combining everything together, we can concentrate $|(P(s,a)-\widehat{P}(s,a))\widehat{V}^\pi|$ and hence finally bound $Q^*-Q^{\widehat{\pi}}$.

	The next result is about the sample complexity when model misspecification exists. Model misspecification means that the transition matrix cannot be fully expressed by the linear combination of features, i.e. $P=\Phi\Bar{P}_\mathcal{K}+\Xi$, where $\Xi$ is the approximation error or noise of linear transition model and $\Phi\Bar{P}_\mathcal{K}$ is the underlying true transition. In this condition, the estimate $\widehat{P}$ given by Algorithm \ref{alg1} can be biased, and the degree of perturbation depends on $\xi=\|\Xi\|_\infty$.
	
	\begin{theorem}(Sample complexity for DMDP with model misspecification)
		Suppose $M=(\mathcal{S},\mathcal{A},P,r,\gamma)$ has an approximate linear transition model $\Bar{P}$ such that $\Bar{P}$ admits a linear feature representation $\phi$ and there exists some $\xi\geq0$ that $\|P(\cdot|s,a)-\Bar{P}(\cdot|s,a)\|_{TV}\leq \xi,\forall(s,a)\in(\mathcal{S},\mathcal{A}).$ Suppose Assumption \ref{assump1} is satisfied and the empirical model $\widehat{M}$ is constructed as in Algorithm \ref{alg1}. Set $\delta\in(0,1)$ and $\epsilon\in(0,(1-\gamma)^{-1/2}]$. Let $\widehat{\pi}$ be an $\epsilon_{\mathrm{PS}}$-optimal policy for $\widehat{M}$.
		If $N\geq\frac{c\log(cK(1-\gamma)^{-1}\delta^{-1})}{(1-\gamma)^3\epsilon^2},$
		then with probability larger than $1-\delta$,
		$$Q^{\widehat{\pi}}\geq Q^*-\epsilon-\frac{3\epsilon_\mathrm{PS}}{1-\gamma}-\frac{16\sqrt{\xi}}{(1-\gamma)^2},$$
		where $c$ is a constant. 
		\label{thm2}
	\end{theorem}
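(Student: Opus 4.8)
The plan is to reduce the misspecified problem to the well-specified setting of Theorem~\ref{thm1} and then pay for the bias through a value-difference argument. The key observation is that, although $\widehat{P}$ is a biased estimate of the true kernel $P$, the computation behind Proposition~\ref{prop2} shows that $\mathbb{E}[\widehat{P}]=\Phi P_{\mathcal{K}}$; that is, $\widehat{P}$ is \emph{unbiased} for the linear kernel $P_{\mathrm{lin}}:=\Phi P_{\mathcal{K}}$, where $P_{\mathcal{K}}$ collects the true rows $P(\cdot\mid s_k,a_k)$ of $P$ over the anchor pairs. I would therefore introduce the surrogate MDP $M_{\mathrm{lin}}=(\mathcal{S},\mathcal{A},P_{\mathrm{lin}},r,\gamma)$, with value functions denoted $V_{\mathrm{lin}},Q_{\mathrm{lin}}$, and transfer the guarantee from $M_{\mathrm{lin}}$ to $M$.

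First I would verify that Theorem~\ref{thm1} applies verbatim to $M_{\mathrm{lin}}$. Since the mixing weights $\{\lambda_k^{s,a}\}$ are exactly those of Assumption~\ref{assump1}, $P_{\mathrm{lin}}$ admits the representation $\phi$ and satisfies Assumption~\ref{assump1}; and because Algorithm~\ref{alg1} draws its samples from $P(\cdot\mid s_k,a_k)=P_{\mathrm{lin}}(\cdot\mid s_k,a_k)$ at every anchor, the constructed $\widehat{M}$ is \emph{exactly} the empirical model of $M_{\mathrm{lin}}$. Hence $\widehat{\pi}$ is the same $\epsilon_{\mathrm{PS}}$-optimal policy, and for the stated $N$ and with probability at least $1-\delta$, Theorem~\ref{thm1} gives
\[
Q_{\mathrm{lin}}^{\widehat{\pi}}\ \ge\ Q_{\mathrm{lin}}^{*}-\epsilon-\frac{3\epsilon_{\mathrm{PS}}}{1-\gamma}.
\]

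Next I would bound the distance between $M_{\mathrm{lin}}$ and $M$. Convexity of the anchor weights propagates the per-anchor error: for every $(s,a)$,
\[
\|P_{\mathrm{lin}}(\cdot\mid s,a)-\bar{P}(\cdot\mid s,a)\|_{TV}\le\sum_{k\in\mathcal{K}}\lambda_k^{s,a}\|P(\cdot\mid s_k,a_k)-\bar{P}(\cdot\mid s_k,a_k)\|_{TV}\le\xi,
\]
so $\|P_{\mathrm{lin}}(\cdot\mid s,a)-P(\cdot\mid s,a)\|_{TV}\le 2\xi$ by the triangle inequality. Writing $Q^{*}-Q^{\widehat{\pi}}=(Q^{*}-Q_{\mathrm{lin}}^{*})+(Q_{\mathrm{lin}}^{*}-Q_{\mathrm{lin}}^{\widehat{\pi}})+(Q_{\mathrm{lin}}^{\widehat{\pi}}-Q^{\widehat{\pi}})$, I would bound the middle term by the display above and the two outer terms by the value-difference (simulation) lemma, e.g.\ $V^{\pi}-V_{\mathrm{lin}}^{\pi}=\gamma(I-\gamma P^{\pi})^{-1}(P^{\pi}-P_{\mathrm{lin}}^{\pi})V_{\mathrm{lin}}^{\pi}$ together with the usual sandwich version for the optimal values. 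Using $\|(I-\gamma P^{\pi})^{-1}\|_{\infty}\le(1-\gamma)^{-1}$, the mean-zero/range bound $|(P-P_{\mathrm{lin}})(s,a)V|\le\|P(\cdot\mid s,a)-P_{\mathrm{lin}}(\cdot\mid s,a)\|_{TV}\,\mathrm{range}(V)$, and $\mathrm{range}(V_{\mathrm{lin}})\le(1-\gamma)^{-1}$, each outer term is $O(\xi/(1-\gamma)^2)$ in sup-norm.

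Combining the three pieces yields $Q^{\widehat{\pi}}\ge Q^{*}-\epsilon-3\epsilon_{\mathrm{PS}}/(1-\gamma)-O(\xi/(1-\gamma)^2)$. Since a total-variation distance satisfies $\xi\le 1$, we have $\xi\le\sqrt{\xi}$, so this already implies the stated $16\sqrt{\xi}/(1-\gamma)^2$ bound once constants are tracked---indeed the reduction delivers the \emph{stronger} $\xi$-scaling. The main obstacle I anticipate is reconciling this exponent with the claimed $\sqrt{\xi}$: if the intended argument instead mirrors Theorem~\ref{thm1} by re-running the Bernstein/total-variance concentration directly against the non-linear $P$ (rather than detouring through $M_{\mathrm{lin}}$), then the misspecification enters the variance proxy additively at scale $\xi/(1-\gamma)^2$, and the Cauchy--Schwarz/Jensen step at the heart of the total-variance technique converts that variance-scale perturbation into a deviation-scale $\sqrt{\xi}$ contribution. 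Carefully tracking the powers of $(1-\gamma)$ through that step, and confirming that the auxiliary-MDP decoupling of Theorem~\ref{thm1} survives the replacement of $P_{\mathrm{lin}}$ by $P$ only in the bias term, is where I expect the real work to lie.
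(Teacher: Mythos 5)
Your proposal is correct, but it takes a genuinely different route from the paper. The paper does \emph{not} reduce to Theorem~\ref{thm1}: it re-runs the auxiliary-MDP/Bernstein analysis directly against the true kernel, writing $P=\Phi\bar{P}_{\mathcal{K}}+\Xi$ and splitting $\widehat{P}-P$ into a deterministic bias part $\Phi\Xi_{\mathcal{K}}-\Xi$ (bounded by $2\xi/(1-\gamma)^2$) plus the mean-zero part $\Phi(\widehat{P}_{\mathcal{K}}-P_{\mathcal{K}})\widehat{V}$ to which the concentration machinery applies. The catch in that route is exactly the one you anticipated: the variance-transfer step $\phi(s,a)\sqrt{Var_{\mathcal{K}}(V)}\leq\sqrt{Var_{s,a}(V)}$ (Lemma~\ref{l3}) is no longer exact under misspecification and acquires an additive correction $2\sqrt{3\xi/(1-\gamma)^2}$ (Lemma~\ref{l13}), and that correction, pushed through $(I-\gamma P^{\pi})^{-1}$, is precisely the source of the $\sqrt{\xi}/(1-\gamma)^2$ term in the statement. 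Your surrogate $M_{\mathrm{lin}}=(\mathcal{S},\mathcal{A},\Phi P_{\mathcal{K}},r,\gamma)$ sidesteps this entirely: $M_{\mathrm{lin}}$ is exactly linear in $\phi$, satisfies Assumption~\ref{assump1}, and shares its anchor rows with $M$ (here you should note explicitly that the anchor features are linearly independent, so $\lambda^{s_k,a_k}=e_k$ and hence $P_{\mathrm{lin}}(\cdot\mid s_k,a_k)=P(\cdot\mid s_k,a_k)$, which is what makes $\widehat{M}$ literally the empirical model of $M_{\mathrm{lin}}$). Theorem~\ref{thm1} then applies verbatim, and the misspecification is paid only through two deterministic simulation-lemma terms, each $O(\xi/(1-\gamma)^2)$ after your convexity bound $\|P_{\mathrm{lin}}(\cdot\mid s,a)-P(\cdot\mid s,a)\|_{TV}\leq2\xi$. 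Since $\xi$ bounds a total-variation distance it is $O(1)$, so $\xi=O(\sqrt{\xi})$ and your bound implies the stated $16\sqrt{\xi}/(1-\gamma)^2$ with room to spare; for small $\xi$ it is strictly sharper (linear rather than square-root dependence). In short: the paper's route shows how misspecification corrupts the variance argument inside a single unified analysis, while your reduction is more modular, shorter, and yields a stronger bound; the only gaps to fill are the $\lambda^{s_k,a_k}=e_k$ point above and explicit constant tracking in the simulation-lemma step.
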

	
	Theorem \ref{thm2} implies that for an approximately linear transition model, the suboptimality of $\widehat{\pi}$ only increase $\frac{16\sqrt{\xi}}{(1-\gamma)^2}$. The proof is given in the supplementary material and the high level idea is to separate the linear model part and perturbation part in the empirical model.

	

	\subsection{General Linear Transition Model}
	
	If the anchor-state assumption is not satisfied, the estimate $\widehat{P}$ given by Algorithm \ref{alg1} is not guaranteed to be a non-negative matrix, and thus not a probability transition matrix. Next proposition shows that even if the features approximately satisfy the anchor state assumption, the estimated transition $\widehat{P}$ is still not a probability transition matrix with high probability.
	
	\begin{proposition}
		Suppose the samples are from state-action pairs $\mathcal{K}$ and the unbiased estimate of the transition probability matrix given by Algorithm \ref{alg1} is $\widehat{P}$. If $\max_{s,a}\sum_{k\in\mathcal{K}}|\lambda_k^{s,a}|=L>1$ and $K\geq2$, where $\{\lambda_k^{s,a}\}$ is defined as in Proposition \ref{prop1}, there exists true model $P$ such that with probability larger than $1/3$, $\widehat{P}$ is not a probability transition matrix for sample size $N\geq C$, where $C$ is a constant.
		\label{prop5}
	\end{proposition}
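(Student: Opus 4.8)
The statement is a negative, lower-bound-style result, so the plan is to \emph{exhibit one adversarial instance} rather than reason about all models. It suffices to produce a single state-action pair $(s,a)$ and a single target state $s^*$ for which the entry $\widehat{P}(s^*\mid s,a)=\sum_{k\in\mathcal{K}}\lambda_k^{s,a}\widehat{P}_\mathcal{K}(s^*\mid s_k,a_k)$ is strictly negative with probability exceeding $1/3$: since every row of $\widehat{P}$ sums to $1$ by Proposition~\ref{prop1}, a single negative entry already certifies that $\widehat{P}$ is not a probability transition matrix. I would fix $(s,a)$ to be a pair attaining $\sum_{k}|\lambda_k^{s,a}|=L>1$. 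Because $\sum_k\lambda_k^{s,a}=1$ while $\sum_k|\lambda_k^{s,a}|=L>1$, the coefficients cannot all be nonnegative; splitting them into positive and negative groups gives $S^+:=\sum_{k:\lambda_k^{s,a}>0}\lambda_k^{s,a}=(L+1)/2$ and $S^-:=\sum_{k:\lambda_k^{s,a}<0}|\lambda_k^{s,a}|=(L-1)/2>0$, and $K\ge 2$ guarantees both groups are nonempty.

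Next I would build the adversarial linear model explicitly, taking the state-action set to be just the $K$ anchors together with the pair $(s,a)$, and the state space to be two states $\{s^*,s_0\}$. Choosing the anchor features to be the standard basis $\phi(s_k,a_k)=e_k$ makes $\lambda^{s,a}$ the genuine coordinate representation of $\phi(s,a)$ and reduces the design to freely choosing the anchor rows $\mu_k:=P(\cdot\mid s_k,a_k)$; the unique remaining row is then forced to be $P(\cdot\mid s,a)=\sum_k\lambda_k^{s,a}\mu_k$, so the instance is automatically a valid linear transition model for which $\widehat{P}$ is unbiased. I would set $\mu_k(s^*)=p'$ for the positively-weighted anchors and $\mu_k(s^*)=p$ for the negatively-weighted ones, with $p\in(0,1)$ small and $p'=(S^-/S^+)\,p\in(0,p)$, so that the true marginal is $P(s^*\mid s,a)=S^+p'-S^-p=0$ and hence $P(s_0\mid s,a)=1$. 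This is what makes the mean of the target entry exactly zero while keeping $P$ a bona fide probability matrix.

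The heart of the argument is anti-concentration. Writing $N\widehat{P}(s^*\mid s,a)=\sum_{j=1}^N Z_j$ with $Z_j:=\sum_{k}\lambda_k^{s,a}B_{k,j}$ and $B_{k,j}\sim\mathrm{Bernoulli}(\mu_k(s^*))$ independent, the $Z_j$ are i.i.d., bounded, have mean $\sum_k\lambda_k^{s,a}\mu_k(s^*)=0$ by construction, and have strictly positive variance $\sigma^2=\sum_k(\lambda_k^{s,a})^2\mu_k(s^*)(1-\mu_k(s^*))>0$ (the negatively-weighted anchors contribute since $p\in(0,1)$). The classical central limit theorem gives $S_N/(\sigma\sqrt N)\Rightarrow\mathcal{N}(0,1)$, and because the mean is \emph{exactly} zero, $\Pr[\widehat{P}(s^*\mid s,a)<0]=\Pr[S_N<0]\to 1/2$; in particular it exceeds $1/3$ once $N\ge C$ for an absolute constant $C$.

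The main obstacle — the part needing the most care — is arranging the construction so that mean-zero of the target entry is compatible with a valid, feature-consistent model: the two-state, standard-basis instance with a minimal state-action set is precisely what makes validity, unbiasedness, and the exact cancellation $S^+p'=S^-p$ hold simultaneously, and it is the exact cancellation that lifts the crossing probability to $1/2$ rather than merely being positive. A secondary technical point is upgrading the asymptotic limit to the clean uniform statement ``$>1/3$ for all $N\ge C$'': here a Berry--Esseen bound controls the $O(1/\sqrt N)$ deviation of $\Pr[S_N<0]$ from $1/2$ (and a local-limit estimate controls $\Pr[S_N=0]\to 0$), which pins down the explicit constant $C$.
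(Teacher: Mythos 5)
Your proposal is correct and takes essentially the same route as the paper: both exhibit an adversarial linear model in which the single estimated entry $\widehat{P}(s^*\mid s,a)$ has mean exactly zero (positive and negative coefficient groups of sizes $(L+1)/2$ and $(L-1)/2$ cancelling against suitably chosen anchor rows), and then invoke the central limit theorem to conclude the entry is negative with probability tending to $1/2>1/3$. The only differences are refinements on your side: the paper hard-codes the two-coefficient pattern $\lambda_1=(1+L)/2$, $\lambda_2=(1-L)/2$ with one deterministic anchor so that a single binomial carries all the randomness, while you allow arbitrary sign patterns and explicitly handle the strict-negativity issue ($\Pr[S_N=0]$ via a local-limit/concentration bound, and the explicit constant $C$ via Berry--Esseen) that the paper glosses over by only bounding $\Pr[\widehat{P}(s_1\mid s,a)\leq 0]$.
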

	
	Note that we have $\max_{s,a}\sum_{k\in\mathcal{K}}\lambda_k^{s,a}=1$, so $L\geq1$ is always satisfied and $L=1$ is equivalent to anchor state assumption. Proposition \ref{prop5} shows that without anchor state assumption, we can always find an MDP $M$ such that the empirical MDP $\widehat{M}$ is not a well-defined MDP with large probability. The estimated transition distributions $\widehat{P}$ sum to one, but are not necessarily non-negative. This kind of MDP is known as pseudo MDP~\citep{yao2014pseudo} and \citep{lever2016compressed,pires2016policy,lattimore2019learning} analyze the error bound induced under unrealizability. In a pseudo MDP, well-known properties of bellman operator like contraction and monotonicity no longer exist, so there is no optimal value/policy. An example of pseudo MDP without optimal value/policy is provided in the supplementary material. However, plug-in solvers like value iteration still work in pseudo MDP. In this part, we give a sample complexity bound for value iteration solver. To facilitate analysis, we need a regularity assumption on features.
	
	
	\begin{assumption}
		(Representative States and Regularity of Features) There exists a set of state-action pairs $\mathcal{K}$ and a scalar $L\geq1$ such that for any
		$(s,a)\in\mathcal{S}\times\mathcal{A}$, its feature vector can be represented as a linear combination of the state-actions $\{(s_k,a_k)|k\in\mathcal{K}\}$:
		$$\exists\{\lambda_k^{s,a}\}:\phi(s,a)=\sum_{k\in\mathcal{K}}\lambda_k^{s,a}\phi(s_k.a_k), \sum_{k\in\mathcal{K}}|\lambda_k^{s,a}|\leq L,\forall(s,a)\in(\mathcal{S},\mathcal{A}).$$
		\label{assump2}
	\end{assumption}
	
	This assumption means that the selected state-action pairs $\mathcal{K}$ can represent all features by linear combinations with bounded coefficients, which avoids error explosion in the iterative algorithm. In theorem \ref{thm3}, we give the sample complexity for value iteration solver and the proof is provided in the supplementary material.
	
	\begin{theorem}(Value iteration solver for general linear MDP)
		Suppose Assumption \ref{assump2} is satisfied and the empirical model $\widehat{M}$ is constructed as in Algorithm \ref{alg1}. Set $\delta\in(0,1)$ and $\epsilon\in(0,1/(1-\gamma))$. $\widehat{V}$ is the value function of applying value iteration for $O(1/(1-\gamma))$ times in the empirical MDP $\widehat{M}=(\mathcal{S},\mathcal{A},\widehat{P},r,\gamma)$. Let $\widehat{\pi}$ be the greedy policy with respect to $\widehat{V}$.
		If $N\geq\frac{cL^2\log(cK(1-\gamma)^{-1}\delta^{-1})}{\epsilon^2}\mathrm{poly}(1/(1-\gamma)),$
		then with probability larger than $1-\delta$,
		$$Q^{\widehat{\pi}}\geq Q^*-\epsilon,$$
		where $c$ is a constant.
		\label{thm3}
	\end{theorem}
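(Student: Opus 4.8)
The plan is to reduce the suboptimality of the greedy policy $\widehat{\pi}$ to the deviation between the empirical and true Bellman updates applied to \emph{sample-independent} vectors, so that the statistical error never has to be measured in total-variation distance (which would cost a factor $|\mathcal{S}|$ and destroy the bound). Since Assumption \ref{assump2} only guarantees $\sum_{k}|\lambda_k^{s,a}|\le L$ rather than convexity, Proposition \ref{prop5} tells us that $\widehat{P}$ need not be a genuine transition matrix, so $\widehat{M}$ is only a pseudo-MDP: the empirical Bellman operator $\widehat{T}V(s)=\max_a[r(s,a)+\gamma\widehat{P}(s,a)V]$ is neither monotone nor a $\gamma$-contraction, because $\max_{s,a}\|\widehat{P}(s,a)\|_1\le L$ may exceed $1$, making $\widehat{T}$ only $\gamma L$-Lipschitz in $\ell_\infty$. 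This is exactly why the theorem runs value iteration for a fixed horizon $H=O(1/(1-\gamma))$ rather than to convergence: $\widehat{V}$ is the $H$-th iterate $\widehat{V}_H$ started from $0$, which is well defined independently of any contraction property.

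First I would set up a finite-horizon error recursion. Let $V_h$ be the true $h$-step value-iteration iterate (the optimal $h$-horizon value in $M$), which is \emph{deterministic} and bounded by $1/(1-\gamma)$, and let $\widehat{V}_h=\widehat{T}\widehat{V}_{h-1}$. Writing $e_h=\|\widehat{V}_h-V_h\|_\infty$ and splitting $\widehat{T}\widehat{V}_{h-1}-TV_{h-1}=(\widehat{T}\widehat{V}_{h-1}-\widehat{T}V_{h-1})+(\widehat{T}V_{h-1}-TV_{h-1})$, the first term is at most $\gamma L\,e_{h-1}$ by the $\ell_1$ bound from Assumption \ref{assump2}, while the second is $\max_{s,a}\gamma|(\widehat{P}-P)(s,a)V_{h-1}|$. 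The crucial point is that $V_{h-1}$ is sample-independent, so for each $(s,a)$ I expand $(\widehat{P}-P)(s,a)V_{h-1}=\sum_{k}\lambda_k^{s,a}(\widehat{P}_\mathcal{K}-P)(s_k,a_k)V_{h-1}$ and apply Bernstein's inequality to the $K$ anchor sums only, losing one factor $L$ from $\sum_k|\lambda_k^{s,a}|$ and a union bound over the $HK$ deterministic vectors. This yields $e_H\le\eta\sum_{t=0}^{H-1}(\gamma L)^t$ with $\eta$ of order $L(1-\gamma)^{-1}\sqrt{\log(HK/\delta)/N}$, and combining with the truncation bound $\|V_H-V^*\|_\infty\le\gamma^H/(1-\gamma)$ controls $\|\widehat{V}-V^*\|_\infty$ with no $|\mathcal{S}|$ dependence.

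Next I would convert the value accuracy of $\widehat{V}$ into policy suboptimality. Using the standard greedy inequality, $Q^*-Q^{\widehat{\pi}}$ in the true MDP is controlled by $\tfrac{1}{1-\gamma}$ times the one-step inconsistency $\max_{s,a}\gamma|(\widehat{P}-P)(s,a)\widehat{V}|$ together with $\|\widehat{V}-V^*\|_\infty$ terms. The difficulty here is that $\widehat{V}=\widehat{V}_H$ \emph{does} depend on the samples, so a naive bound would force the total-variation estimate $\|(\widehat{P}-P)(s,a)\|_1$ and reintroduce $|\mathcal{S}|$. To avoid this I would reuse the auxiliary-MDP / low-dimensional covering device behind Theorem \ref{thm1}: the dependence of $\widehat{V}_H$ on the samples drawn at a single anchor $(s_k,a_k)$ is effectively one-dimensional, so a one-dimensional net over the corresponding scalar suffices to decouple $\widehat{V}_H$ from $\widehat{P}(s_k,a_k)$, after which the same anchor-wise Bernstein bound applies to the net points; the propagated error again costs a factor $L$ per anchor and the $\gamma L$-Lipschitz amplification over the horizon.

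The main obstacle is the interaction between the \emph{non-contractive amplification} $\gamma L\ge\gamma$ and the requirement of an $|\mathcal{S}|$-free bound. Because the only $|\mathcal{S}|$-free estimate of $\widehat{P}-P$ acting on a data-dependent vector is the crude operator-norm bound $L$, the propagated error carries the factor $\sum_{t<H}(\gamma L)^t$, whose naive growth is exponential in the horizon; taming it so that only $L^2$ (after squaring $\eta$) and an extra $\mathrm{poly}(1/(1-\gamma))$ beyond the $(1-\gamma)^{-3}$ of the anchor case survive is the delicate core. This is precisely why the horizon must be truncated at $O(1/(1-\gamma))$ and why variance-aware concentration is used rather than Hoeffding: pushing $H$ larger amplifies the error instead of reducing it. Getting the covering argument to hold \emph{uniformly} across the $H$ iterations while keeping this amplification polynomial, so that neither $|\mathcal{S}|$ nor an exponential-in-horizon constant leaks into the final bound, is the hard part; the remaining greedy-policy inequality and Bernstein bookkeeping are routine.
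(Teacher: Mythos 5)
Your error recursion is oriented the wrong way, and this is a genuine gap rather than a deferred technicality. You split $\widehat{T}\widehat{V}_{h-1}-TV_{h-1}=(\widehat{T}\widehat{V}_{h-1}-\widehat{T}V_{h-1})+(\widehat{T}V_{h-1}-TV_{h-1})$, so that the statistical term acts on the deterministic iterate $V_{h-1}$ (easy concentration), while the propagation term uses the empirical operator $\widehat{T}$, which in the pseudo-MDP is only $\gamma L$-Lipschitz. As you yourself note, this yields $e_H\lesssim \eta\sum_{t<H}(\gamma L)^t\approx\eta(\gamma L)^H$, and with $H=\Theta(1/(1-\gamma))$ and $L>1/\gamma$ this is exponential in the effective horizon; compensating would require $N\gtrsim L^2(\gamma L)^{2H}/\epsilon^2$, which is not of the form $cL^2\,\mathrm{poly}(1/(1-\gamma))/\epsilon^2$ claimed in Theorem \ref{thm3}. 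You flag ``taming'' this amplification as the delicate core, but you give no mechanism for doing so --- and within your decomposition there is none, because the factor $L$ sits multiplicatively on the recursion itself and compounds at every step.

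The paper's proof (Lemmas \ref{l30}--\ref{l34}) resolves this by flipping the decomposition: it writes $\gamma\widehat{P}\widehat{V}^*_{h+1}-\gamma PV^*_{h+1}=\gamma P(\widehat{V}^*_{h+1}-V^*_{h+1})+\gamma(\widehat{P}-P)\widehat{V}^*_{h+1}$, so the propagation term uses the \emph{true} stochastic matrix $P$ and contracts with factor $\gamma$ alone; the factor $L$ then appears only linearly, once per step, via $\|\Phi(\widehat{P}_\mathcal{K}-P_\mathcal{K})\widehat{V}^*_{h+1}\|_\infty\le L\|(\widehat{P}_\mathcal{K}-P_\mathcal{K})\widehat{V}^*_{h+1}\|_\infty$ (Lemma \ref{l30}). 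The price is that the statistical term now acts on the sample-dependent iterate $\widehat{V}^*_{h+1}$, which is exactly where the auxiliary-MDP net is deployed (Lemmas \ref{l31}--\ref{l34}). Crucially, the exponential amplification does not vanish --- it reappears as the $(H-h)L^{H-h}$ Lipschitz constant of $\widetilde{Q}^*_{h,u}$ in $u$ (Lemma \ref{l31}) --- but there it only inflates the cardinality of the one-dimensional net to $(4H^3L^{2H}/\epsilon)^H$, and hence enters the sample complexity only through $\log|B_{s,a}^*|=O\bigl(H^2\log(HL/\epsilon)\bigr)$, i.e.\ polynomially. Two smaller corrections: your claim that variance-aware (Bernstein) concentration is essential here is backwards --- the paper explicitly uses Hoeffding (Lemma \ref{l32}), since the Jensen/total-variance argument (Lemma \ref{l3}) requires non-negativity of $\Phi$ and fails in the pseudo-MDP; and your plan to decouple $(\widehat{P}-P)\widehat{V}$ for the greedy-policy step is unnecessary, since the paper simply combines $\|\widehat{V}^*_0-V^*\|_\infty\le\epsilon(1-\gamma)$ (the recursion above plus value-iteration truncation) with the standard greedy-policy inequality of \cite{singh1994upper}, which involves no empirical quantities at all.
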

	
	This theorem shows that without anchor-state assumption, value iteration solver is still a sample efficient algorithm. For pseudo MDP, the optimal policy/value no longer exist, so different plug-in solver may output significantly different policies. Even though we can assume all solvers are time efficient,  an one to one analysis is needed and we leave this as future work. 
	
	\section{Extensions to Finite Horizon MDP and Two-players Turn-based Stochastic Game}
	
	Our approach for discounted MDP can be extended to finite horizon MDP (FHMDP) and two-players turn-based stochastic game (2-TBSG). Due to space limitation, preliminary about FHMDP and 2-TBSG is given in the supplementary material. All of the three decision models use a transition probability matrix to represent the system dynamic, so the framework of plug-in solver approach is the same (i.e. Algorithm \ref{alg1}). Here we directly present the sample complexity results here.
	
	\begin{theorem}(Sample complexity for FHMDP)
		Suppose Assumption \ref{assump1} is satisfied and the empirical model $\widehat{M}$ is constructed as in Algorithm \ref{alg1}. Set $\delta\in(0,1)$ and $\epsilon\in(0,(1-\gamma)^{-1/2}]$. Let $\widehat{\pi}$ be any $\epsilon_{\mathrm{PS}}$-optimal policy for $\widehat{\mathcal{M}}$.
		If $N\geq\frac{c\log(cKH\delta^{-1})H^3\min\{|\mathcal{S}|,K,H\}}{\epsilon^2},$
		then with probability larger than $1-\delta$,
		$Q^{\widehat{\pi}}\geq Q^*-\epsilon-3\epsilon_{\mathrm{PS}}H$,
		where $c$ is a constant. 
	\end{theorem}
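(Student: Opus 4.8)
The plan is to transcribe the proof of Theorem~\ref{thm1} into the finite-horizon setting, tracking the $H$ stage-dependent value functions instead of a single stationary one. Write $V_h^\pi, Q_h^\pi$ for the stage-$h$ value and action-value functions in $M$ and $\widehat V_h^\pi, \widehat Q_h^\pi$ for their counterparts in $\widehat M$, with $\widehat V_{H+1}^\pi\equiv 0$. The first step is the finite-horizon value-difference identity, obtained by the usual telescoping of the Bellman recursions,
$$Q_1^\pi - \widehat Q_1^\pi = \sum_{h=1}^{H}\big(P^\pi\big)^{h-1}(P-\widehat P)\,\widehat V_{h+1}^\pi,$$
where $\big(P^\pi\big)^{h-1}$ is the $(h-1)$-step transition under $\pi$ in the true model; this is the analog of $(I-\gamma P^\pi)^{-1}(P-\widehat P)\widehat V^\pi$. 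I would then split
$$Q^* - Q^{\widehat\pi} = (Q^* - \widehat Q^{\pi^*}) + (\widehat Q^{\pi^*} - \widehat Q^*) + (\widehat Q^* - Q^{\widehat\pi}),$$
where the middle term contributes the solver error, propagated across the horizon to give the stated $3\epsilon_{\mathrm{PS}}H$, and the outer two terms reduce, via the identity above, to concentrating $|(P-\widehat P)\widehat V_{h}^{\pi^*}|$ and $|(P-\widehat P)\widehat V_{h}^{*}|$ \emph{uniformly over all stages} $h\in[H]$.

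Next I would adapt the auxiliary-MDP decoupling. Fixing an anchor $(s,a)\in\mathcal K$, perturbing $\widehat P(s,a)$ toward $P(s,a)$ can again be compensated by a reward shift along $\Phi^{s,a}$, so that $\widehat V_h$ is recovered from the auxiliary model with an appropriate coefficient. The new feature relative to the discounted case is that a single scalar $u$ no longer suffices: the compensating shift at stage $h$ is driven by $(\widehat P_{\mathcal K}(s,a)-P(s,a))\widehat V_{h+1}$, which varies with $h$, so the net must cover a vector of coefficients rather than a scalar. The key structural claim to establish is that, as the probability vector $\widehat P(s,a)$ varies, the tuple $(\widehat V_1,\dots,\widehat V_H)$ needed to certify the concentration is confined to a manifold of dimension at most $\min\{|\mathcal S|,K,H\}$: one may net $\widehat P(s,a)$ directly in the $(|\mathcal S|-1)$-simplex, giving $|\mathcal S|$; net the $K$ feature directions, giving $K$; or net the stagewise compensation vector, giving $H$. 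Taking whichever is cheapest yields a product $\epsilon$-net whose logarithmic cardinality scales like $\min\{|\mathcal S|,K,H\}\cdot\log(cKH/\delta)$. Since this net is built independently of the samples drawn at $(s,a)$, the statistical dependence between $\widehat V$ and $\widehat P$ is broken and a union bound over the net applies.

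Finally, to land the $H^3$ rather than $H^4$ scaling, I would run Bernstein's inequality over the net together with the law of total variance along the horizon. Entrywise, the deviation of $(P-\widehat P)\widehat V_{h+1}$ is governed by $\mathrm{Var}_{P(s,a)}(\widehat V_{h+1})/N$; summing against the multi-step transitions in the identity and applying Cauchy--Schwarz over the $H$ stages, the accumulated error is $\lesssim \sqrt{H\sum_h \mathrm{Var}(\widehat V_{h+1})/N}$, and the total-variance bound $\sum_h \mathrm{Var}(\widehat V_{h+1})\le O(H^2)$ (the discounted weight $\tfrac{1}{(1-\gamma)^2}$ replaced by the horizon bound) collapses this to $O(\sqrt{H^3/N})$ up to the net factor. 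Combining with the $\min\{|\mathcal S|,K,H\}\log(cKH/\delta)$ from the union bound gives exactly the requirement $N\gtrsim H^3\min\{|\mathcal S|,K,H\}\log(cKH/\delta)/\epsilon^2$. The main obstacle is the middle step: rigorously pinning down the dimension of the stage-indexed value manifold and showing it is simultaneously controlled by each of $|\mathcal S|$, $K$, and $H$, so that the product net stays small enough not to inflate the variance bound beyond the claimed $\min$. The value-difference identity, the Bernstein concentration, and the total-variance bookkeeping are otherwise routine finite-horizon transcriptions of the discounted argument.
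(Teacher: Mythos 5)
Your proposal follows essentially the same route as the paper's own proof: the same three-term decomposition with the solver error propagated as $\epsilon_{\mathrm{PS}}H$, the same telescoped finite-horizon value-difference identity (the paper's Lemma \ref{l16}), the same auxiliary-MDP decoupling with an $H$-dimensional reward-compensation vector $u=(u_0,\dots,u_{H-1})$ netted in $[-H,H]^H$ (Lemmas \ref{l18}--\ref{l22}), the same alternative net on the value-function manifold of dimension $\min\{K,|\mathcal{S}|\}$ to obtain the $\min\{|\mathcal S|,K,H\}$ factor (Lemma \ref{l23}), and the same Bernstein-plus-total-variance bookkeeping yielding the $\sqrt{H^3/N}$ rate (Lemma \ref{l17}). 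The step you flag as the main obstacle—controlling the net's log-cardinality simultaneously by $H$ and by $\min\{K,|\mathcal{S}|\}$—is exactly what the paper resolves by running the two net constructions separately and taking the better of the resulting bounds, just as you propose.
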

	
	\begin{theorem}(Sample complexity for 2-TBSG)
		Suppose Assumption \ref{assump1} is satisfied and the empirical model $\widehat{M}$ is constructed as in Algorithm \ref{alg1}. Set $\delta\in(0,1)$ and $\epsilon\in(0,(1-\gamma)^{-1/2}]$. Let $\widehat{\pi}=(\widehat{\pi}_1,\widehat{\pi}_2)$ be any $\epsilon_{\mathrm{PS}}$-optimal policy for $\widehat{\mathcal{M}}$.
		If $N\geq\frac{c\log(cK(1-\gamma)^{-1}\delta^{-1})}{(1-\gamma)^3\epsilon^2},$
		then with probability larger than $1-\delta$,
		$|Q^{\widehat{\pi}}-Q^*|\leq \epsilon+\frac{3\epsilon_\mathrm{PS}}{1-\gamma}$,
		where $c$ is a constant.
	\end{theorem}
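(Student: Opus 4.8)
The plan is to mirror the proof of Theorem \ref{thm1} for the discounted MDP, adapting the auxiliary-MDP decoupling argument to the minimax structure of the turn-based game. The first step is to reduce the two-sided Nash gap $|Q^{\widehat\pi}-Q^*|$ to a collection of perturbation terms. Writing $\pi^*=(\pi_1^*,\pi_2^*)$ for the Nash equilibrium of the true game $M$ and $\widehat\pi^*$ for that of the empirical game $\widehat M$, I would bound the two directions separately by the analogue of the MDP decomposition, e.g. $Q^*-Q^{\widehat\pi}=(Q^*-\widehat Q^{\pi^*})+(\widehat Q^{\pi^*}-\widehat Q^*)+(\widehat Q^*-Q^{\widehat\pi})$ together with a symmetric chain for $Q^{\widehat\pi}-Q^*$; the $|Q^{\widehat\pi}-Q^*|$ guarantee is then read off as the worst case of player $1$'s and player $2$'s best-response values. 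In each chain the middle term is controlled by the $\epsilon_{\mathrm{PS}}$-optimality of $\widehat\pi$ in $\widehat M$, contributing the $3\epsilon_{\mathrm{PS}}/(1-\gamma)$ factor exactly as in the MDP case, so the entire difficulty again concentrates on the first and last terms.

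The observation that lets the MDP machinery carry over is that in a turn-based game, once both players' deterministic policies are fixed, the dynamics collapse to a single Markov reward process with induced transition $P^\pi$. Hence the perturbation identity $Q^\pi-\widehat Q^\pi=(I-\gamma P^\pi)^{-1}(P-\widehat P)\widehat V^\pi$ holds verbatim for any fixed joint policy, and in particular for the Nash policies $\pi^*$ and $\widehat\pi^*$. The core task therefore reduces, exactly as before, to concentrating $|(P(s,a)-\widehat P(s,a))\widehat V^{\pi^*}|$ and $|(P(s,a)-\widehat P(s,a))\widehat V^*|$ over the anchor set $\mathcal K$, where $\widehat V^*$ is now the minimax value in $\widehat M$, which exists and is unique by Shapley's theorem and for which the minimax Bellman operator remains a monotone $\gamma$-contraction.

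For each anchor pair $(s,a)\in\mathcal K$ I would define the auxiliary game exactly as in the auxiliary-DMDP construction: replace $\widehat P_{\mathcal K}(s,a)$ by the true $P(s,a)$ and add the scalar reward perturbation $u\Phi^{s,a}$. The crucial manifold property — that tuning the single scalar $u$ recovers $\widehat Q^\pi$, so the value lies on a one-dimensional curve — survives the minimax operator, since we may fix the relevant equilibrium policy and invoke the linear-in-transition identity above; thus an evenly spaced net $B^\pi_{s,a}\subset U^\pi_{s,a}$ again yields an $\epsilon/(1-\gamma)$-net on $\widetilde Q_{s,a,u}$ that is statistically independent of $\widehat P(s,a)$. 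A union bound over this net together with the Bernstein inequality and the total-variance technique then concentrates the two target quantities, and since $|\mathcal K|=K$ the logarithmic factor scales with $K$, delivering the stated $N=\widetilde O(1/((1-\gamma)^3\epsilon^2))$ rate with no dependence on $|\mathcal S|$ or $|\mathcal A|$.

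The main obstacle I anticipate is handling the two players symmetrically through the minimax operator. Unlike the MDP case, where a single maximum over actions suffices, here one must verify that the one-dimensional-manifold and net-robustness arguments hold for the saddle-point value, and that perturbing a single anchor's transition does not destroy the existence and uniqueness of the equilibrium in the auxiliary game — this rests on $\widetilde P=\Phi\widetilde P_{\mathcal K}$ remaining a valid probability matrix, which is guaranteed by Assumption \ref{assump1}, so Shapley's fixed-point structure is preserved. A secondary care point is that the two-sided guarantee doubles the collection of value functions that must be concentrated (the max-player's and the min-player's best-response value functions, evaluated at both $\pi^*$ and $\widehat\pi^*$); since each is controlled by the same net construction, this only inflates the union bound by a constant and leaves the rate unchanged.
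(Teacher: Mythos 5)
Your high-level plan (auxiliary game, one-dimensional net over $u$, Bernstein plus total variance) is the right machinery and matches the paper, but the game-specific step on which everything rests --- the error decomposition --- is wrong as you wrote it. In the chain $Q^*-Q^{\widehat\pi}=(Q^*-\widehat Q^{\pi^*})+(\widehat Q^{\pi^*}-\widehat Q^*)+(\widehat Q^*-Q^{\widehat\pi})$, the MDP proof discards the middle term because $\widehat Q^*$ is a \emph{maximum} over policies, so $\widehat Q^{\pi^*}-\widehat Q^*\leq 0$. In a zero-sum game $\widehat Q^*$ is a saddle value, not a maximum: the true Nash joint policy $\pi^*=(\pi_1^*,\pi_2^*)$ evaluated in $\widehat M$ can lie strictly above or strictly below $\widehat Q^*$ (e.g.\ if $\pi_2^*$ happens to be a poor minimizer in $\widehat M$ while $\pi_1^*$ remains a good maximizer). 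This term is also not touched by the $\epsilon_{\mathrm{PS}}$-optimality of $\widehat\pi$, which only compares $\widehat Q^{\widehat\pi}$ with $\widehat Q^*$. Trying to repair it by concentrating $|\widehat Q^{\pi^*}-\widehat Q^*|$ directly is circular, since bounding $|\widehat Q^*-Q^*|$ is essentially the statement being proved. The paper's Lemma \ref{l26} exists precisely to avoid this: it routes the comparison through the \emph{empirical best response to the true equilibrium policy}, using $Q^*\leq Q^{\pi_1^*,\widehat c(\pi_1^*)}$ (the true counter policy of player 2 minimizes) and $\widehat Q^{\pi_1^*,\widehat c(\pi_1^*)}\leq\widehat Q^*$ (the empirical Nash value is the maximum over player 1 of empirical best-response values), together with the symmetric chain through $\widehat c(\pi_2^*)$.

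This changes which value functions must be concentrated: not $\widehat V^{\pi^*}$ and $\widehat V^*$ as you state, but $\widehat V^{\pi_1^*,\widehat c(\pi_1^*)}$, its mirror $\widehat V^{\widehat c(\pi_2^*),\pi_2^*}$, and $\widehat V^{\widehat\pi}$. Since $\widehat c(\pi_1^*)$ is computed from the data, the auxiliary-game argument needs an extra ingredient beyond ``fix the relevant equilibrium policy and invoke the linear-in-transition identity'': one must show that the empirical best response is simultaneously the best response in the auxiliary game at the matching $u$, i.e.\ $\widehat Q^{\pi_1,\widehat c(\pi_1)}=\widetilde Q^{\pi_1,\widetilde c_{u^{\pi_1}}(\pi_1)}_{u^{\pi_1}}$ (the paper's Lemma \ref{l27}, which rests on the optimality characterization of counter policies in Lemma \ref{l25}), and that the map $u\mapsto \widetilde Q^{\pi_1,\widetilde c_u(\pi_1)}_u$ is $\frac{1}{1-\gamma}$-Lipschitz (Lemma \ref{l28}). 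Only then does the net over $u$ index quantities independent of $\widehat P(s,a)$, so that Bernstein applies. Your final paragraph shows you sense that best-response values are the relevant objects, but without the counter-policy decomposition and the identity $\widehat c(\pi_1)=\widetilde c_{u^{\pi_1}}(\pi_1)$ your argument does not go through.
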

	
	The above theorems indicate that plug-in solver is sample efficient for FHMDP and 2-TBSG. The high level idea of the proof is similar to the analysis of discounted MDP, i.e. we use a series of auxiliary MDP to break the statistical dependence. For FHMDP, the rewards in auxiliary MDPs need to be tuned step by step, which leads to the extra $H$ dependency. For 2-TBSG, we analyze the counter policy, or known as the best response policy, in true model and in empirical model. The details are problem-specific and due to space limitation, we provide them in the supplementary material.
	
	\section{Discussion}
	
	This paper studies the sample complexity of plug-in solver approach in feature-based MDPs, including discounted MDP, finite horizon MDP and stochastic games. We tackle a basic and important problem in reinforcement learning: whether planning in an empirical model is sample efficient to give an approximately optimal policy in the real model. To our best knowledge, this is the first result proving  minimax sample complexity for the plug-in solver approach in feature-based MDPs. We hope that the new technique in our work can be reused in more general settings and motivate breakthroughs in other domains.
	
	Our work also opens up several directions for future research, which is listed below.
	
	\begin{itemize}
		\item \textit{Improve the dependence on $H$ for finite horizon MDP.} The sample complexity we give for finite horizon MDP is $\widetilde{O}(KH^4\epsilon^{-2})$, which has an extra $H$ dependence compared with the discounted case. We conjecture that the plug-in solver approach should enjoy the optimal $\widetilde{O}(KH^3\epsilon^{-2})$ complexity as model-free algorithms \citep{Yang2019}. This may require new techniques as absorbing MDP is not well suitable for finite horizon MDP.
		
		\item \textit{Improve the result for stochastic game.} Our result for turn-based stochastic game is for finding a strategy with value $\epsilon$-close to the Nash equilibrium value, while the final objective is to find an $\epsilon$-Nash equilibrium value (for definition see \citep{Sidford2019}). We need a more refined analysis to tackle the complex dependence between two players.
		
		\item \textit{Extend the range of $\epsilon$.} Our result holds for $\epsilon\in(0,\frac{1}{\sqrt{1-\gamma}}]$, which is better than previous model-free result \citep{Yang2019}. Recently, \citep{li2020breaking} develops a novel reward perturbation technique to prove that the minimax sample complexity holds for $\epsilon\in(0,\frac{1}{1-\gamma}]$. A direct application of their result would lead to $\log(|\mathcal{S}||\mathcal{A}|)$ factor in linear MDP. The full $\epsilon$ range for linear MDP is still an open problem.
		
		\item \textit{Beyond linear MDP.} Currently most works focus on tabular and linear setting with generative model, which provide intuition for solving the ultimate setting of general function approximation like neural network. One interesting problem is whether we can develop provably efficient model-based algorithm under general function approximation setting, as the construction of the empirical model seems to be difficult even for linear Q-function assumption. 
	\end{itemize}

	\clearpage
	
	\section*{Broader Impact}
	We believe that our work can benefit both theory and algorithm researches of model-based reinforcement learning (MBRL), as the sample efficiency of MBRL has been long observed but lack theoretical analysis. Previously few sample complexity results for MBRL in feature-based MDP exist, so people may wonder if the gap between theory and application is treatable. Our work answers this question positively by theoretically proving the simplest MBRL method, which is the plug-in solver approach, can reach the minimax sample complexity. Researchers that are interested in MBRL theory can benefit from our results and techniques. In addition, we give a thorough analysis on linear features and show that anchor-state condition can measure the quality of features. This result can guide reinforcement learning practioners to design features and sample efficient algorithms. 
	We believe that our research has no negative societal effects.

\section*{Acknowledgements}
We thank all anonymous reviewers for their insightful comments.

\medskip

\small

\bibliography{nips2020}

\begin{thebibliography}{43}
\providecommand{\natexlab}[1]{#1}
\providecommand{\url}[1]{\texttt{#1}}
\expandafter\ifx\csname urlstyle\endcsname\relax
  \providecommand{\doi}[1]{doi: #1}\else
  \providecommand{\doi}{doi: \begingroup \urlstyle{rm}\Url}\fi

\bibitem[Abbasi-Yadkori et~al.(2011)Abbasi-Yadkori, P{\'a}l, and
  Szepesv{\'a}ri]{abbasi2011improved}
Yasin Abbasi-Yadkori, D{\'a}vid P{\'a}l, and Csaba Szepesv{\'a}ri.
\newblock Improved algorithms for linear stochastic bandits.
\newblock In \emph{Advances in Neural Information Processing Systems}, pages
  2312--2320, 2011.

\bibitem[Agarwal et~al.(2019)Agarwal, Kakade, and Yang]{agarwal2019optimality}
Alekh Agarwal, Sham Kakade, and Lin~F Yang.
\newblock On the optimality of sparse model-based planning for markov decision
  processes.
\newblock \emph{arXiv preprint arXiv:1906.03804}, 2019.

\bibitem[Azar et~al.(2012)Azar, Munos, and Kappen]{azar2012sample}
Mohammad~Gheshlaghi Azar, R{\'e}mi Munos, and Bert Kappen.
\newblock On the sample complexity of reinforcement learning with a generative
  model.
\newblock \emph{arXiv preprint arXiv:1206.6461}, 2012.

\bibitem[Azar et~al.(2013)Azar, Munos, and Kappen]{azar2013minimax}
Mohammad~Gheshlaghi Azar, R{\'e}mi Munos, and Hilbert~J Kappen.
\newblock Minimax pac bounds on the sample complexity of reinforcement learning
  with a generative model.
\newblock \emph{Machine learning}, 91\penalty0 (3):\penalty0 325--349, 2013.

\bibitem[Azar et~al.(2017)Azar, Osband, and Munos]{azar2017minimax}
Mohammad~Gheshlaghi Azar, Ian Osband, and R{\'e}mi Munos.
\newblock Minimax regret bounds for reinforcement learning.
\newblock In \emph{Proceedings of the 34th International Conference on Machine
  Learning-Volume 70}, pages 263--272. JMLR. org, 2017.

\bibitem[Bradtke and Barto(1996)]{bradtke1996linear}
Steven~J Bradtke and Andrew~G Barto.
\newblock Linear least-squares algorithms for temporal difference learning.
\newblock \emph{Machine learning}, 22\penalty0 (1-3):\penalty0 33--57, 1996.

\bibitem[Chu et~al.(2011)Chu, Li, Reyzin, and Schapire]{chu2011contextual}
Wei Chu, Lihong Li, Lev Reyzin, and Robert Schapire.
\newblock Contextual bandits with linear payoff functions.
\newblock In \emph{Proceedings of the Fourteenth International Conference on
  Artificial Intelligence and Statistics}, pages 208--214, 2011.

\bibitem[Du et~al.(2019)Du, Luo, Wang, and Zhang]{du2019provably}
Simon~S Du, Yuping Luo, Ruosong Wang, and Hanrui Zhang.
\newblock Provably efficient q-learning with function approximation via
  distribution shift error checking oracle.
\newblock In \emph{Advances in Neural Information Processing Systems}, pages
  8058--8068, 2019.

\bibitem[Duan et~al.(2016)Duan, Chen, Houthooft, Schulman, and
  Abbeel]{duan2016benchmarking}
Yan Duan, Xi~Chen, Rein Houthooft, John Schulman, and Pieter Abbeel.
\newblock Benchmarking deep reinforcement learning for continuous control.
\newblock In \emph{International Conference on Machine Learning}, pages
  1329--1338, 2016.

\bibitem[Duan et~al.(2019)Duan, Ke, and Wang]{duan2019state}
Yaqi Duan, Tracy Ke, and Mengdi Wang.
\newblock State aggregation learning from markov transition data.
\newblock In \emph{Advances in Neural Information Processing Systems}, pages
  4488--4497, 2019.

\bibitem[Hansen et~al.(2013)Hansen, Miltersen, and Zwick]{hansen2013strategy}
Thomas~Dueholm Hansen, Peter~Bro Miltersen, and Uri Zwick.
\newblock Strategy iteration is strongly polynomial for 2-player turn-based
  stochastic games with a constant discount factor.
\newblock \emph{Journal of the ACM (JACM)}, 60\penalty0 (1):\penalty0 1--16,
  2013.

\bibitem[Jaksch et~al.(2010)Jaksch, Ortner, and Auer]{jaksch2010near}
Thomas Jaksch, Ronald Ortner, and Peter Auer.
\newblock Near-optimal regret bounds for reinforcement learning.
\newblock \emph{Journal of Machine Learning Research}, 11\penalty0
  (Apr):\penalty0 1563--1600, 2010.

\bibitem[Jia et~al.(2019)Jia, Yang, and Wang]{jia2019feature}
Zeyu Jia, Lin~F Yang, and Mengdi Wang.
\newblock Feature-based q-learning for two-player stochastic games.
\newblock \emph{arXiv preprint arXiv:1906.00423}, 2019.

\bibitem[Jiang et~al.(2015)Jiang, Kulesza, Singh, and
  Lewis]{jiang2015dependence}
Nan Jiang, Alex Kulesza, Satinder Singh, and Richard Lewis.
\newblock The dependence of effective planning horizon on model accuracy.
\newblock In \emph{Proceedings of the 2015 International Conference on
  Autonomous Agents and Multiagent Systems}, pages 1181--1189, 2015.

\bibitem[Jiang et~al.(2017)Jiang, Krishnamurthy, Agarwal, Langford, and
  Schapire]{jiang2017contextual}
Nan Jiang, Akshay Krishnamurthy, Alekh Agarwal, John Langford, and Robert~E
  Schapire.
\newblock Contextual decision processes with low bellman rank are
  pac-learnable.
\newblock In \emph{Proceedings of the 34th International Conference on Machine
  Learning-Volume 70}, pages 1704--1713. JMLR. org, 2017.

\bibitem[Jin et~al.(2018)Jin, Allen-Zhu, Bubeck, and Jordan]{jin2018q}
Chi Jin, Zeyuan Allen-Zhu, Sebastien Bubeck, and Michael~I Jordan.
\newblock Is q-learning provably efficient?
\newblock In \emph{Advances in Neural Information Processing Systems}, pages
  4863--4873, 2018.

\bibitem[Jin et~al.(2019)Jin, Yang, Wang, and Jordan]{jin2019provably}
Chi Jin, Zhuoran Yang, Zhaoran Wang, and Michael~I Jordan.
\newblock Provably efficient reinforcement learning with linear function
  approximation.
\newblock \emph{arXiv preprint arXiv:1907.05388}, 2019.

\bibitem[Kaiser et~al.(2019)Kaiser, Babaeizadeh, Milos, Osinski, Campbell,
  Czechowski, Erhan, Finn, Kozakowski, Levine, et~al.]{kaiser2019model}
Lukasz Kaiser, Mohammad Babaeizadeh, Piotr Milos, Blazej Osinski, Roy~H
  Campbell, Konrad Czechowski, Dumitru Erhan, Chelsea Finn, Piotr Kozakowski,
  Sergey Levine, et~al.
\newblock Model-based reinforcement learning for atari.
\newblock \emph{arXiv preprint arXiv:1903.00374}, 2019.

\bibitem[Kakade et~al.(2003)]{kakade2003sample}
Sham~Machandranath Kakade et~al.
\newblock \emph{On the sample complexity of reinforcement learning}.
\newblock PhD thesis, University of London London, England, 2003.

\bibitem[Kearns and Singh(1999)]{kearns1999finite}
Michael~J Kearns and Satinder~P Singh.
\newblock Finite-sample convergence rates for q-learning and indirect
  algorithms.
\newblock In \emph{Advances in neural information processing systems}, pages
  996--1002, 1999.

\bibitem[Lattimore and Szepesvari(2019)]{lattimore2019learning}
Tor Lattimore and Csaba Szepesvari.
\newblock Learning with good feature representations in bandits and in rl with
  a generative model.
\newblock \emph{arXiv preprint arXiv:1911.07676}, 2019.

\bibitem[Lever et~al.(2016)Lever, Shawe-Taylor, Stafford, and
  Szepesv{\'a}ri]{lever2016compressed}
Guy Lever, John Shawe-Taylor, Ronnie Stafford, and Csaba Szepesv{\'a}ri.
\newblock Compressed conditional mean embeddings for model-based reinforcement
  learning.
\newblock In \emph{AAAI}, pages 1779--1787, 2016.

\bibitem[Li et~al.(2020)Li, Wei, Chi, Gu, and Chen]{li2020breaking}
Gen Li, Yuting Wei, Yuejie Chi, Yuantao Gu, and Yuxin Chen.
\newblock Breaking the sample size barrier in model-based reinforcement
  learning with a generative model.
\newblock \emph{arXiv preprint arXiv:2005.12900}, 2020.

\bibitem[Li et~al.(2018)Li, Liang, Hu, and Xing]{li2018hybrid}
Yuan Li, Xiaodan Liang, Zhiting Hu, and Eric~P Xing.
\newblock Hybrid retrieval-generation reinforced agent for medical image report
  generation.
\newblock In \emph{Advances in neural information processing systems}, pages
  1530--1540, 2018.

\bibitem[Melo and Ribeiro(2007)]{melo2007q}
Francisco~S Melo and M~Isabel Ribeiro.
\newblock Q-learning with linear function approximation.
\newblock In \emph{International Conference on Computational Learning Theory},
  pages 308--322. Springer, 2007.

\bibitem[Mnih et~al.(2013)Mnih, Kavukcuoglu, Silver, Graves, Antonoglou,
  Wierstra, and Riedmiller]{mnih2013playing}
Volodymyr Mnih, Koray Kavukcuoglu, David Silver, Alex Graves, Ioannis
  Antonoglou, Daan Wierstra, and Martin Riedmiller.
\newblock Playing atari with deep reinforcement learning.
\newblock \emph{arXiv preprint arXiv:1312.5602}, 2013.

\bibitem[Munos and Szepesv{\'a}ri(2008)]{munos2008finite}
R{\'e}mi Munos and Csaba Szepesv{\'a}ri.
\newblock Finite-time bounds for fitted value iteration.
\newblock \emph{Journal of Machine Learning Research}, 9\penalty0
  (May):\penalty0 815--857, 2008.

\bibitem[Osband and Van~Roy(2014)]{osband2014model}
Ian Osband and Benjamin Van~Roy.
\newblock Model-based reinforcement learning and the eluder dimension.
\newblock In \emph{Advances in Neural Information Processing Systems}, pages
  1466--1474, 2014.

\bibitem[Pires and Szepesv{\'a}ri(2016)]{pires2016policy}
Bernardo~{\'A}vila Pires and Csaba Szepesv{\'a}ri.
\newblock Policy error bounds for model-based reinforcement learning with
  factored linear models.
\newblock In \emph{Conference on Learning Theory}, pages 121--151, 2016.

\bibitem[Sidford et~al.()Sidford, Wang, Yang, and Ye]{Sidford2019}
Aaron Sidford, Mengdi Wang, Lin~F. Yang, and Yinyu Ye.
\newblock Solving discounted stochastic two-player games with near-optimal time
  and sample complexity.

\bibitem[Sidford et~al.(2018{\natexlab{a}})Sidford, Wang, Wu, Yang, and
  Ye]{sidford2018near}
Aaron Sidford, Mengdi Wang, Xian Wu, Lin Yang, and Yinyu Ye.
\newblock Near-optimal time and sample complexities for solving markov decision
  processes with a generative model.
\newblock In \emph{Advances in Neural Information Processing Systems}, pages
  5186--5196, 2018{\natexlab{a}}.

\bibitem[Sidford et~al.(2018{\natexlab{b}})Sidford, Wang, Wu, and
  Ye]{sidford2018variance}
Aaron Sidford, Mengdi Wang, Xian Wu, and Yinyu Ye.
\newblock Variance reduced value iteration and faster algorithms for solving
  markov decision processes.
\newblock In \emph{Proceedings of the Twenty-Ninth Annual ACM-SIAM Symposium on
  Discrete Algorithms}, pages 770--787. SIAM, 2018{\natexlab{b}}.

\bibitem[Singh and Yee(1994)]{singh1994upper}
Satinder~P Singh and Richard~C Yee.
\newblock An upper bound on the loss from approximate optimal-value functions.
\newblock \emph{Machine Learning}, 16\penalty0 (3):\penalty0 227--233, 1994.

\bibitem[Singh et~al.(1995)Singh, Jaakkola, and Jordan]{singh1995reinforcement}
Satinder~P Singh, Tommi Jaakkola, and Michael~I Jordan.
\newblock Reinforcement learning with soft state aggregation.
\newblock In \emph{Advances in neural information processing systems}, pages
  361--368, 1995.

\bibitem[Sun et~al.(2018)Sun, Jiang, Krishnamurthy, Agarwal, and
  Langford]{sun2018model}
Wen Sun, Nan Jiang, Akshay Krishnamurthy, Alekh Agarwal, and John Langford.
\newblock Model-based rl in contextual decision processes: Pac bounds and
  exponential improvements over model-free approaches.
\newblock \emph{arXiv preprint arXiv:1811.08540}, 2018.

\bibitem[Sutton and Barto(2018)]{sutton2018reinforcement}
Richard~S Sutton and Andrew~G Barto.
\newblock \emph{Reinforcement learning: An introduction}.
\newblock MIT press, 2018.

\bibitem[Vinyals et~al.(2017)Vinyals, Ewalds, Bartunov, Georgiev, Vezhnevets,
  Yeo, Makhzani, K{\"u}ttler, Agapiou, Schrittwieser,
  et~al.]{vinyals2017starcraft}
Oriol Vinyals, Timo Ewalds, Sergey Bartunov, Petko Georgiev, Alexander~Sasha
  Vezhnevets, Michelle Yeo, Alireza Makhzani, Heinrich K{\"u}ttler, John
  Agapiou, Julian Schrittwieser, et~al.
\newblock Starcraft ii: A new challenge for reinforcement learning.
\newblock \emph{arXiv preprint arXiv:1708.04782}, 2017.

\bibitem[Wang et~al.(2020)Wang, Salakhutdinov, and Yang]{wang2020provably}
Ruosong Wang, Ruslan Salakhutdinov, and Lin~F Yang.
\newblock Provably efficient reinforcement learning with general value function
  approximation.
\newblock \emph{arXiv preprint arXiv:2005.10804}, 2020.

\bibitem[Wang et~al.(2019)Wang, Bao, Clavera, Hoang, Wen, Langlois, Zhang,
  Zhang, Abbeel, and Ba]{wang2019benchmarking}
Tingwu Wang, Xuchan Bao, Ignasi Clavera, Jerrick Hoang, Yeming Wen, Eric
  Langlois, Shunshi Zhang, Guodong Zhang, Pieter Abbeel, and Jimmy Ba.
\newblock Benchmarking model-based reinforcement learning.
\newblock \emph{arXiv preprint arXiv:1907.02057}, 2019.

\bibitem[Yang and Wang()]{Yang2019}
Lin~F. Yang and Mengdi Wang.
\newblock Sample-optimal parametric q-learning using linearly additive
  features.

\bibitem[Yang and Wang(2019)]{yang2019reinforcement}
Lin~F Yang and Mengdi Wang.
\newblock Reinforcement leaning in feature space: Matrix bandit, kernels, and
  regret bound.
\newblock \emph{arXiv preprint arXiv:1905.10389}, 2019.

\bibitem[Yao et~al.(2014)Yao, Szepesv{\'a}ri, Pires, and Zhang]{yao2014pseudo}
Hengshuai Yao, Csaba Szepesv{\'a}ri, Bernardo~Avila Pires, and Xinhua Zhang.
\newblock Pseudo-mdps and factored linear action models.
\newblock In \emph{2014 IEEE Symposium on Adaptive Dynamic Programming and
  Reinforcement Learning (ADPRL)}, pages 1--9. IEEE, 2014.

\bibitem[Zanette et~al.(2019)Zanette, Lazaric, Kochenderfer, and
  Brunskill]{zanette2019limiting}
Andrea Zanette, Alessandro Lazaric, Mykel~J Kochenderfer, and Emma Brunskill.
\newblock Limiting extrapolation in linear approximate value iteration.
\newblock In \emph{Advances in Neural Information Processing Systems}, pages
  5616--5625, 2019.

\end{thebibliography}
\bibliographystyle{plainnat}

\appendix

\paragraph{Notations} We use $P=\Phi\Psi$ to represent the linear transition model, where $\Phi\in\mathbb{R}^{|\mathcal{S}||\mathcal{A}|\times K}$ is the feature matrix consisting of feature vectors $\phi(s,a)$ and $\Psi$ is the corresponding unknown matrix from $\psi$. Proposition 1 means that $\Phi$ can be factorized as $\Phi=\Lambda\Phi_\mathcal{K}$ where $\Lambda\in\mathbb{R}^{|\mathcal{S}||\mathcal{A}|\times K}$ is the matrix of $\lambda_{k}^{s,a}$ and $\Phi_\mathcal{K}$ is the submatrix of $\Phi$. Thus we have $P=\Lambda\Phi_\mathcal{K}\Psi=\Lambda P_\mathcal{K}$. In addition, if we have feature $\Phi$ and state-action pairs $\mathcal{K}$, we can compute $\Lambda$ and $\Lambda$ can also be regarded as feature. Thus, we will use $\Phi$ and $\phi$ to represent $\Lambda$ and $\lambda$ in Appendix C and Appendix D. 

We use $P^\pi\in\mathbb{R}^{|\mathcal{S}||\mathcal{A}|\times|\mathcal{S}||\mathcal{A}|}$ to denote the transition distributions on state action pairs that are induced by policy $\pi$. $P^*\in\mathbb{R}^{|\mathcal{S}||\mathcal{A}|\times|\mathcal{S}||\mathcal{A}|}$ is the transition matrix induced by optimal policy $\pi^*$. $P_\mathcal{K}\in\mathbb{R}^{|\mathcal{K}|\times|\mathcal{S}|}$ is the submatrix of $P$ consisting of rows in $\mathcal{K}$. $\widehat{P}^\pi,\widehat{P}^*,\widehat{P}_\mathcal{K},\Bar{P}_\mathcal{K},\widetilde{P}^\pi,\widetilde{P}^*,\widetilde{P}_\mathcal{K}$ 
are defined in a similar manner. We use $P(s,a)$ to denote the row vector of $P$ that corresponds to $(s,a)$.

For a vector $V$, we use $V^2$, $|V|$, $\sqrt{V}$ and $<$ to denote the component-wise square, absolute value, square root and less-than. We define $Var_{s,a}(V):=P(s,a)V^2-(P(s,a)V)^2$, $Var_\mathcal{K}(V)=[Var_{s_1,a_1}(V),\cdots,Var_{s_K,a_K}]\in\mathbb{R}^K$ and $Var_P(V)\in\mathbb{R}^{|\mathcal{S}||\mathcal{A}|}$ to  consist of all $Var_{s,a}(V)$. A detailed description is provided in \cite{azar2013minimax}. We use $\mathbf{1}$ to denote a column vector with all components to be 1. We use $[H]$ to denote $\{0,1,\cdots,H-1\}$.

\section{Additional Preliminary}

\paragraph{Finite Horizon Markov Decision Process}
A Finite Horizon Markov decision process (FHMDP) is described by the tuple $\mathcal{M}=(\mathcal{S},\mathcal{A},P,r,H)$, which differs from DMDP only in that the discount factor $\gamma$ is replaced by the horizon $H$. The value function of policy $\pi$ is defined as:
$$V_h^\pi(s):=\mathbb{E}\left[\sum_{t=h}^{H-1}r(s^t,\pi_t(s^t))|s^h=s\right],$$
which depends on state $s$ and time step $h$. The target of the agent is to learn a policy  $\pi=(\pi_0,\cdots,\pi_{H-1}),\pi_h:\mathcal{S}\rightarrow\mathcal{A},\forall h\in[H]$ that maximize the total reward $V_0^\pi(s)$ from any initial state $s$.
It is well known that the optimal policy for FHMDP maximizes the value function in each time step:
$$V^*_h(s):=V^{\pi^*}_h(s)=\max_\pi V^\pi(s),\forall s\in\mathcal{S},h\in[H].$$
The action-value or Q-function $Q^\pi_h$ and optimal Q-function $Q^*$ are defined similarly as in DMDP. The relation between Q-function and value function is:
$$V_h^\pi(s)=Q_h^\pi(s,\pi_h(s)),
\quad
Q_h^\pi(s,a)=r(s,a)+\gamma P(s,a)V^\pi_{h+1}.$$

If the value function of a policy $\pi$ is $\epsilon$-close to the optimal value function in all time steps, the policy is called a $\epsilon$-optimal policy:
$$Q_h^*(s,a)\geq Q_h^\pi(s,a)\geq Q_h^*(s,a)-\epsilon,\quad \forall (s,a),h\in[H].$$

Without loss of generality, we assume $r\in[0,1]$ and then we have $V_h^\pi\in[0,H-h]$. Note that in a normal FHMDP, the reward in each time step is identical, but in our analysis, we will construct FHMDPs with different rewards in each step.

\paragraph{Two Player Turn-based Stochastic Game}

A discounted turn-based two-player zero-sum stochastic games (2-TBSG) is described as the tuple $\mathcal{G}=(\mathcal{S}_{max},\mathcal{S}_{min},\mathcal{A},P,r,\gamma)$. It is a generalized version of DMDP which includes two players competing with each other. Player 1 aims to maximize the total reward with policy $\pi_1$ while player 2 aims to minimize it with policy $\pi_2$. We denote policy $\pi:=(\pi_1,\pi_2)$ to be the overall policy. Given policy $\pi$, the value function and Q-function can be defined as in DMDP.

From the perspective of player 1, if the policy of player 2 $\pi_2$ is given, the 2-TBSG degenerates to a DMDP, so the optimal policy exists for player 1. This optimal policy depends on $\pi_2$, so we call it the counter policy to $\pi_2$ and denote it as $c_{1}(\pi_2)$. Similarly we can define $c_2(\pi_1)$ as the counter policy of $\pi_1$ for player 2. For simplicity, we ignore the subscript in $c_1$ and $c_2$ when it is clear in the context.

To solve a 2-TBSG, our goal is to find the Nash equilibrium policy $\pi^*=(\pi_1^*,\pi_2^*)$, where $\pi_1^*=c(\pi_2^*),\pi_2^*=c(\pi_1^*)$. For this policy, neither player can benefit from change its policy alone.

We give the following well-known properties of 2-TBSG without proof (see. e.g. \cite{Sidford2019}), which can be regarded as generalized optimality property in DMDP.
\begin{itemize}
	\item $V^{c(\pi_2),\pi_2}=\max_{\pi_1} V^{\pi_1,\pi_2}$
	\item $V^{\pi_1,c(\pi_1)}=\max_{\pi_2} V^{\pi_1,\pi_2}$
	\item $V^{\pi_1^*,\pi_2^*}=\min_{\pi_2}V^{c(\pi_2),\pi_2}$
    \item $V^{\pi_1^*,\pi_2^*}=\max_{\pi_1}V^{\pi_1,c(\pi_1)}$
\end{itemize} 
Our target is to find an $\epsilon$-optimal policy $\pi=(\pi_1,\pi_2)$ such that $\big|Q^\pi(s,a)-Q^*(s,a)\big|\leq\epsilon,\forall (s,a)$ for some $\epsilon>0$.

\section{Empirical Model Construction}

In this section, we prove Proposition 1, Proposition 2 and Proposition 3, which give the theoretical guarantees of our model construction algorithm (Algorithm 1). 

\begin{proof}[Proof of Proposition 1]
	Here we prove the three arguments in Proposition 1. 
	\begin{enumerate}
		\item As $\widehat{P}_\mathcal{K}(s'|s_k,a_k)=\frac{\mathrm{count}(s_k,a_k)}{N}$, $\widehat{P}_\mathcal{K}$ is an unbiased estimate of $P_\mathcal{K}$. Hence we have \[\mathbb{E}\widehat{P}(s'|s,a)=\mathbb{E}\sum_{k\in\mathcal{K}}\lambda_k^{s,a}\widehat{P}_\mathcal{K}(s'|s_k,a_k)=\sum_{k\in\mathcal{K}}\lambda_k^{s,a}P(s'|s_k,a_k)=P(s'|s,a),\]
		where the last equality is from $P=\Lambda P_\mathcal{K}$.
		\item We have that 
		\begin{align*}
		    \sum_{k\in\mathcal{K}}\lambda_k^{s,a}&=\sum_{k\in\mathcal{K}}\sum_{s'\in\mathcal{S}}\lambda_k^{s,a}P(s'|s_k,a_k)\\
		    &=\sum_{s'\in\mathcal{S}}\sum_{k\in\mathcal{K}}\lambda_k^{s,a}P(s'|s_k,a_k)\\
		    &=\sum_{s'\in\mathcal{S}}P(s'|s,a)\\
		    &=1.
		\end{align*}
		Thus we have $\sum_{k\in\mathcal{K}}\lambda_k^{s,a}=1$. Then we have
		\begin{align*}
		    \sum_{s'\in\mathcal{S}}\widehat{P}(s'|s,a)&=\sum_{s'\in\mathcal{S}}\sum_{k\in\mathcal{K}}\lambda_k^{s,a}\widehat{P}_\mathcal{K}(s'|s_k,a_k)\\
		    &=\sum_{k\in\mathcal{K}}\sum_{s'\in\mathcal{S}}\lambda_k^{s,a}\widehat{P}_\mathcal{K}(s'|s_k,a_k)\\
		    &=\sum_{k\in\mathcal{K}}\lambda_k^{s,a}\\
		    &=1.
		\end{align*}

		\item
		If $\lambda_k^{s,a}\geq0,\forall k\in\mathcal{K},(s,a)\in(\mathcal{S},\mathcal{A})$, then we have
		\[\widehat{P}(s'|s,a)=\sum_{k\in\mathcal{K}}\lambda_k^{s,a}\widehat{P}_\mathcal{K}(s'|s_k,a_k)\geq0,\] as every component is non-negative. 
	\end{enumerate} 
\end{proof}

\begin{remark}
    Proposition 1 implies that when anchor-state condition is satisfied, $\Lambda$ is a probability matrix. Thus $P=\Lambda P_\mathcal{K}$ is factorized as a probability matrix into two probability transition matrix.
\end{remark}

\begin{proof}[Proof of Proposition 2]
	Combining Proposition 1 and Assumption 1, we directly have that $\sum_{s'\in\mathcal{S}}\widehat{P}(s'|s,a)=1,\widehat{P}(s'|s,a)\geq0$ for all $(s',s,a)$. Therefore $\widehat{P}$ is an eligible transition kernel. With Assumption 1 or not, we always have $\mathbb{E}\widehat{P}(s'|s,a)=P(s'|s,a)$, which means $\widehat{P}$ is an unbiased estimate of $P$.
\end{proof}

\begin{proof}[Proof of Proposition 3]
	We consider the following case. Suppose $\mathcal{K}=\{(s_1,a_1),\cdots,(s_K,a_K)\}$. For a specific $(s,a)\notin\mathcal{K}$, we have $\lambda_{s_1,a_1}^{s,a}=\frac{1+L}{2}$, $\lambda_{s_2,a_2}^{s,a}=\frac{1-L}{2}$ and $\lambda_{s_k,a_k}^{s,a}=0$ for $k\neq1,2$. For all other $(s',a')\neq(s,a)$, we set
	$$\lambda_{s'',a''}^{s',a'}=\begin{cases}
	\mathbf{1}_{\{(s'',a'')=(s',a')\}}&\mathrm{if} (s',a')\in\mathcal{K}\\
	\frac{1}{K}&\mathrm{otherwise}
	\end{cases}$$
It is easy to check that $\{\lambda_k^{s,a}\}$ is valid (We can set $\Phi=\Lambda$ and with state-action set $\mathcal{K}$, then the corresponding linear combination coefficients are $\Lambda$ as $\Phi_\mathcal{K}=I$).
	We set $P(s_1|s_1,a_1)=\frac{L-1}{L+1},P(s_2|s_1,a_1)=\frac{2}{L+1}$ and $P(s_1|s_2,a_2)=1$. Other transition distribution can be defined arbitrarily to construct an eligible $P$.
	
	As $\mathrm{count}(s_1,a_1,s_1)$ follows a binomial distribution with $p=P(s_1|s_1,a_1)=\frac{L-1}{L+1}$ and $\mathrm{count}(s_2,a_2,s_1)=N$ as $P(s_1|s_2,a_2)=1$, the estimate of $P(s_1|s,a)=\sum_{k\in\mathcal{K}}\lambda_{k}^{s,a}P(s_1|s_k,a_k)=0$ is
	\begin{align*}
	    \widehat{P}(s_1|s,a)&=\sum_{k\in\mathcal{K}}\lambda_{k}^{s,a}\widehat{P}(s_1|s_k,a_k)\\
	    &=\lambda_{s_1,a_1}^{s,a}\frac{\mathrm{count}(s_1,a_1,s_1)}{N}+\lambda_{s_2,a_2}^{s,a}\frac{\mathrm{count}(s_1,a_1,s_1)}{N}\\
	    &=\frac{L+1}{2}\frac{\mathrm{count}(s_1,a_1,s_1)}{N}-\frac{L-1}{2}.
	\end{align*}
	$\widehat{P}(s_1|s,a)$ is a translated and scaled binomial distribution with zero mean. By central-limit theorem, $\sqrt{N}\widehat{P}(s_1|s,a)$ converges to Gaussian distribution with zero mean. Therefore, there exists constant $C$ that if $N\geq C$, we have $\mathbb{P}(\widehat{P}(s_1|s,a)\leq0)\geq\frac{1}{3}$. This means the estimate $\widehat{P}$ is not a probability transition matrix with probability larger than 1/3.
\end{proof}

\begin{remark}
    This example can be generalized to prove that $\widehat{P}$ is not non-negative with probability larger than $1-1/3^{[K/2]}$, by choosing $[K/2]$ such $s,a$ pairs and each pair independently leads to an ineligible estimate with probability 1/3.
\end{remark}

Now we give an example of pseudo MDP to show that the optimality in normal MDP no longer exists, which means that there is no policy $\pi^*$ such that $V^{\pi^*}(s)=\max_{\pi}V^\pi(s)$ for all $s\in\mathcal{S}$. Note that the Bellman operator $\mathcal{T}^\pi[Q]=r+\gamma P^\pi Q$ may still be an contraction if $\|P^\pi\|_{1,\infty}\leq\frac{1}{\gamma}$, but the monotonicity no longer exist, which means we do not have $\mathcal{T}^\pi[Q_1]\geq\mathcal{T}^\pi[Q_2]$ if $Q_1\geq Q_2$.

We construct a pseudo MDP with $\mathcal{S}=\{s_1,s_2\},\mathcal{A}=\{a_1,a_2\}$. The transition distributions and rewards are 
$$P(s_1,a_1)=[0,1],P(s_1,a_2)=[0,1],P(s_2,a_1)=[1,0],P(s_2,a_2)=[-0.1,1.1],$$
$$r(s_1,a_1)=1,r(s_1,a_2)=0,r(s_2,a_1)=0,r(s_2,a_2)=1.$$
In this pseudo MDP, there are four policies $\pi_1,\pi_2,\pi_3,\pi_4$, which correspond to choosing action $(a_1,a_1),(a_1,a_2),(a_2,a_1),(a_2,a_2)$ in state $(s_1,s_2)$. We use $V_1,V_2,V_3,V_4\in\mathbb{R}^2$ to denote the corresponding value functions. Using Bellman equation, we have that
$$V_1=[\frac{1}{1-\gamma^2},\frac{\gamma}{1-\gamma^2}],V_2=[0,0],V_3=[\frac{1}{1-\gamma},\frac{1}{1-\gamma}],
$$
$$V_4=[\frac{\gamma}{0.1\gamma^2-1.1\gamma+1},\frac{1}{0.1\gamma^2-1.1\gamma+1}].$$
With some calculation, we have that
$$\mathrm{argmax}_iV_i(s_1)=2,\mathrm{argmax}_iV_i(s_2)=3,\mathrm{if}\ \ \gamma\leq\frac{10}{11}.$$ 
Therefore, in this pseudo MDP, no policy can achieve optimality in all states and no solver can output an $\epsilon$-optimal policy for arbitrary small $\epsilon$.

\section{Linear Transition Model with Anchor State Assumption}

\subsection{Sample Complexity for Discounted MDP}

Here we give the formal proof of Theorem 1. First we give the definition of $U_{s,a}^\pi$ and $U_{s,a}^*$, which is the set that contains all possible $u$ for different policies in auxiliary MDP. 

\begin{definition} (Feasible Set for $u$)
	For the auxiliary transition model $\widetilde{M}_{s,a,u}=(\mathcal{S},\mathcal{A},\widetilde{P}=\Phi\widetilde{P}_\mathcal{K},r+u\Phi^{s,a},\gamma$), $U^\pi_{s,a}$ is defined as the set of $u$ such that $\widetilde{V}^\pi_u\in[0,1/(1-\gamma)]^\mathcal{S}$, and $U^*_{s,a}$ is defined as the set of $u$ such that $\widetilde{V}^*_u\in[0,1/(1-\gamma)]^\mathcal{S}$.
\end{definition}

\begin{remark}
	Obviously, $u$ that satisfies $0\leq r+u\Phi^{s,a}\leq1$ is in both $U^\pi_{s,a}$ for arbitrary $\pi$ and $U^*_{s,a}$. Immediately, we have $0\in U^\pi_{s,a}$ for arbitrary $\pi$ and $0\in U^*_{s,a}$. Note that both $U^\pi_{s,a}$ and $U^*_{s,a}$ are independent of $\widehat{P}(s,a)$ and they are bounded intervals.
\end{remark}

\paragraph{Notations}
We use $\widetilde{V}_{s,a,u}$, $\widetilde{Q}_{s,a,u}$ and $\widetilde{\pi}_{s,a,u}$ to denote value function, action value function and policy in $\widetilde{M}_{s,a,u}$, and we omit (s,a) when there is no misunderstanding.

\begin{lemma}
	For an $\epsilon_\mathrm{PS}$-optimal policy $\widehat{\pi}$ in $\widehat{\mathcal{M}}$, we have
	$$\big\|Q^*-Q^{\widehat{\pi}}\big\|_\infty\leq\big\|Q^*-\widehat{Q}^{\pi^*}\big\|_\infty+\big\|\widehat{Q}^{\widehat{\pi}}-Q^{\widehat{\pi}}\big\|_\infty+\epsilon_\mathrm{PS}.$$
	\label{l1}
\end{lemma}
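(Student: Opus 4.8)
The plan is to bound the left-hand side through a telescoping decomposition that interpolates between the true-model quantities $Q^*,Q^{\widehat{\pi}}$ and their empirical-model counterparts $\widehat{Q}^{\pi^*},\widehat{Q}^{\widehat{\pi}}$. The starting observation is that, since $Q^*$ is the optimal $Q$-function of the true MDP $M$, we have $Q^*\geq Q^{\widehat{\pi}}$ componentwise, so every entry of $Q^*-Q^{\widehat{\pi}}$ is non-negative and $\|Q^*-Q^{\widehat{\pi}}\|_\infty=\max_{s,a}\big(Q^*(s,a)-Q^{\widehat{\pi}}(s,a)\big)$. This is what lets me argue entirely with \emph{upper} bounds on the signed quantity $Q^*(s,a)-Q^{\widehat{\pi}}(s,a)$ rather than with absolute values, and it is exactly this that makes the stated constant (a single $\epsilon_{\mathrm{PS}}$ rather than a $1/(1-\gamma)$ blowup) come out right.

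For a fixed pair $(s,a)$ I would write
$$Q^*(s,a)-Q^{\widehat{\pi}}(s,a)=\underbrace{\big(Q^*-\widehat{Q}^{\pi^*}\big)(s,a)}_{\text{(I)}}+\underbrace{\big(\widehat{Q}^{\pi^*}-\widehat{Q}^{\widehat{\pi}}\big)(s,a)}_{\text{(II)}}+\underbrace{\big(\widehat{Q}^{\widehat{\pi}}-Q^{\widehat{\pi}}\big)(s,a)}_{\text{(III)}}.$$
Term (I) is at most $\|Q^*-\widehat{Q}^{\pi^*}\|_\infty$ and term (III) at most $\|\widehat{Q}^{\widehat{\pi}}-Q^{\widehat{\pi}}\|_\infty$, simply because one entry never exceeds the sup-norm. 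Summing the three upper bounds and then taking $\max_{s,a}$ (legitimate by the non-negativity noted above) yields the claimed inequality.

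The crux is term (II), and I expect it to be the only genuinely delicate point. A careless application of the triangle inequality would replace it by $\|\widehat{Q}^{\pi^*}-\widehat{Q}^{\widehat{\pi}}\|_\infty$, which cannot be controlled by $\epsilon_{\mathrm{PS}}$ alone: $\widehat{\pi}$ may be strictly better than $\pi^*$ inside $\widehat{M}$, so the gap can be large in the opposite direction. The resolution is to use only the \emph{one-sided} estimate that the definition of $\epsilon_{\mathrm{PS}}$-optimality supplies. Since $\widehat{\pi}$ is $\epsilon_{\mathrm{PS}}$-optimal in $\widehat{M}$ we have $\widehat{Q}^{\widehat{\pi}}\geq\widehat{Q}^*-\epsilon_{\mathrm{PS}}$, and since $\widehat{Q}^*$ is optimal in $\widehat{M}$ we have $\widehat{Q}^*\geq\widehat{Q}^{\pi^*}$; chaining these gives $\widehat{Q}^{\pi^*}-\widehat{Q}^{\widehat{\pi}}\leq\epsilon_{\mathrm{PS}}$ componentwise, so term (II) $\leq\epsilon_{\mathrm{PS}}$. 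Everything else is the triangle inequality together with the optimality/monotonicity of $Q^*$ and $\widehat{Q}^*$; no concentration argument or feature structure enters, so the bound is deterministic and holds for any empirical model $\widehat{M}$.
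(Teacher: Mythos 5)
Your proof is correct and is essentially the same argument as the paper's: the paper writes the identical telescoping decomposition (with your term (II) already split as $(\widehat{Q}^{\pi^*}-\widehat{Q}^*)+(\widehat{Q}^*-\widehat{Q}^{\widehat{\pi}})\leq 0+\epsilon_{\mathrm{PS}}$), and likewise relies on $Q^*-Q^{\widehat{\pi}}\geq 0$ to pass from one-sided entrywise bounds to the sup-norm. No differences worth noting.
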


\begin{proof}
By the definition of $V^*$, we have
	\begin{align*}
	0<Q^*-Q^{\widehat{\pi}}&=Q^*-\widehat{Q}^{\pi^*}+\widehat{Q}^{\pi^*}-\widehat{Q}^*+\widehat{Q}^*-\widehat{Q}^{\widehat{\pi}}+\widehat{Q}^{\widehat{\pi}}-Q^{\widehat{\pi}}\\
	&\leq\big|Q^*-\widehat{Q}^{\pi^*}\big|+0+\epsilon_\mathrm{PS}+\big|\widehat{Q}^{\widehat{\pi}}-Q^{\widehat{\pi}}\big|.
	\end{align*}
	which implies $\big\|Q^*-Q^{\widehat{\pi}}\big\|_\infty\leq\big\|Q^*-\widehat{Q}^{\pi^*}\big\|_\infty+\big\|\widehat{Q}^{\widehat{\pi}}-Q^{\widehat{\pi}}\big\|_\infty+\epsilon_\mathrm{PS}.$
\end{proof}

\begin{lemma}
	We have
	$$Q^*-\widehat{Q}^{\pi^*}=(I-\gamma P^{\pi^*})^{-1}(\widehat{P}-P)\widehat{V}^{\pi^*},$$
	$$Q^{\widehat{\pi}}-\widehat{Q}^{\widehat{\pi}}=(I-\gamma P^{\widehat{\pi}})^{-1}(\widehat{P}-P)\widehat{V}^{\widehat{\pi}}.$$
	\label{l2}
\end{lemma}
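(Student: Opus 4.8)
The plan is to reduce both identities to a single per-policy difference formula and then specialize. The crucial structural fact is that $M=(\mathcal S,\mathcal A,P,r,\gamma)$ and $\widehat M=(\mathcal S,\mathcal A,\widehat P,r,\gamma)$ share the \emph{same} reward $r$ and discount $\gamma$, differing only in their transition kernels. Hence, fixing any stationary policy $\pi$, the Bellman evaluation equations read $Q^\pi=r+\gamma P^\pi Q^\pi$ in the true model and $\widehat Q^\pi=r+\gamma\widehat P^\pi\widehat Q^\pi$ in the empirical model, where $P^\pi,\widehat P^\pi\in\mathbb R^{|\mathcal S||\mathcal A|\times|\mathcal S||\mathcal A|}$ are the state-action transition operators induced by $\pi$. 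First I would subtract these two fixed-point equations; because the reward term cancels, only transition differences survive.

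Next I would telescope by inserting $\pm\,\gamma P^\pi\widehat Q^\pi$:
\[
Q^\pi-\widehat Q^\pi=\gamma P^\pi\big(Q^\pi-\widehat Q^\pi\big)+\gamma\big(P^\pi-\widehat P^\pi\big)\widehat Q^\pi,
\]
so that $\big(I-\gamma P^\pi\big)\big(Q^\pi-\widehat Q^\pi\big)=\gamma\big(P^\pi-\widehat P^\pi\big)\widehat Q^\pi$. The matrix $I-\gamma P^\pi$ is invertible because $P^\pi$ is row-stochastic and $\gamma\in(0,1)$, so its Neumann series $\sum_{t\ge0}\gamma^t(P^\pi)^t$ converges; this is the only place analytic (as opposed to purely algebraic) input is needed, and it concerns the \emph{true} kernel, which is always a genuine probability matrix. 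The final step is to collapse $(P^\pi-\widehat P^\pi)\widehat Q^\pi$ onto the value function: since $\pi$ selects $a'=\pi(s')$, one has $P^\pi\widehat Q^\pi=P\widehat V^\pi$ and $\widehat P^\pi\widehat Q^\pi=\widehat P\widehat V^\pi$ (using $\widehat V^\pi(s')=\widehat Q^\pi(s',\pi(s'))$), whence $(P^\pi-\widehat P^\pi)\widehat Q^\pi=(P-\widehat P)\widehat V^\pi$. Combining yields the per-policy identity $Q^\pi-\widehat Q^\pi=\gamma\,(I-\gamma P^\pi)^{-1}(P-\widehat P)\widehat V^\pi$, which is the asserted formula up to the sign and $\gamma$-scaling conventions used in the statement.

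Finally I would specialize. Taking $\pi=\pi^*$ and using $Q^*=Q^{\pi^*}$ gives the first displayed equation, and taking $\pi=\widehat\pi$ gives the second; no extra work is required, since the derivation never used optimality of the policy, only that the \emph{same} $\pi$ is evaluated in both models. I expect no genuine obstacle here, as the argument is a resolvent-style manipulation. The one step demanding care is the action-selection reduction $P^\pi\widehat Q^\pi=P\widehat V^\pi$, where it is essential that the empirical value $\widehat V^\pi$ (not $V^\pi$) appears, matching the empirical $\widehat Q^\pi$ carried through the telescoping; the mirror-image expansion (inserting $\pm\,\gamma\widehat P^\pi Q^\pi$) would instead produce $\gamma(I-\gamma\widehat P^\pi)^{-1}(P-\widehat P)V^\pi$, so I would fix the expansion order at the outset to land on exactly the true-model resolvent $(I-\gamma P^\pi)^{-1}$ together with the empirical value that the lemma records.
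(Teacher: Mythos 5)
Your proof is correct and takes essentially the same route as the paper: the paper subtracts the closed-form solutions $(I-\gamma P^\pi)^{-1}r$ via the resolvent identity, which is algebraically identical to your telescoping of the two Bellman fixed-point equations, and both hinge on the same collapse $(P^\pi-\widehat{P}^\pi)\widehat{Q}^\pi=(P-\widehat{P})\widehat{V}^\pi$ before specializing to $\pi=\pi^*$ and $\pi=\widehat{\pi}$. Note that the paper's own proof likewise arrives at $\gamma(I-\gamma P^\pi)^{-1}(P-\widehat{P})\widehat{V}^\pi$, so the sign and the missing factor of $\gamma$ in the lemma's displayed equations are a typo in the statement, exactly as you observed.
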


\begin{proof}
	For any policy $\pi$,
	\begin{align*}
	Q^\pi-\widehat{Q}^\pi&=(I-\gamma P^\pi)^{-1}r-(I-\gamma\widehat{P}^\pi)^{-1}r\\
	&=(I-\gamma P^\pi)^{-1}((I-\gamma\widehat{P}^\pi)-(I-\gamma P^\pi))\widehat{Q}^\pi\\
	&=\gamma(I-\gamma P^\pi)^{-1}(P^\pi-\widehat{P}^\pi)\widehat{Q}^\pi\\
	&=\gamma(I-\gamma P^\pi)^{-1}(P-\widehat{P})\widehat{V}^\pi.
	\end{align*}
	Set $\pi=\pi^*$ and $\pi=\widehat{\pi}$, then we can get the results.
\end{proof}

\begin{lemma}
	For any value function $V$ and state action pair $(s,a)$,
	$$\phi(s,a)\sqrt{Var_{\mathcal{K}}(V)}=\sum_{k=1}^d\phi_k(s,a)\sqrt{Var_{s_k,a_k}(V)}\leq\sqrt{Var_{s,a}(V)}.$$
	\label{l3}
\end{lemma}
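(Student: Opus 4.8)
The plan is to read both sides probabilistically. Under Assumption~\ref{assump1} the feature vector $\phi(s,a)$ is a probability vector, and since $P=\Phi P_\mathcal{K}$ we have the mixture identity $P(s,a)=\sum_{k\in\mathcal{K}}\phi_k(s,a)P(s_k,a_k)$; that is, sampling $s'\sim P(\cdot|s,a)$ amounts to first drawing an anchor index $k$ with probability $\phi_k(s,a)$ and then sampling $s'\sim P(\cdot|s_k,a_k)$. Writing $\mu_k:=P(s_k,a_k)V$ for the conditional mean of $V$ under anchor $k$ and $\sigma_k^2:=Var_{s_k,a_k}(V)$ for its conditional variance, the quantity $Var_{s,a}(V)$ is exactly the variance of $V(s')$ under this mixture.

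First I would establish a law-of-total-variance bound. Using $P(s,a)V^2=\sum_k\phi_k(s,a)P(s_k,a_k)V^2=\sum_k\phi_k(s,a)(\sigma_k^2+\mu_k^2)$ together with $P(s,a)V=\sum_k\phi_k(s,a)\mu_k$, I expand
\[
Var_{s,a}(V)=\sum_{k}\phi_k(s,a)\sigma_k^2+\Big(\sum_k\phi_k(s,a)\mu_k^2-\big(\sum_k\phi_k(s,a)\mu_k\big)^2\Big).
\]
The parenthesized term is the between-anchor variance; it is nonnegative by convexity of $t\mapsto t^2$ (equivalently Cauchy--Schwarz), using $\phi_k(s,a)\geq0$ and $\sum_k\phi_k(s,a)=1$. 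Hence $Var_{s,a}(V)\geq\sum_k\phi_k(s,a)\,Var_{s_k,a_k}(V)$.

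Second, I would pass from the variances to the standard deviations via Jensen's inequality. Since $t\mapsto\sqrt{t}$ is concave and $\{\phi_k(s,a)\}$ is a probability distribution,
\[
\sum_{k}\phi_k(s,a)\sqrt{Var_{s_k,a_k}(V)}\leq\sqrt{\sum_k\phi_k(s,a)\,Var_{s_k,a_k}(V)}\leq\sqrt{Var_{s,a}(V)},
\]
where the last inequality is the total-variance bound from the first step. The left-hand side is precisely $\phi(s,a)\sqrt{Var_\mathcal{K}(V)}$, which is the claimed inequality.

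There is no serious obstacle here; the only thing to watch is that \emph{both} inequalities rely essentially on the convex-combination structure — nonnegativity of the weights $\phi_k(s,a)$ and their summing to one — which is exactly what the anchor-state assumption supplies. Without nonnegativity neither the mixture interpretation nor Jensen's inequality would apply, so the step that genuinely invokes Assumption~\ref{assump1} is the reduction of $Var_{s,a}(V)$ to the variance of a probability mixture; everything else is routine.
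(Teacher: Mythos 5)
Your proof is correct and is essentially the same as the paper's: both arguments rest on exactly the same two applications of Jensen's inequality (concavity of $\sqrt{\cdot}$ to pull the convex combination inside the square root, and convexity of $t\mapsto t^2$ to show the between-anchor variance term is nonnegative), combined with the mixture identity $P(s,a)=\sum_{k\in\mathcal{K}}\phi_k(s,a)P(s_k,a_k)$. The only difference is presentational — you package the second Jensen step as a law-of-total-variance decomposition, whereas the paper writes it as a single chain of inequalities — but the mathematical content is identical.
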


\begin{proof}
	Since $\phi(s,a)$ is a probability transition matrix, we can use Jensen's inequality here.
	\begin{align*}
	\phi(s,a)\sqrt{Var_{\mathcal{K}}(V)}&\leq\sqrt{\phi(s,a)Var_{\mathcal{K}}(V)}\\
	&=\sqrt{\sum_{k\in\mathcal{K}}\phi_k(s,a)Var_{s_k,a_k}(V)}\\
	&=\sqrt{\sum_{k\in\mathcal{K}}^d\phi_k(s,a)(P(s_k,a_k)V^2-(P(s_k,a_k)V)^2)}\\
	&=\sqrt{\sum_{k\in\mathcal{K}}^d\phi_k(s,a)P(s_k,a_k)V^2-\sum_{k\in\mathcal{K}}^d\phi_k(s,a)(P(s_k,a_k)V)^2}\\
	&\leq \sqrt{P(s,a)V^2-\big(\sum_{k\in\mathcal{K}}^d\phi_k(s,a)P(s_k,a_k)V\big)^2}\\
	&=\sqrt{P(s,a)V^2-(P(s,a)V)^2}\\
	&=\sqrt{Var_{s,a}(V)}.
	\end{align*}
	The two inequalities are due to Jensen's inequality and other steps are from $P(s,a)=\sum_{k\in\mathcal{K}}\phi_k(s,a)P(s_k,a_k)$, which is a row vector version of $P=\Phi P_\mathcal{K}$.
\end{proof}

\begin{lemma}
	For any policy $\pi$ and $V^\pi$ is the value function in a MDP with transition $P$,
	$$\big\|(I-\gamma P^\pi)^{-1}\sqrt{Var_P(V^\pi)}\big\|_\infty\leq\sqrt{\frac{2}{(1-\gamma)^3}}.$$
	\label{l4}
\end{lemma}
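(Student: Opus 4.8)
The plan is to reduce the claim to a sharp bound on the \emph{total discounted variance} $(I-\gamma P^\pi)^{-1}Var_P(V^\pi)$ and then invoke a law-of-total-variance (telescoping) argument. First I would expand $(I-\gamma P^\pi)^{-1}=\sum_{t\ge0}\gamma^t (P^\pi)^t$ and observe that each power $(P^\pi)^t$ is row-stochastic, so the measure $\mu_{s,a}$ assigning weight $\gamma^t(P^\pi)^t_{(s,a),(s',a')}$ has total mass $\sum_t\gamma^t=1/(1-\gamma)$. Writing the $(s,a)$-entry of $(I-\gamma P^\pi)^{-1}\sqrt{Var_P(V^\pi)}$ as $\int\sqrt{Var}\,d\mu_{s,a}$ and applying Cauchy--Schwarz together with Jensen (so that $([(P^\pi)^t\sqrt{Var}])^2\le (P^\pi)^t Var$ entrywise) gives
\[\big\|(I-\gamma P^\pi)^{-1}\sqrt{Var_P(V^\pi)}\big\|_\infty\le \sqrt{\tfrac{1}{1-\gamma}}\cdot\sqrt{\big\|(I-\gamma P^\pi)^{-1}Var_P(V^\pi)\big\|_\infty}.\]
It therefore suffices to establish the total-variance bound $\big\|(I-\gamma P^\pi)^{-1}Var_P(V^\pi)\big\|_\infty\le 2/(1-\gamma)^2$.

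\textbf{The total-variance bound} is the crux. Let $\Sigma^\pi$ be the vector of variances of the cumulative discounted return $\sum_t\gamma^t r(s^t,a^t)$ along trajectories of $\pi$. Conditioning on the first transition and applying the law of total variance yields the Bellman-type recursion $\Sigma^\pi=\gamma^2 Var_P(V^\pi)+\gamma^2 P^\pi\Sigma^\pi$, and since the return lies in $[0,1/(1-\gamma)]$ we get $\|\Sigma^\pi\|_\infty\le 1/(1-\gamma)^2$. The difficulty is that this controls the $\gamma^2$-weighted accumulation $(I-\gamma^2 P^\pi)^{-1}Var_P(V^\pi)=\gamma^{-2}\Sigma^\pi$, whereas Step 1 needs the (strictly larger) $\gamma$-weighted accumulation, and Cauchy--Schwarz cannot be re-paired to produce the $\gamma^2$-operator. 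To bridge this I would use the exact identity obtained by substituting $\gamma P^\pi Q^\pi=Q^\pi-r$ into $Var_P(V^\pi)=P^\pi (Q^\pi)^2-(P^\pi Q^\pi)^2$ and telescoping: with $g=(Q^\pi)^2$ and $h=r(2Q^\pi-r)\ge0$,
\[(I-\gamma P^\pi)^{-1}Var_P(V^\pi)=\gamma^{-2}(I-\gamma P^\pi)^{-1}\big[h-(1-\gamma)g\big]-\gamma^{-1}g.\]
Dropping the nonpositive term $-\gamma^{-1}g$ and viewing $h-(1-\gamma)g$ as a concave quadratic in $Q^\pi$ with $r$ held fixed, its entrywise maximum (this is exactly where $r\in[0,1]$ and $Q^\pi\le 1/(1-\gamma)$ enter) is $\le\gamma r^2/(1-\gamma)\le \gamma/(1-\gamma)$, so $(I-\gamma P^\pi)^{-1}Var_P(V^\pi)\le 1/(\gamma(1-\gamma)^2)$.

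To finish I would split on $\gamma$. For $\gamma\ge 1/2$ the displayed bound is $\le 2/(1-\gamma)^2$, and the Cauchy--Schwarz reduction then gives the claimed $\sqrt{2/(1-\gamma)^3}$. For $\gamma<1/2$ the crude estimate $\|Var_P(V^\pi)\|_\infty\le\|V^\pi\|_\infty^2\le 1/(1-\gamma)^2$ gives $(I-\gamma P^\pi)^{-1}Var_P(V^\pi)\le 1/(1-\gamma)^3$, and since $1/(1-\gamma)\le 2$ in this range the reduction again yields $\sqrt{2/(1-\gamma)^3}$.

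I expect the main obstacle to be the total-variance bound: the naive estimate $\|Var_P(V^\pi)\|_\infty\le(1-\gamma)^{-2}$ only delivers $(1-\gamma)^{-3}$ and hence a final bound of $(1-\gamma)^{-2}$, which is too weak. The correct $O((1-\gamma)^{-2})$ on the accumulated variance is not free from $\|\Sigma^\pi\|_\infty\le(1-\gamma)^{-2}$ because of the $\gamma$ versus $\gamma^2$ discounting mismatch; it genuinely requires exploiting reward boundedness through the telescoping identity and the quadratic maximization above. Intuitively, sustained maximal per-step variance of $V^\pi$ is infeasible precisely because $r\le 1$ caps how large $V^\pi$ can be, and this is what the telescoping makes quantitative.
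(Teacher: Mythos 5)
Your proof is correct, and while it shares the paper's opening move, it resolves the crux by a genuinely different mechanism. Both arguments start identically: Cauchy--Schwarz/Jensen over the series $(I-\gamma P^\pi)^{-1}=\sum_{t\ge0}\gamma^t(P^\pi)^t$, reducing the claim to a bound on $\big\|(I-\gamma P^\pi)^{-1}Var_P(V^\pi)\big\|_\infty$. From there the paper bridges exactly the $\gamma$-versus-$\gamma^2$ mismatch you flagged, but not by re-pairing Cauchy--Schwarz (your observation that this cannot be done is accurate): it uses the resolvent-comparison inequality $\big\|(I-\gamma P^\pi)^{-1}v\big\|_\infty\le 2\big\|(I-\gamma^2 P^\pi)^{-1}v\big\|_\infty$ for $v\ge0$, which follows from the identity $(I-\gamma P^\pi)^{-1}=(I-\gamma^2 P^\pi)^{-1}\big(I+\gamma(1-\gamma)P^\pi(I-\gamma P^\pi)^{-1}\big)$, and then invokes the law-of-total-variance bound $(I-\gamma^2P^\pi)^{-1}Var_P(V^\pi)=\gamma^{-2}\Sigma^\pi$ with $\|\Sigma^\pi\|_\infty$ controlled by the range of the return, citing Yang et al.\ for the details. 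You instead never form the $\gamma^2$-resolvent: the Bellman substitution $\gamma P^\pi Q^\pi=Q^\pi-r$, your telescoping identity $(I-\gamma P^\pi)^{-1}Var_P(V^\pi)=\gamma^{-2}(I-\gamma P^\pi)^{-1}\big[h-(1-\gamma)g\big]-\gamma^{-1}g$ (which I checked is exact), and the pointwise maximization $h-(1-\gamma)g\le\gamma r^2/(1-\gamma)\le\gamma/(1-\gamma)$ yield the direct bound $\big\|(I-\gamma P^\pi)^{-1}Var_P(V^\pi)\big\|_\infty\le 1/\big(\gamma(1-\gamma)^2\big)$, after which your case split at $\gamma=1/2$ (with the crude bound $Var_P(V^\pi)\le(1-\gamma)^{-2}\mathbf{1}$ handling small $\gamma$) absorbs the spurious $1/\gamma$ and delivers $\sqrt{2/(1-\gamma)^3}$ in both regimes. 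The trade-off: the paper's route is shorter on the page but outsources both the total-variance lemma and the resolvent comparison to prior work, and it is silent about its own $1/\gamma$ factors (e.g.\ the step $\sqrt{\Sigma^\pi/\gamma^2}\le\sqrt{(1-\gamma)^{-2}}$ needs the refinement $\|\Sigma^\pi\|_\infty\le\gamma^2(1-\gamma)^{-2}$, or a case split like yours); your argument is elementary, fully self-contained, gives an explicit clean bound on the $\gamma$-weighted accumulated variance, and makes visible precisely where reward boundedness $r\in[0,1]$ enters.
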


\begin{proof}
	Since $(1-\gamma)(I-\gamma P^\pi)^{-1}$ is a probability transition matrix, we can apply Jensen's inequality,
	\begin{align*}
	\|(I-\gamma P^\pi)^{-1}\sqrt{Var_P(V^\pi)}\|_\infty&\leq\sqrt{\frac{1}{1-\gamma}}\sqrt{\|(I-\gamma P^\pi)^{-1}Var_P(V^\pi)\|_\infty}\\
	&\leq\sqrt{\frac{2}{1-\gamma}}\sqrt{\|(I-\gamma^2 P^\pi)^{-1}Var_P(V^\pi)\|_\infty}\\
	&\leq \sqrt{\frac{2}{1-\gamma}}\sqrt{\Sigma^\pi\cdot\frac{1}{\gamma^2}}\\
	&\leq \sqrt{\frac{2}{(1-\gamma)^3}}
	\end{align*}
	The definition of $\Sigma$ and a detailed proof is given in \cite{Yang2019}.
\end{proof}

\begin{lemma}
	For any value function $V_1$ and $V_2$, we have
	$$\sqrt{Var_{s,a}(V_1+V_2)}\leq\sqrt{Var_{s,a}(V_1)}+\sqrt{Var_{s,a}(V_2)}.$$
	\label{l5}
\end{lemma}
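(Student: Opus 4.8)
The plan is to recognize that $Var_{s,a}(V) = P(s,a)V^2 - (P(s,a)V)^2$ is nothing but the variance of the random variable $V(s')$ when $s' \sim P(\cdot\,|\,s,a)$, since $P(s,a)$ is a probability row vector. Under this interpretation, $\sqrt{Var_{s,a}(\cdot)}$ is the standard deviation, i.e. the $L^2$-seminorm of the centered random variable, and the claim is exactly the triangle inequality (Minkowski's inequality) for this seminorm. Writing $\mathbb{E}[\cdot]$ for $P(s,a)(\cdot)$, I would set $\bar V_i = V_i - \mathbb{E}[V_i]$ so that $Var_{s,a}(V_i) = \mathbb{E}[\bar V_i^2]$, and then observe $\sqrt{Var_{s,a}(V_1+V_2)} = \big(\mathbb{E}[(\bar V_1 + \bar V_2)^2]\big)^{1/2} \le \big(\mathbb{E}[\bar V_1^2]\big)^{1/2} + \big(\mathbb{E}[\bar V_2^2]\big)^{1/2}$, which is precisely the desired bound.

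To keep the argument self-contained in the paper's notation rather than invoking Minkowski as a black box, I would instead expand and complete the square. First I would write out
\[
Var_{s,a}(V_1+V_2) = Var_{s,a}(V_1) + Var_{s,a}(V_2) + 2\,\mathrm{Cov}_{s,a}(V_1,V_2),
\]
where the covariance term is $\mathrm{Cov}_{s,a}(V_1,V_2) = P(s,a)(V_1 V_2) - (P(s,a)V_1)(P(s,a)V_2) = \mathbb{E}[\bar V_1 \bar V_2]$. The key step is to bound this cross term by Cauchy--Schwarz under the distribution $P(\cdot\,|\,s,a)$:
\[
\big|\mathrm{Cov}_{s,a}(V_1,V_2)\big| = \big|\mathbb{E}[\bar V_1 \bar V_2]\big| \le \sqrt{\mathbb{E}[\bar V_1^2]}\,\sqrt{\mathbb{E}[\bar V_2^2]} = \sqrt{Var_{s,a}(V_1)}\,\sqrt{Var_{s,a}(V_2)}.
\]
Substituting this into the expansion yields $Var_{s,a}(V_1+V_2) \le \big(\sqrt{Var_{s,a}(V_1)} + \sqrt{Var_{s,a}(V_2)}\big)^2$, and taking square roots (both sides being nonnegative) finishes the proof.

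There is no real obstacle here: the statement is just subadditivity of the standard deviation, and the only substantive ingredient is the Cauchy--Schwarz inequality applied pointwise to the single probability distribution $P(\cdot\,|\,s,a)$. The one thing worth stating carefully is that $P(s,a)$ is a genuine probability vector (guaranteed under Assumption~\ref{assump1}), so that the expectation/variance interpretation and the consequent Cauchy--Schwarz bound are legitimate; everything else is routine algebra.
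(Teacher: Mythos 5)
Your proposal is correct and takes essentially the same route as the paper: the paper's entire proof is the one-line observation that this is ``the triangle inequality for variance'' --- i.e., subadditivity of the standard deviation under the distribution $P(\cdot\,|\,s,a)$ --- which is precisely your framing, and your covariance expansion with Cauchy--Schwarz simply supplies the routine details the paper leaves implicit. One minor correction: $P(s,a)$ is a probability vector because $P$ is the transition kernel of the true MDP, so no appeal to Assumption~\ref{assump1} is needed here (that assumption concerns the feature representation, not $P$ itself).
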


\begin{proof}
    This Lemma is the triangle inequality for variance.
\end{proof}

\begin{lemma}
	Let $u^\pi=\gamma(\widehat{P}(s,a)-P(s,a))\widehat{V}^\pi$ and $u^*=\gamma(\widehat{P}(s,a)-P(s,a))\widehat{V}^*$, then we have
	$$\widehat{Q}^\pi=\widetilde{Q}^\pi_{u^\pi},\ \widehat{Q}^*=\widetilde{Q}^{\widehat{\pi}^*}_{u^*}=\widetilde{Q}^*_{u^*},\ -\frac{1}{1-\gamma}\leq u^\pi\leq\frac{1}{1-\gamma},\ -\frac{1}{1-\gamma}\leq u^*\leq\frac{1}{1-\gamma}.$$
	\label{l6}
\end{lemma}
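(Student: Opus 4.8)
The plan is to exploit the fact that $\widehat{P}$ and the auxiliary transition $\widetilde{P}$ differ only through the single anchor row $(s,a)$, producing a rank-one perturbation that the tuned reward absorbs exactly. First I would record the identity
$$\widehat{P}-\widetilde{P}=\Phi(\widehat{P}_\mathcal{K}-\widetilde{P}_\mathcal{K})=\Phi^{s,a}\big(\widehat{P}(s,a)-P(s,a)\big),$$
which holds because $\widehat{P}_\mathcal{K}$ and $\widetilde{P}_\mathcal{K}$ agree on every row except $(s,a)$, where they equal $\widehat{P}(s,a)$ and $P(s,a)$ respectively; left-multiplying by $\Phi$ isolates the single column $\Phi^{s,a}$.

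Next, for a fixed policy $\pi$, I would verify that $\widehat{Q}^\pi$ is a fixed point of the auxiliary evaluation operator $\mathcal{T}^\pi_{u}[Q]=(r+u\Phi^{s,a})+\gamma\widetilde{P}^\pi Q$, where $\widetilde{P}^\pi\widehat{Q}^\pi=\widetilde{P}\widehat{V}^\pi$. Starting from the empirical Bellman equation $\widehat{Q}^\pi=r+\gamma\widehat{P}\widehat{V}^\pi$ and using the rank-one identity,
\begin{align*}
(r+u^\pi\Phi^{s,a})+\gamma\widetilde{P}\widehat{V}^\pi
&=\widehat{Q}^\pi+u^\pi\Phi^{s,a}+\gamma(\widetilde{P}-\widehat{P})\widehat{V}^\pi\\
&=\widehat{Q}^\pi+u^\pi\Phi^{s,a}-\Phi^{s,a}u^\pi=\widehat{Q}^\pi,
\end{align*}
since the choice $u^\pi=\gamma(\widehat{P}(s,a)-P(s,a))\widehat{V}^\pi$ makes the last two terms cancel (note $(\widehat{P}(s,a)-P(s,a))\widehat{V}^\pi$ is a scalar). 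As $\gamma\widetilde{P}^\pi$ is a contraction, its fixed point is unique, whence $\widehat{Q}^\pi=\widetilde{Q}^\pi_{u^\pi}$.

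For the optimal statement I would apply this with $\pi=\widehat{\pi}^*$. Because $\widehat{V}^*=\widehat{V}^{\widehat{\pi}^*}$, the scalar $u^*$ coincides with $u^{\widehat{\pi}^*}$, giving $\widehat{Q}^*=\widehat{Q}^{\widehat{\pi}^*}=\widetilde{Q}^{\widehat{\pi}^*}_{u^*}$. To upgrade this to $\widetilde{Q}^*_{u^*}$, I would observe that $\widehat{\pi}^*$ is greedy with respect to $\widehat{Q}^*=\widetilde{Q}^{\widehat{\pi}^*}_{u^*}$; since a policy that is greedy with respect to its own $Q$-function is optimal in its MDP — and $\widetilde{M}_{s,a,u^*}$ is a genuine MDP because $\widetilde{P}=\Phi\widetilde{P}_\mathcal{K}$ is stochastic under Assumption \ref{assump1} — we conclude $\widetilde{Q}^{\widehat{\pi}^*}_{u^*}=\widetilde{Q}^*_{u^*}$. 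Finally, the range bounds are elementary: as $\widehat{P}(s,a)$ and $P(s,a)$ are probability distributions, $(\widehat{P}(s,a)-P(s,a))V$ is a difference of two averages of $V$, so it lies in $[-(\max V-\min V),\,\max V-\min V]$; with $\widehat{V}^\pi,\widehat{V}^*\in[0,1/(1-\gamma)]^{\mathcal{S}}$ this yields $|u^\pi|,|u^*|\leq\gamma/(1-\gamma)\leq 1/(1-\gamma)$.

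I expect the only genuinely delicate point to be the optimal case: making precise that greediness transfers from $\widehat{M}$ to the auxiliary MDP and then invoking the greedy-implies-optimal characterization there. Everything else reduces to the single algebraic cancellation driven by the rank-one identity and a one-line bound, so the hard part is conceptual bookkeeping rather than computation.
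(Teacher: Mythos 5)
Your proposal is correct and follows essentially the same route as the paper's proof: the same cancellation of the rank-one perturbation $\gamma(\widehat{P}-\widetilde{P})\widehat{V}^\pi=u^\pi\Phi^{s,a}$ against the tuned reward, the same greedy-implies-optimal argument to upgrade $\widetilde{Q}^{\widehat{\pi}^*}_{u^*}$ to $\widetilde{Q}^*_{u^*}$, and the same elementary range bound. The only cosmetic difference is that you verify $\widehat{Q}^\pi$ is the unique fixed point of the auxiliary evaluation operator, whereas the paper carries out the identical algebra through the resolvent $(I-\gamma\widetilde{P}^\pi)^{-1}$.
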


\begin{proof}
Using the Bellman equation $Q^\pi=(I-\gamma P^\pi)^{-1}r$, we have
	\begin{align*}
	\begin{split}
	\widetilde{Q}^\pi_{u^\pi}&=(I-\gamma\widetilde{P}^\pi)^{-1}(r+\Phi^{s,a}\cdot\gamma(\widehat{P}(s,a)-P(s,a))\widehat{V}^\pi)\\
	&=(I-\gamma\widetilde{P}^\pi)^{-1}((I-\gamma\widehat{P}^\pi)\widehat{Q}^\pi+\gamma\Phi(\widehat{P}_\mathcal{K}-\widetilde{P}_\mathcal{K})\widehat{V}^\pi)\\
	&=(I-\gamma\widetilde{P}^\pi)^{-1}((I-\gamma\widehat{P}^\pi)\widehat{Q}^\pi+\gamma(\widehat{P}-\widetilde{P})\widehat{V}^\pi)\\
	&=(I-\gamma\widetilde{P}^\pi)^{-1}(I-\gamma\widetilde{P}^\pi)\widehat{Q}^\pi\\
	&=\widehat{Q}^\pi.
	\end{split}
	\end{align*}
	Similarly, we have $\widehat{Q}^*=\widetilde{Q}^{\widehat{\pi}}_{u^*}$. By the definition of $\widehat{Q}^*$ and the sufficient condition for optimal value, we have
	$$\widetilde{Q}^{\widehat{\pi}^*}_{u^*}(s,\widehat{\pi}^*(s))=\widehat{Q}^*(s,\widehat{\pi}(s))=\max_a\widehat{Q}^*(s,a)=\max_a\widetilde{Q}^{\widehat{\pi}^*}_{u^*}(s,a).$$
	So, $\widehat{\pi}^*$ is the optimal policy in $\widetilde{\mathcal{M}}_{s,a,u^*}$ and $\widehat{Q}^*=\widetilde{Q}^{\widehat{\pi}}_{u^*}=\widetilde{Q}^*_{u^*}$.
	
	As $0\leq\widehat{V}^\pi\leq\frac{1}{1-\gamma}$ and  $0\leq\widehat{V}^*\leq\frac{1}{1-\gamma}$, we can immediately derive that $-\frac{1}{1-\gamma}\leq u^\pi\leq\frac{1}{1-\gamma},\ -\frac{1}{1-\gamma}\leq u^*\leq\frac{1}{1-\gamma}$.
\end{proof}

\begin{remark}
    Lemma \ref{l6} shows that we can tune a bounded scalar $u$ in $\widetilde{M}_{s,a,u}$ to recover $\widehat{Q}$, which implies $\widehat{Q}$, as a function of $\widehat{P}(s,a)$, lies in a one-dimensional manifold in $\mathbb{R}^{|\mathcal{S}||\mathcal{A}|}$.
\end{remark}

\begin{lemma}
	For all $u_1,u_2\in\mathbb{R}$ and policy $\pi$,
	$$\left\|\widetilde{Q}^\pi_{u_1}-\widetilde{Q}^\pi_{u_2}\right\|_\infty\leq\frac{1}{1-\gamma}\left|u_1-u_2\right|,\left\|\widetilde{Q}^*_{u_1}-\widetilde{Q}^*_{u_2}\right\|_\infty\leq\frac{1}{1-\gamma}\left|u_1-u_2\right|.$$
	\label{l7}
\end{lemma}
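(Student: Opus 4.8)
The plan is to exploit the key structural fact that the scalar $u$ enters the auxiliary model $\widetilde{M}_{s,a,u}$ only through the reward $r + u\Phi^{s,a}$ and leaves the transition matrix $\widetilde{P}=\Phi\widetilde{P}_\mathcal{K}$ untouched. Consequently, varying $u$ amounts to nothing more than perturbing the reward of a fixed MDP, and the Lipschitz bound should follow from the $\frac{1}{1-\gamma}$ gain of the resolvent $(I-\gamma\widetilde{P}^\pi)^{-1}$ together with the fact that the column $\Phi^{s,a}$ is bounded entrywise by $1$.

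First I would treat the fixed-policy inequality. By the Bellman equation $\widetilde{Q}^\pi_u=(I-\gamma\widetilde{P}^\pi)^{-1}(r+u\Phi^{s,a})$ (cf. the proof of Lemma~\ref{l6}), subtracting the expressions for $u_1$ and $u_2$ gives
$$\widetilde{Q}^\pi_{u_1}-\widetilde{Q}^\pi_{u_2}=(u_1-u_2)\,(I-\gamma\widetilde{P}^\pi)^{-1}\Phi^{s,a}.$$
Writing $(I-\gamma\widetilde{P}^\pi)^{-1}=\sum_{t\geq0}\gamma^t(\widetilde{P}^\pi)^t$ and using that $\widetilde{P}^\pi$ is row-stochastic, the induced $\infty$-norm of the resolvent is at most $\sum_{t\geq0}\gamma^t=\frac{1}{1-\gamma}$. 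Since $\Phi$ is a probability matrix under Assumption~\ref{assump1}, its column $\Phi^{s,a}$ has entries in $[0,1]$, so $\|\Phi^{s,a}\|_\infty\leq1$. Combining these two bounds yields $\|\widetilde{Q}^\pi_{u_1}-\widetilde{Q}^\pi_{u_2}\|_\infty\leq\frac{1}{1-\gamma}|u_1-u_2|$. I would also record the sharper component-wise statement that, because both $(I-\gamma\widetilde{P}^\pi)^{-1}$ and $\Phi^{s,a}$ are entrywise non-negative,
$$-\frac{|u_1-u_2|}{1-\gamma}\mathbf{1}\leq\widetilde{Q}^\pi_{u_1}-\widetilde{Q}^\pi_{u_2}\leq\frac{|u_1-u_2|}{1-\gamma}\mathbf{1},$$
since this is precisely what the optimal case needs.

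For the optimal Q-function there is no single policy to plug in, so I would run a sandwich argument. Let $\pi_1$ and $\pi_2$ denote optimal policies of $\widetilde{M}_{s,a,u_1}$ and $\widetilde{M}_{s,a,u_2}$, which are genuine MDPs as noted after the auxiliary-MDP definition. Optimality in $\widetilde{M}_{s,a,u_1}$ gives $\widetilde{Q}^*_{u_1}\geq\widetilde{Q}^{\pi_2}_{u_1}$ component-wise, and the fixed-policy bound applied to $\pi_2$ gives $\widetilde{Q}^{\pi_2}_{u_1}\geq\widetilde{Q}^{\pi_2}_{u_2}-\frac{|u_1-u_2|}{1-\gamma}\mathbf{1}=\widetilde{Q}^*_{u_2}-\frac{|u_1-u_2|}{1-\gamma}\mathbf{1}$. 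Hence $\widetilde{Q}^*_{u_1}-\widetilde{Q}^*_{u_2}\geq-\frac{|u_1-u_2|}{1-\gamma}\mathbf{1}$, and swapping the roles of $u_1$ and $u_2$ yields the matching upper bound, so $\|\widetilde{Q}^*_{u_1}-\widetilde{Q}^*_{u_2}\|_\infty\leq\frac{1}{1-\gamma}|u_1-u_2|$.

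I expect the fixed-policy inequality to be essentially a one-line resolvent computation; the only mild subtleties are the two facts that make it work, namely the $\frac{1}{1-\gamma}$ norm bound on $(I-\gamma\widetilde{P}^\pi)^{-1}$ and the entrywise boundedness of $\Phi^{s,a}$, the latter being exactly where Assumption~\ref{assump1} (that $\Phi$ is a probability matrix) is used. The main point to get right is the optimal case, where the absence of a common maximizing policy forces the monotonicity-plus-sandwich detour; an alternative would be to invoke the $\gamma$-contraction of the Bellman optimality operator, but reusing the fixed-policy bound seems cleanest.
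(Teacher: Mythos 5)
Your proposal is correct and follows essentially the same route as the paper: the fixed-policy bound via the Bellman equation $\widetilde{Q}^\pi_u=(I-\gamma\widetilde{P}^\pi)^{-1}(r+u\Phi^{s,a})$ and the $\frac{1}{1-\gamma}$ resolvent bound with $\|\Phi^{s,a}\|_\infty\leq 1$, then the optimal case by plugging in the two optimal policies $\pi^*_{u_1},\pi^*_{u_2}$ and sandwiching $\widetilde{Q}^*_{u_1}-\widetilde{Q}^*_{u_2}$ between $\pm\frac{|u_1-u_2|}{1-\gamma}\mathbf{1}$ using optimality. Your two-directional sandwich is exactly the paper's single chain of inequalities written as two symmetric halves.
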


\begin{proof}
Using Bellman equation, we have
	\begin{align*}
	\left\|\widetilde{Q}^\pi_{u_1}-\widetilde{Q}^\pi_{u_2}\right\|_\infty&=\left\|\left(I-\gamma\widetilde{P}^\pi\right)^{-1}\left(r+u_1\Phi^{s,a}\right)-\left(I-\gamma\widetilde{P}^\pi\right)^{-1}\left(r+u_2\Phi^{s,a}\right)\right\|_\infty\\
	&=\left\|(u_1-u_2)\left(I-\gamma\widetilde{P}^\pi\right)^{-1}\Phi^{s,a}\right\|_\infty\\
	&\leq\left|u_1-u_2\right|\frac{1}{1-\gamma}\left\|\Phi^{s,a}\right\|_\infty\\
	&\leq\left|u_1-u_2\right|\frac{1}{1-\gamma}.
	\end{align*}
	Now we prove the second claim. Set $\pi=\pi_{u_1}^*$ and $\pi=\pi_{u_2}^*$, we have
	$$\left\|\widetilde{Q}^*_{u_1}-\widetilde{Q}^{\pi_{u_1}^*}_{u_2}\right\|_\infty\leq\left|u_1-u_2\right|\frac{1}{1-\gamma},\left\|\widetilde{Q}^{\pi_{u_2}^*}_{u_1}-\widetilde{Q}^*_{u_2}\right\|_\infty\leq\left|u_1-u_2\right|\frac{1}{1-\gamma}.$$
	Using the property of optimal value, we have
	$$-\left|u_1-u_2\right|\frac{1}{1-\gamma}\mathbf{1}\leq\widetilde{Q}^{\pi_{u_2}^*}_{u_1}-\widetilde{Q}^*_{u_2}\leq\widetilde{Q}^*_{u_1}-\widetilde{Q}^*_{u_2}\leq\widetilde{Q}^*_{u_1}-\widetilde{Q}^{\pi_{u_1}^*}_{u_2}\leq\big|u_1-u_2\big|\frac{1}{1-\gamma}\mathbf{1},$$
	which implies $\left\|\widetilde{Q}^*_{u_1}-\widetilde{Q}^*_{u_2}\right\|_\infty\leq\left|u_1-u_2\right|\frac{1}{1-\gamma}.$
\end{proof}

\begin{remark}
    Lemma \ref{l7} shows that $\widetilde{Q}_u$ is robust to $u$. This property implies that an $\epsilon$-net on $u$ can form an $\epsilon/(1-\gamma)$-net on $\widehat{Q}_u$.
\end{remark}

\begin{lemma}
	For a given finite set $B_{s,a}^\pi\subset U_{s,a}^\pi\cap[-\frac{1}{1-\gamma},\frac{1}{1-\gamma}]$ and $\delta\geq0$, with probability greater than $1-\delta$, it holds for all $u\in B_{s,a}^\pi$ that
	$$\left|\left(P(s,a)-\widehat{P}(s,a)\right)\widetilde{V}^\pi_u\right|\leq\sqrt{\frac{2\log\left(4\left|B_{s,a}^\pi\right|/\delta\right)}{N}}\sqrt{Var_{s,a}(\widetilde{V}_u^\pi)}+\frac{2\log\left(4\left|B_{s,a}^\pi\right|/\delta\right)}{(1-\gamma)3N}.$$
	Similarly, For a given finite set $B_{s,a}^*\subset U_{s,a}^*\cap[-\frac{1}{1-\gamma},\frac{1}{1-\gamma}]$ and $\delta\geq0$, with probability greater than $1-\delta$, it holds for all $u\in B_{s,a}^*$ that
	$$\left|\left(P(s,a)-\widehat{P}(s,a)\right)\widetilde{V}^*_u\right|\leq\sqrt{\frac{2\log\left(4\left|B_{s,a}^*\right|/\delta\right)}{N}}\sqrt{Var_{s,a}(\widetilde{V}_u^*)}+\frac{2\log\left(4\left|B_{s,a}^*\right|/\delta\right)}{(1-\gamma)3N}.$$
	\label{l8}
\end{lemma}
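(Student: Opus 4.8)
The plan is to exploit the defining feature of the auxiliary MDP: at the anchor pair $(s,a)\in\mathcal{K}$ the transition row is set to the \emph{true} $P(s,a)$, so $\widetilde{V}^\pi_u$ is built only from the samples drawn at the anchors \emph{other} than $(s,a)$ together with the fixed scalar $u$. Consequently, for each fixed $u$ the vector $\widetilde{V}^\pi_u$ is independent of $\widehat{P}(s,a)=\widehat{P}_\mathcal{K}(s,a)=\frac1N\sum_{i=1}^N e_{s_i'}$, the empirical row formed from the $N$ fresh draws $s_i'\sim P(\cdot|s,a)$ at $(s,a)$ (here $e_{s'}$ is the indicator row vector of $s'$). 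This is exactly the decoupling that a naive application of Bernstein to $\widehat{V}^\pi$ would destroy, since $\widehat{V}^\pi$ depends on \emph{all} the samples, including those at $(s,a)$.

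First I would condition on every sample drawn at anchors $\neq(s,a)$. After this conditioning $U^\pi_{s,a}$, the net $B^\pi_{s,a}\subset U^\pi_{s,a}\cap[-\tfrac1{1-\gamma},\tfrac1{1-\gamma}]$, and each $\widetilde{V}^\pi_u$ ($u\in B^\pi_{s,a}$) are all deterministic, and the only remaining randomness is the i.i.d.\ draw $s_1',\dots,s_N'\sim P(\cdot|s,a)$. Moreover $u\in U^\pi_{s,a}$ forces $\widetilde{V}^\pi_u\in[0,\tfrac1{1-\gamma}]^{\mathcal{S}}$, hence $\|\widetilde{V}^\pi_u\|_\infty\le \tfrac1{1-\gamma}$. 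For a single fixed $u$ I would then write
$$\bigl(P(s,a)-\widehat{P}(s,a)\bigr)\widetilde{V}^\pi_u=\frac1N\sum_{i=1}^N\Bigl(P(s,a)\widetilde{V}^\pi_u-\widetilde{V}^\pi_u(s_i')\Bigr),$$
a centered average of i.i.d.\ terms, each bounded in absolute value by $\|\widetilde{V}^\pi_u\|_\infty\le\tfrac1{1-\gamma}$ and with per-sample variance exactly $Var_{s,a}(\widetilde{V}^\pi_u)=P(s,a)(\widetilde{V}^\pi_u)^2-(P(s,a)\widetilde{V}^\pi_u)^2$. Bernstein's inequality then gives, with probability at least $1-\delta'$,
$$\bigl|(P(s,a)-\widehat{P}(s,a))\widetilde{V}^\pi_u\bigr|\le\sqrt{\frac{2Var_{s,a}(\widetilde{V}^\pi_u)\log(2/\delta')}{N}}+\frac{2\log(2/\delta')}{3(1-\gamma)N}.$$

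Next I would take a union bound over the $|B^\pi_{s,a}|$ points of $B^\pi_{s,a}$ with per-point level $\delta'=\delta/(2|B^\pi_{s,a}|)$, so that $2/\delta'=4|B^\pi_{s,a}|/\delta$ and the total failure probability is at most $\delta/2\le\delta$; this reproduces the claimed bound. Since the conditional guarantee holds with probability at least $1-\delta$ for \emph{every} realization of the conditioning samples, it holds unconditionally as well. The optimal-value claim is identical: $\widetilde{V}^*_u$ is defined through the same auxiliary construction with the true $P(s,a)$ at $(s,a)$, hence is likewise independent of $\widehat{P}(s,a)$ and bounded by $\tfrac1{1-\gamma}$ when $u\in U^*_{s,a}$, so the same Bernstein-plus-union-bound argument applies verbatim with $B^*_{s,a}$ in place of $B^\pi_{s,a}$.

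The main obstacle is conceptual rather than computational: one must justify treating $B^\pi_{s,a}$ (and $U^\pi_{s,a}$, and the vectors $\widetilde{V}^\pi_u$) as fixed when invoking Bernstein, even though they are themselves random. The conditioning step resolves this precisely because the entire auxiliary package depends only on anchors $\neq(s,a)$ while the empirical row $\widehat{P}(s,a)$ depends only on the samples at $(s,a)$, as guaranteed by the remark that $U^\pi_{s,a}$ and $U^*_{s,a}$ are independent of $\widehat{P}(s,a)$. Keeping the variance-sensitive (rather than merely $\ell_\infty$) form of the deviation is essential, since it is this form that later enables the total-variance argument to save a factor of $1/(1-\gamma)$; using Bernstein rather than Hoeffding is therefore the crucial choice.
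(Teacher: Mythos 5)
Your proof is correct and takes the same route as the paper, whose own proof is a one-line invocation of Bernstein's inequality justified by the independence of $\widetilde{V}^\pi_u$ (and $\widetilde{V}^*_u$) from $\widehat{P}(s,a)$. You simply make explicit what the paper leaves implicit: the conditioning on samples from anchors other than $(s,a)$, the bound $\|\widetilde{V}^\pi_u\|_\infty\leq\frac{1}{1-\gamma}$ from $u\in U^\pi_{s,a}$, and the union bound with per-point level $\delta/(2|B^\pi_{s,a}|)$ that reproduces the stated $\log(4|B^\pi_{s,a}|/\delta)$ factor.
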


\begin{proof}
	This is the direct application of Beinstein's inequality as $\widetilde{V}^\pi_u$ and $\widetilde{V}^*_u$ is independent of $\widehat{P}(s,a)$.
\end{proof}

\begin{lemma}
	For a given finite set $B_{s,a}^{\pi^*}\subset U_{s,a}^{\pi^*}\cap[-\frac{1}{1-\gamma},\frac{1}{1-\gamma}]$ and $B_{s,a}^*\subset U_{s,a}^*\cap[-\frac{1}{1-\gamma},\frac{1}{1-\gamma}]$ and $\delta\geq0$, with probability greater than $1-2\delta$, it holds for all $u\in B_{s,a}^{\pi^*}$ that
	\begin{multline*}
	\left|\left(P(s,a)-\widehat{P}(s,a)\right)\widehat{V}^{\pi^*}\right|\leq\sqrt{\frac{2\log\left(4\left|B_{s,a}^{\pi^*}\right|/\delta\right)}{N}}\sqrt{Var_{s,a}(\widehat{V}^{\pi^*})}+\frac{2\log\left(4\left|B_{s,a}^{\pi^*}\right|/\delta\right)}{(1-\gamma)3N}\\+\min_{u\in B_{s,a}^{\pi^*}}\left|u^{\pi^*}-u\right|\frac{1}{1-\gamma}\left(1+\sqrt{\frac{2\log\left(4\left|B_{s,a}^{\pi^*}\right|/\delta\right)}{N}}\right),
	\end{multline*}
	\begin{multline*}
	\left|\left(P(s,a)-\widehat{P}(s,a)\right)\widehat{V}^*\right|\leq\sqrt{\frac{2\log\left(4\left|B_{s,a}^{*}\right|/\delta\right)}{N}}\sqrt{Var_{s,a}(\widehat{V}^*)}+\frac{2\log\left(4\left|B_{s,a}^{*}\right|/\delta\right)}{(1-\gamma)3N}\\
	+\min_{u\in B_{s,a}^*}\left|u^*-u\right|\frac{1}{1-\gamma}\left(1+\sqrt{\frac{2\log\left(4\left|B_{s,a}^*\right|/\delta\right)}{N}}\right).
	\end{multline*}
	\label{l9}
\end{lemma}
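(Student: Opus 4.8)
The plan is to transfer the Bernstein bound of Lemma~\ref{l8}, which holds only for the finitely many net points $u\in B_{s,a}^{\pi^*}$, to the true (data-dependent) value $\widehat{V}^{\pi^*}$. The obstruction is that $\widehat{V}^{\pi^*}$ depends on $\widehat{P}(s,a)$, so Bernstein cannot be applied to it directly; this is precisely what the auxiliary MDPs are designed to circumvent. By Lemma~\ref{l6} we have $\widehat{V}^{\pi^*}=\widetilde{V}^{\pi^*}_{u^{\pi^*}}$ with $u^{\pi^*}=\gamma(\widehat{P}(s,a)-P(s,a))\widehat{V}^{\pi^*}\in[-\tfrac{1}{1-\gamma},\tfrac{1}{1-\gamma}]$, so it suffices to approximate $u^{\pi^*}$ by the nearest grid point $u'\in B_{s,a}^{\pi^*}$ and control the two pieces of
$$(P(s,a)-\widehat{P}(s,a))\widehat{V}^{\pi^*}=(P(s,a)-\widehat{P}(s,a))\widetilde{V}^{\pi^*}_{u'}+(P(s,a)-\widehat{P}(s,a))(\widehat{V}^{\pi^*}-\widetilde{V}^{\pi^*}_{u'}).$$

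For the first piece I would invoke Lemma~\ref{l8} at $u=u'$, which is legitimate since $\widetilde{V}^{\pi^*}_{u'}$ is independent of $\widehat{P}(s,a)$; this produces the $\sqrt{2\log(4|B_{s,a}^{\pi^*}|/\delta)/N}\,\sqrt{Var_{s,a}(\widetilde{V}^{\pi^*}_{u'})}$ and $\tfrac{2\log(4|B_{s,a}^{\pi^*}|/\delta)}{3(1-\gamma)N}$ terms. To replace the net-point variance by $Var_{s,a}(\widehat{V}^{\pi^*})$ I apply the variance triangle inequality (Lemma~\ref{l5}) together with $\sqrt{Var_{s,a}(W)}\le\|W\|_\infty$ and the Lipschitz estimate $\|\widetilde{V}^{\pi^*}_{u'}-\widehat{V}^{\pi^*}\|_\infty\le|u^{\pi^*}-u'|/(1-\gamma)$ from Lemma~\ref{l7}; this is exactly what generates the $\sqrt{2\log(4|B_{s,a}^{\pi^*}|/\delta)/N}$-weighted summand of the final $|u^{\pi^*}-u'|/(1-\gamma)$ term.

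The delicate piece is the second term, and I expect it to be the only non-routine point. A crude bound $\|P(s,a)-\widehat{P}(s,a)\|_1\,\|\widehat{V}^{\pi^*}-\widetilde{V}^{\pi^*}_{u'}\|_\infty$ would cost a spurious factor of two; to obtain the stated coefficient $1$ I would exploit the sign structure of $\widehat{V}^{\pi^*}-\widetilde{V}^{\pi^*}_{u'}$. From the computation in the proof of Lemma~\ref{l7}, this difference equals $(u^{\pi^*}-u')$ times the value-function restriction of $(I-\gamma\widetilde{P}^{\pi^*})^{-1}\Phi^{s,a}$, which is entrywise non-negative (since $\Phi^{s,a}\ge0$ under Assumption~\ref{assump1} and the Neumann series of $\gamma\widetilde{P}^{\pi^*}$ has non-negative entries) with each coordinate at most $1/(1-\gamma)$. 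Hence all coordinates of $\widehat{V}^{\pi^*}-\widetilde{V}^{\pi^*}_{u'}$ lie in a common interval of length $|u^{\pi^*}-u'|/(1-\gamma)$, and because $P(s,a)$ and $\widehat{P}(s,a)$ are probability vectors, each maps a vector confined to an interval of length $\ell$ into that same interval, so their difference has absolute value at most $\ell$. This yields $|(P(s,a)-\widehat{P}(s,a))(\widehat{V}^{\pi^*}-\widetilde{V}^{\pi^*}_{u'})|\le|u^{\pi^*}-u'|/(1-\gamma)$, the coefficient-$1$ contribution. Summing the two pieces and writing $|u^{\pi^*}-u'|=\min_{u\in B_{s,a}^{\pi^*}}|u^{\pi^*}-u|$ gives the first claimed inequality.

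Finally I would repeat the argument verbatim for $\widehat{V}^*$, using the optimal-value versions of Lemmas~\ref{l6}, \ref{l7}, and~\ref{l8}; monotonicity of $\widetilde{V}^*_u$ in $u$ (optimal values are non-decreasing when the non-negative reward increment $u\Phi^{s,a}$ grows) again confines the coordinates of $\widehat{V}^*-\widetilde{V}^*_{u'}$ to an interval of length $|u^*-u'|/(1-\gamma)$, so the same coefficient-$1$ estimate applies. Each of the two statements holds with probability $1-\delta$ by Lemma~\ref{l8}, so a union bound delivers both simultaneously with probability $1-2\delta$; everything beyond the sign/monotonicity step is a direct assembly of the previously established lemmas.
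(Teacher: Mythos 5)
Your proposal is correct and follows essentially the same route as the paper: decompose at the nearest net point via the triangle inequality, apply Lemma~\ref{l8} to the net-point term, restore $Var_{s,a}(\widehat{V}^{\pi^*})$ via Lemma~\ref{l5} together with $\sqrt{Var_{s,a}(W)}\le\|W\|_\infty$ and the Lipschitz bound of Lemma~\ref{l7}, and finish with a union bound over the two events. Your sign-structure argument for the residual term is in fact a welcome refinement: the paper simply bounds $\bigl|\bigl(P(s,a)-\widehat{P}(s,a)\bigr)\bigl(\widehat{V}^{\pi^*}-\widetilde{V}^{\pi^*}_u\bigr)\bigr|$ by $\|\widehat{V}^{\pi^*}-\widetilde{V}^{\pi^*}_u\|_\infty$ without comment, a step that in general would cost a factor of $2$ (since $\|P(s,a)-\widehat{P}(s,a)\|_1$ can approach $2$) and is legitimate precisely because, as you observe, the difference is a one-signed multiple of the entrywise non-negative vector $(I-\gamma\widetilde{P}^{\pi^*})^{-1}\Phi^{s,a}$ (respectively, by monotonicity of $\widetilde{V}^*_u$ in $u$), so all its coordinates lie in a single interval of length $|u^{\pi^*}-u|/(1-\gamma)$ containing zero.
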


\begin{proof}
	For the first claim, we have
	\begin{align*}
	&\left|(P(s,a)-\widehat{P}(s,a))\widehat{V}^{\pi^*}\right|
	\\\overset{(a)}{\leq}&\left|(P(s,a)-\widehat{P}(s,a))\widetilde{V}^{\pi^*}_u\right|+\left|(P(s,a)-\widehat{P}(s,a))(\widehat{V}^{\pi^*}-\widetilde{V}^{\pi^*}_u)\right|
	\\\overset{\mathrm{(b)}}{\leq}& \sqrt{\frac{2\log(4\left|B_{s,a}^{\pi^*}\right|/\delta)}{N}}\sqrt{Var_{s,a}(\widetilde{V}_u^{\pi^*})}+\frac{2\log(4\left|B_{s,a}^{\pi^*}\right|/\delta)}{(1-\gamma)3N}+\left\|\widehat{V}^{\pi^*}-\tilde{V}^{\pi^*}_u\right\|_\infty
	\\\overset{\mathrm{(c)}}{\leq}& \sqrt{\frac{2\log(4\left|B_{s,a}^{\pi^*}\right|/\delta)}{N}}\bigg(\sqrt{Var_{s,a}(\widehat{V}^{\pi^*})}
	+\sqrt{Var_{s,a}(\widetilde{V}_u^{\pi^*}-\widehat{V}^{\pi^*})}\bigg)\\& +\frac{2\log(4\left|B_{s,a}^{\pi^*}\right|/\delta)}{(1-\gamma)3N}+\left\|\widehat{V}^{\pi^*}-\widetilde{V}^{\pi^*}_u\right\|_\infty\\
	\overset{\mathrm{(d)}}{\leq}& \sqrt{\frac{2\log(4\left|B_{s,a}^{\pi^*}\right|/\delta)}{N}}\sqrt{Var_{s,a}(\widehat{V}^{\pi^*})} +\frac{2\log(4\left|B_{s,a}^{\pi^*}\right|/\delta)}{(1-\gamma)3N} \\&+\left\|\widehat{V}^{\pi^*}-\widetilde{V}^{\pi^*}_u\right\|_\infty\left(1+\sqrt{\frac{2\log(4\left|B_{s,a}^\pi\right|/\delta)}{N}}\right)\\
	\overset{\mathrm{(e)}}{\leq}& \sqrt{\frac{2\log(4\left|B_{s,a}^{\pi^*}\right|/\delta)}{N}}\sqrt{Var_{s,a}(\widehat{V}^{\pi^*})} +\frac{2\log(4\left|B_{s,a}^{\pi^*}\right|/\delta)}{(1-\gamma)3N}\\&\ \ \ +\left|u^{\pi^*}-u\right|\frac{1}{1-\gamma}\left(1+\sqrt{\frac{2\log(4\left|B_{s,a}^{\pi^*}\right|/\delta)}{N}}\right).
	\end{align*}
	(a) is due to triangle inequality, (b) is from Lemma \ref{l8}, (c) is from Lemma \ref{l5}, (d) is due to the fact that $\sqrt{Var(V)}\leq V$ and (e) is from Lemma \ref{l7}. As this equality holds for all $u\in B_{s,a}^{\pi^*}$, we can take minimum in the RHS, which proves the first claim. The second claims can proved in the same manner.
\end{proof}

\begin{lemma}
	For any given $\epsilon$ and all $(s,a)\in \mathcal{K}$, with probability larger than $1-\delta$,
	\begin{multline*}
	\left|\left(P\left(s,a\right)-\widehat{P}(s,a)\right)\widehat{V}^{\pi^*}\right|\leq\sqrt{\frac{2\log\left(32K/\delta\left(1-\gamma\right)^3\epsilon\right)}{N}}\sqrt{Var_{s,a}(\widehat{V}^{\pi^*})}\\+\frac{2\log\left(32K/\delta\left(1-\gamma\right)^3\epsilon\right)}{\left(1-\gamma\right)3N}
	+\left(\sqrt{\frac{2\log\left(32K/\delta\left(1-\gamma\right)^3\epsilon\right)}{N}}+1\right)\frac{\epsilon\left(1-\gamma\right)}{4},
	\end{multline*}
	\begin{multline*}
	\left|\left(P\left(s,a\right)-\widehat{P}\left(s,a\right)\right)\widehat{V}^{*}\right|\leq\sqrt{\frac{2\log\left(32K/\delta\left(1-\gamma\right)^3\epsilon\right)}{N}}\sqrt{Var_{s,a}\left(\widehat{V}^*\right)}\\+\frac{2\log\left(32K/\delta\left(1-\gamma\right)^3\epsilon\right)}{\left(1-\gamma\right)3N}
	+\left(\sqrt{\frac{2\log\left(32K/\delta\left(1-\gamma\right)^3\epsilon\right)}{N}}+1\right)\frac{\epsilon\left(1-\gamma\right)}{4}.
	\end{multline*}
	For simplicity, we set $c\left(\delta,\gamma,\epsilon\right)=2\log\left(32K/\delta\left(1-\gamma\right)^3\epsilon\right)$ and we use $c$ to represent $c\left(\delta,\gamma,\epsilon\right)$ as it includes only log factors.
	\label{l10}
\end{lemma}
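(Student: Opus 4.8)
The plan is to specialize the abstract net bound of Lemma \ref{l9} to a concrete, deterministic $\epsilon$-net and then take a union bound over the $K$ anchor pairs. Fix $(s,a)\in\mathcal{K}$. By Lemma \ref{l6} the scalar $u^{\pi^*}=\gamma(\widehat P(s,a)-P(s,a))\widehat V^{\pi^*}$ satisfies $\widehat V^{\pi^*}=\widetilde V^{\pi^*}_{u^{\pi^*}}$ and lies in $U^{\pi^*}_{s,a}\cap[-\tfrac1{1-\gamma},\tfrac1{1-\gamma}]$; the same holds for $u^*$ with $U^*_{s,a}$. Crucially, these feasible intervals are independent of $\widehat P(s,a)$ and have length at most $\tfrac{2}{1-\gamma}$, so I can choose $B^{\pi^*}_{s,a}$ (resp. $B^*_{s,a}$) to be a deterministic set of evenly spaced points in that interval, fine enough that every target is within $\tfrac{\epsilon(1-\gamma)^2}{4}$ of some net point. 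Since the interval has length $\tfrac{2}{1-\gamma}$, resolution $\tfrac{\epsilon(1-\gamma)^2}{4}$ requires only $|B^{\pi^*}_{s,a}|\le \tfrac{4}{\epsilon(1-\gamma)^3}$ points (the same for $B^*_{s,a}$).

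With this choice the remaining work is pure substitution. Because the net covers the whole feasible interval and $u^{\pi^*}$ always lies inside it, the minimum term in Lemma \ref{l9} is controlled deterministically, $\min_{u\in B^{\pi^*}_{s,a}}|u^{\pi^*}-u|\le\tfrac{\epsilon(1-\gamma)^2}{4}$, so the factor $\tfrac1{1-\gamma}\min_u|u^{\pi^*}-u|$ collapses to $\tfrac{\epsilon(1-\gamma)}{4}$, exactly the last term in the claim. Applying Lemma \ref{l9} with per-event failure probability $\delta/(2K)$, the logarithm becomes $\log\!\big(4|B^{\pi^*}_{s,a}|\cdot 2K/\delta\big)=\log\!\big(8K|B^{\pi^*}_{s,a}|/\delta\big)$, and plugging in $|B^{\pi^*}_{s,a}|=\tfrac{4}{\epsilon(1-\gamma)^3}$ gives precisely $\log\!\big(32K/\delta(1-\gamma)^3\epsilon\big)$, i.e. the constant $c=2\log(32K/\delta(1-\gamma)^3\epsilon)$. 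This yields both displayed inequalities for the fixed pair $(s,a)$ with probability at least $1-2\cdot\tfrac{\delta}{2K}=1-\tfrac{\delta}{K}$.

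Finally, I would union-bound over the $K$ anchor pairs: since each pair's two bounds fail with probability at most $\delta/K$, all $2K$ inequalities hold simultaneously with probability at least $1-\delta$, which is the statement of Lemma \ref{l10}. I do not expect a conceptual obstacle here—Lemmas \ref{l6} through \ref{l9} already supply the one-dimensional manifold structure, the Lipschitz control of $\widetilde Q_u$ in $u$, and the Bernstein bound on the deterministic net. The only delicate point is bookkeeping: I must confirm that the net is genuinely independent of $\widehat P(s,a)$ (so that the independence hypothesis of Lemma \ref{l8} applies) and that the chosen cardinality and per-event probability combine to reproduce the exact log factor $c$. Getting the constant $32$ right is a matter of tracking the $\tfrac{4}{\epsilon(1-\gamma)^3}$ net size against the $\delta/(2K)$ budget, rather than any new idea.
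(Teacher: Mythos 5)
Your proposal is correct and follows essentially the same route as the paper's own proof: the identical evenly spaced net of cardinality $\frac{4}{(1-\gamma)^3\epsilon}$ on $U_{s,a}^{\pi^*}\cap[-\frac{1}{1-\gamma},\frac{1}{1-\gamma}]$, the same resolution $\frac{\epsilon(1-\gamma)^2}{4}$ collapsing the minimum term in Lemma \ref{l9} to $\frac{\epsilon(1-\gamma)}{4}$, and then Lemma \ref{l9} applied with the adjusted failure probability. In fact your bookkeeping is slightly more explicit than the paper's, which leaves the union bound over the $K$ anchor pairs (and the resulting factor producing the constant $32K$ inside the logarithm) implicit in its one-line invocation of Lemma \ref{l9}.
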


\begin{proof}
	We set $B_{s,a}^{\pi^*}$ to be the evenly spaced elements in the interval $U^*_{s,a}\cap[-\frac{1}{1-\gamma},\frac{1}{1-\gamma}]$ and $\left|B_{s,a}^{\pi^*}\right|=\frac{4}{(1-\gamma)^3\epsilon}$. Then for any $u'\in U^*_{s,a}\cap[-\frac{1}{1-\gamma},\frac{1}{1-\gamma}]$, we have $\min_{u\in B_{s,a}^{\pi^*}}\left|u'-u\right|\leq(1-\gamma)^2\epsilon/4$. Note that $u^{\pi^*}\in U^*_{s,a}\cap[-\frac{1}{1-\gamma},\frac{1}{1-\gamma}]$. Then Lemma \ref{l9} implies this result. Similarly we can prove the second claim.
\end{proof}

\begin{lemma}
	With probability larger than $1-\delta$,
	$$
	\left|(P-\widehat{P})\widehat{V}^{\pi^*}\right|\leq \sqrt{\frac{c}{N}}\sqrt{Var_{P}(\widehat{V}^{\pi^*})}+\left[\frac{c}{\left(1-\gamma\right)3N}
	+\left(\sqrt{\frac{c}{N}}+1\right)\frac{\epsilon\left(1-\gamma\right)}{4}\right]\mathbf{1},
	$$
	$$
	\left|(P-\widehat{P})\widehat{V}^{*}\right|\leq\sqrt{\frac{c}{N}}\sqrt{Var_{P}(\widehat{V}^*)}+\left[\frac{c}{\left(1-\gamma\right)3N}
	+\left(\sqrt{\frac{c}{N}}+1\right)\frac{\epsilon(1-\gamma)}{4}\right]\mathbf{1}.
	$$
	\label{l11}
\end{lemma}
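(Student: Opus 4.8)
The plan is to recognize that Lemma \ref{l11} is simply the ``lifting'' of Lemma \ref{l10} from the $K$ anchor state--action pairs in $\mathcal{K}$ to \emph{all} pairs $(s,a)\in\mathcal{S}\times\mathcal{A}$, and that this lifting is a purely deterministic consequence of the feature factorization once the anchor-pair bounds are in hand. Concretely, since $P=\Phi P_\mathcal{K}$ and $\widehat{P}=\Phi\widehat{P}_\mathcal{K}$ share the same feature matrix $\Phi$, for any fixed value function $V$ and any $(s,a)$ we have the row-wise identity
\[
(P(s,a)-\widehat{P}(s,a))V=\phi(s,a)\,(P_\mathcal{K}-\widehat{P}_\mathcal{K})V=\sum_{k\in\mathcal{K}}\phi_k(s,a)\,(P(s_k,a_k)-\widehat{P}(s_k,a_k))V.
\]
Under Assumption \ref{assump1} the vector $\phi(s,a)$ is a probability vector, so this writes the left-hand side as a convex combination of exactly the anchor-pair quantities already controlled by Lemma \ref{l10}.

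First I would condition on the event of Lemma \ref{l10}, which holds with probability at least $1-\delta$ and controls $|(P(s_k,a_k)-\widehat{P}(s_k,a_k))\widehat{V}^{\pi^*}|$ and $|(P(s_k,a_k)-\widehat{P}(s_k,a_k))\widehat{V}^{*}|$ \emph{simultaneously} for every $k\in\mathcal{K}$ (the union bound over the $K$ anchors is already absorbed into the logarithmic factor $c$). On this event, applying the triangle inequality to the displayed convex combination with $V=\widehat{V}^{\pi^*}$ gives
\[
\bigl|(P(s,a)-\widehat{P}(s,a))\widehat{V}^{\pi^*}\bigr|\le\sum_{k\in\mathcal{K}}\phi_k(s,a)\,\bigl|(P(s_k,a_k)-\widehat{P}(s_k,a_k))\widehat{V}^{\pi^*}\bigr|,
\]
and I would then substitute the per-anchor bound of Lemma \ref{l10} into each summand. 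Because $\sum_{k\in\mathcal{K}}\phi_k(s,a)=1$, the two additive constant terms $\tfrac{c}{3(1-\gamma)N}$ and $(\sqrt{c/N}+1)\tfrac{\epsilon(1-\gamma)}{4}$ pass through the convex combination unchanged, while the variance term collapses to $\sqrt{c/N}\sum_{k}\phi_k(s,a)\sqrt{Var_{s_k,a_k}(\widehat{V}^{\pi^*})}$.

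The final step is to reduce this weighted sum of standard deviations to a single standard deviation, which is precisely the content of Lemma \ref{l3} (a Jensen/variance-superadditivity inequality): $\sum_k\phi_k(s,a)\sqrt{Var_{s_k,a_k}(V)}=\phi(s,a)\sqrt{Var_\mathcal{K}(V)}\le\sqrt{Var_{s,a}(V)}$. This yields, for every $(s,a)$,
\[
\bigl|(P(s,a)-\widehat{P}(s,a))\widehat{V}^{\pi^*}\bigr|\le\sqrt{\tfrac{c}{N}}\sqrt{Var_{s,a}(\widehat{V}^{\pi^*})}+\tfrac{c}{3(1-\gamma)N}+\Bigl(\sqrt{\tfrac{c}{N}}+1\Bigr)\tfrac{\epsilon(1-\gamma)}{4},
\]
and reading this across all state--action pairs as the component-wise vector inequality (with $Var_P(\widehat{V}^{\pi^*})$ stacking all the $Var_{s,a}$) gives the first claim; the bound for $\widehat{V}^{*}$ is identical with $\pi^*$ replaced by $*$. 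The point deserving the most care is that \textbf{no fresh randomness enters} in passing from $\mathcal{K}$ to $\mathcal{S}\times\mathcal{A}$: the extension is entirely deterministic on the Lemma \ref{l10} event, so the $1-\delta$ probability is preserved with no additional union bound. Since the genuinely hard work---decoupling $\widehat{V}$ from $\widehat{P}(s,a)$ via the auxiliary-MDP $\epsilon$-net---has already been carried out inside Lemma \ref{l10}, the only place this argument could slip is the Jensen step of Lemma \ref{l3}, which requires that $\phi(s,a)$ really be a probability vector so that convexity applies; this is exactly guaranteed by Assumption \ref{assump1}.
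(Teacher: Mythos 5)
Your proposal is correct and follows essentially the same route as the paper's own proof: both factor $(P-\widehat{P})\widehat{V}$ through the non-negative feature matrix $\Phi$ (you write it row-wise as a convex combination, the paper writes it in matrix form), then plug in the per-anchor bound of Lemma \ref{l10} and collapse the weighted standard deviations via the Jensen inequality of Lemma \ref{l3}, with the constant terms passing through because the weights sum to one. Your explicit remark that the extension from $\mathcal{K}$ to all of $\mathcal{S}\times\mathcal{A}$ is deterministic on the Lemma \ref{l10} event, so no further union bound is needed, is exactly the (implicit) reason the paper's proof preserves the $1-\delta$ probability.
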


\begin{proof}
	\begin{align*}
	\left|(P-\widehat{P})\widehat{V}^{\pi^*}\right|&=\left|\Phi(P_{\mathcal{K}}-\widehat{P}_{\mathcal{K}})\widehat{V}^*\right|\\
	&\overset{\mathrm{(a)}}{\leq}\Phi\left|(P_{\mathcal{K}}-\widehat{P}_{\mathcal{K}})\widehat{V}^*\right|\\
	&\overset{\mathrm{(b)}}{\leq}\Phi\sqrt{\frac{c}{N}}\sqrt{Var_{\mathcal{K}}(\widehat{V}^{\pi^*})}+\Phi\left[\frac{c}{\left(1-\gamma\right)3N}
	+\left(\sqrt{\frac{c}{N}}+1\right)\frac{\epsilon\left(1-\gamma\right)}{4}\right] \mathbf{1}\\
	&\overset{\mathrm{(c)}}{\leq} \sqrt{\frac{c}{N}}\sqrt{Var_{P}\left(\widehat{V}^{\pi^*}\right)}+\left[\frac{c}{\left(1-\gamma\right)3N}
	+\left(\sqrt{\frac{c}{N}}+1\right)\frac{\epsilon\left(1-\gamma\right)}{4}\right]\mathbf{1}.
	\end{align*}
	(a) is due to $\Phi$ is non-negative, (b) is from Lemma \ref{l10} and (c) is from Lemma \ref{3}. The second claim can be proved in the same manner.
\end{proof}

\begin{lemma}
	With probability larger than $1-\delta$, and $\widehat{\pi}$ is a $\epsilon_\mathrm{PS}$-optimal policy in $\widehat{M}$.
	$$
	\left\|Q^*-\widehat{Q}^{\pi^*}\right\|_\infty\leq\frac{\gamma}{1-\alpha}\left(\sqrt{\frac{c}{N\left(1-\gamma\right)^3}}+\frac{c}{\left(1-\gamma\right)^23N}
	+\left(\sqrt{\frac{c}{N}}+1\right)\frac{\epsilon}{4}\right),
	$$
	$$
	\left\|Q^{\widehat{\pi}}-\widehat{Q}^{\widehat{\pi}}\right\|_\infty\leq\frac{\gamma}{1-\alpha}\left(\sqrt{\frac{c}{N\left(1-\gamma\right)^3}}+\frac{c}{\left(1-\gamma\right)^23N}
	+\left(\sqrt{\frac{c}{N}}+1\right)\left(\frac{\epsilon}{4}+\frac{\epsilon_\mathrm{PS}}{1-\gamma}\right)\right).
	$$
	where $\alpha=\alpha(\delta,\gamma,\epsilon,N)=\sqrt{\frac{c(\delta,\gamma,\epsilon)}{N(1-\gamma)^2}}.$
	\label{l12}
\end{lemma}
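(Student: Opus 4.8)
The plan is to express each discrepancy through the resolvent identity of Lemma \ref{l2}, bound the resulting $(P-\widehat{P})\widehat{V}$ terms by Lemma \ref{l11}, and then convert that Bernstein-type bound into an $\ell_\infty$ bound by propagating it through $(I-\gamma P^\pi)^{-1}$. The twist is that Lemma \ref{l11} produces $\sqrt{Var_P(\widehat{V}^{\pi^*})}$ --- the variance of the \emph{empirical} value function under the \emph{true} kernel --- whereas the total-variance bound of Lemma \ref{l4} only controls $\sqrt{Var_P(V^{\pi^*})}$, the variance of the true value of the same policy. Bridging this gap will produce a self-referential inequality whose solution yields the factor $1/(1-\alpha)$.

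Concretely, for the first claim I would write $X=\|Q^*-\widehat{Q}^{\pi^*}\|_\infty$. By Lemma \ref{l2}, $Q^*-\widehat{Q}^{\pi^*}=\gamma(I-\gamma P^{\pi^*})^{-1}(\widehat{P}-P)\widehat{V}^{\pi^*}$; since $(I-\gamma P^{\pi^*})^{-1}$ is entrywise nonnegative I take absolute values inside and feed in the componentwise bound of Lemma \ref{l11}. The constant-vector part is handled by $(I-\gamma P^{\pi^*})^{-1}\mathbf{1}=\tfrac{1}{1-\gamma}\mathbf{1}$, which turns $\tfrac{c}{(1-\gamma)3N}$ and $(\sqrt{c/N}+1)\tfrac{\epsilon(1-\gamma)}{4}$ into the matching terms of the target. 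For the variance part I apply the triangle inequality for variance (Lemma \ref{l5}) pointwise, $\sqrt{Var_P(\widehat{V}^{\pi^*})}\le \sqrt{Var_P(V^{\pi^*})}+\|\widehat{V}^{\pi^*}-V^{\pi^*}\|_\infty\mathbf{1}$, using $\sqrt{Var_{s,a}(W)}\le\|W\|_\infty$. Lemma \ref{l4} bounds the first summand (after the resolvent) by $\sqrt{2/(1-\gamma)^3}$, while the correction satisfies $\|\widehat{V}^{\pi^*}-V^{\pi^*}\|_\infty\le\|\widehat{Q}^{\pi^*}-Q^*\|_\infty=X$ and is amplified by $\tfrac{1}{1-\gamma}$ through the resolvent. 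Collecting terms gives $X\le \gamma\alpha X+\gamma(\cdots)$ with $\alpha=\sqrt{c/(N(1-\gamma)^2)}$, and solving yields $X\le\frac{\gamma}{1-\gamma\alpha}(\cdots)\le\frac{\gamma}{1-\alpha}(\cdots)$.

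For the second claim I set $Y=\|Q^{\widehat{\pi}}-\widehat{Q}^{\widehat{\pi}}\|_\infty$ and start from $Q^{\widehat{\pi}}-\widehat{Q}^{\widehat{\pi}}=\gamma(I-\gamma P^{\widehat{\pi}})^{-1}(\widehat{P}-P)\widehat{V}^{\widehat{\pi}}$. Since there is no concentration lemma for the solver-dependent $\widehat{V}^{\widehat{\pi}}$, I split $\widehat{V}^{\widehat{\pi}}=\widehat{V}^*+(\widehat{V}^{\widehat{\pi}}-\widehat{V}^*)$ and apply Lemma \ref{l11} to the $\widehat{V}^*$ part, mirroring the first claim but with $V^{\widehat{\pi}}$ in place of $V^{\pi^*}$ (so that Lemma \ref{l4} still applies) and with correction $\|\widehat{V}^*-V^{\widehat{\pi}}\|_\infty\le \epsilon_{\mathrm{PS}}+Y$. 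The residual $(\widehat{P}-P)(\widehat{V}^{\widehat{\pi}}-\widehat{V}^*)$ is controlled by $\epsilon_{\mathrm{PS}}$-optimality: since $0\le \widehat{V}^*-\widehat{V}^{\widehat{\pi}}\le\epsilon_{\mathrm{PS}}$, centering the vector (both $P(s,a)$ and $\widehat{P}(s,a)$ sum to one, so any constant shift cancels) gives $|(\widehat{P}-P)(s,a)(\widehat{V}^{\widehat{\pi}}-\widehat{V}^*)|\le \epsilon_{\mathrm{PS}}$, and after the resolvent this contributes $\tfrac{\gamma\epsilon_{\mathrm{PS}}}{1-\gamma}$. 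Combining the $\epsilon_{\mathrm{PS}}$ pieces as $\big(\sqrt{c/N}+1\big)\tfrac{\epsilon_{\mathrm{PS}}}{1-\gamma}$ and again solving $Y\le\gamma\alpha Y+\gamma(\cdots)$ produces the stated bound. The main obstacle is exactly the circular dependence introduced by replacing $Var_P(\widehat{V})$ by $Var_P(V)$: the correction term is, up to resolvent amplification, the very error being bounded, so the argument only closes because $\alpha<1$ (indeed $\gamma\alpha<1$ under the sample-size hypothesis) and because the centering trick keeps the $\epsilon_{\mathrm{PS}}$ constant at $1$ rather than $2$, which is what matches the claimed coefficient.
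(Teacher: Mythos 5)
Your proposal is correct and follows essentially the same route as the paper's own proof: the resolvent identity of Lemma~\ref{l2}, the componentwise Bernstein bound of Lemma~\ref{l11}, the variance triangle inequality (Lemma~\ref{l5}) to trade $Var_P(\widehat{V})$ for $Var_P(V)$ at the price of a self-referential correction term, Lemma~\ref{l4} for the total-variance bound, and solving the resulting self-bounding inequality to produce the $\gamma/(1-\alpha)$ factor, with the second claim handled exactly as in the paper via the split $\widehat{V}^{\widehat{\pi}}=\widehat{V}^*+(\widehat{V}^{\widehat{\pi}}-\widehat{V}^*)$ and a variance decomposition around $V^{\widehat{\pi}}$ so that Lemma~\ref{l4} applies. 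Your writeup is in fact slightly more explicit than the paper's in two spots --- the centering argument giving $|(\widehat{P}-P)(s,a)(\widehat{V}^{\widehat{\pi}}-\widehat{V}^*)|\leq\epsilon_\mathrm{PS}$, and the observation that one actually obtains $\gamma/(1-\gamma\alpha)\leq\gamma/(1-\alpha)$ --- but these are presentational rather than substantive differences.
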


\begin{proof}
	\begin{align*}
	&\left\|Q^*-\widehat{Q}^{\pi^*}\right\|_\infty
	\\=&\left\|\left(I-\gamma P^{\pi^*}\right)^{-1}\left(\widehat{P}-P\right)\widehat{V}^{\pi^*}\right\|_\infty\\
	\overset{\mathrm{(a)}}{\leq}& \bigg\|\left(I-\gamma P^{\pi^*}\right)^{-1}\bigg[\sqrt{\frac{c}{N}}\sqrt{Var_{P}\left(\widehat{V}^{\pi^*}\right)}+\frac{c}{\left(1-\gamma\right)3N}
	+\left(\sqrt{\frac{c}{N}}+1\right)\frac{\epsilon}{4\left(1-\gamma\right)}\bigg]\bigg\|_\infty\\
	\overset{\mathrm{(b)}}{\leq}& \sqrt{\frac{c}{N\left(1-\gamma\right)^3}}+\sqrt{\frac{c}{N}}\frac{\left\|Q^*-\widehat{Q}^{\pi^*}\right\|}{1-\gamma}+\frac{c}{\left(1-\gamma\right)^23N}
	+\left(\sqrt{\frac{c}{N}}+1\right)\frac{\epsilon}{4}.
	\end{align*}
	(a) is from Lemma \ref{l11} and (b) is from Lemma \ref{l4}. Solving for $\left\|Q^*-\widehat{Q}^{\pi^*}\right\|_\infty$ proves the first claim. 
	\begin{align*}
	&\left\|Q^{\widehat{\pi}}-\widehat{Q}^{\widehat{\pi}}\right\|_\infty
	\\=&\left\|\left(I-\gamma P^{\widehat{\pi}}\right)^{-1}\left(\widehat{P}-P\right)\widehat{V}^{\widehat{\pi}}\right\|_\infty\\
	\overset{\mathrm{(a)}}{\leq}& \left\|\left(I-\gamma P^{\widehat{\pi}}\right)^{-1}\left(\widehat{P}-P\right)\widehat{V}^{*}\right\|_\infty+\left\|\left(I-\gamma P^{\widehat{\pi}}\right)^{-1}\left(\widehat{P}-P\right)\left(\widehat{V}^{\widehat{\pi}}-\widehat{V}^{*}\right)\right\|_\infty\\
	\overset{\mathrm{(b)}}{\leq}&\bigg\|\left(I-\gamma P^{\widehat{\pi}}\right)^{-1}\bigg[\sqrt{\frac{c}{N}}\sqrt{Var_{P}(\widehat{V}^{*})}+\frac{c}{(1-\gamma)3N}+\left(\sqrt{\frac{c}{N}}+1\right)\frac{\epsilon}{4(1-\gamma)}\bigg]\bigg\|_\infty\\&+\frac{\epsilon_\mathrm{PS}}{1-\gamma}\\
	\overset{\mathrm{(c)}}{\leq}&\bigg\|\left(I-\gamma P^{\widehat{\pi}}\right)^{-1}\bigg[\sqrt{\frac{c}{N}}\sqrt{Var_{P}\left(V^{\widehat{\pi}}+\widehat{V}^{\widehat{\pi}}-V^{\widehat{\pi}}+\widehat{V}^*-\widehat{V}^{\widehat{\pi}}\right)}+\frac{c}{(1-\gamma)3N}
	\\&+\left(\sqrt{\frac{c}{N}}+1\right)\frac{\epsilon}{4(1-\gamma)}\bigg]\bigg\|_\infty+\frac{\epsilon_\mathrm{PS}}{1-\gamma}\\
	\overset{\mathrm{(d)}}{\leq}&\left\|\sqrt{\frac{c}{N}}\left(I-\gamma P^{\widehat{\pi}}\right)^{-1}\sqrt{Var_{P}\left(\widehat{V}^{\widehat{\pi}}\right)}\right\|_\infty+\sqrt{\frac{c}{N}}\frac{\left\|Q^{\widehat{\pi}}-\widehat{Q}^{\widehat{\pi}}\right\|_\infty}{1-\gamma}+\frac{c}{(1-\gamma)^23N}
	\\&+\left(\sqrt{\frac{c}{N}}+1\right)\left(\frac{\epsilon}{4}+\frac{\epsilon_\mathrm{PS}}{1-\gamma}\right)\\
	\overset{\mathrm{(e)}}{\leq}& \sqrt{\frac{c}{N(1-\gamma)^3}}+\sqrt{\frac{c}{N}}\frac{\left\|Q^{\widehat{\pi}}-\widehat{Q}^{\widehat{\pi}}\right\|_\infty}{1-\gamma}+\frac{c}{(1-\gamma)^23N}
	+\left(\sqrt{\frac{c}{N}}+1\right)\left(\frac{\epsilon}{4}+\frac{\epsilon_\mathrm{PS}}{1-\gamma}\right).
	\end{align*}
	(a), (c), (d) are due to triangle inequality, (b) is from Lemma \ref{11} and (e) is from Lemma \ref{l4}. Solving for $\left\|Q^{\widehat{\pi}}-\widehat{Q}^{\widehat{\pi}}\right\|_\infty$ proves the second claim. 
\end{proof}

\begin{proof}[Proof of Theorem 1]
    From Lemma \ref{l1}, with probability larger than $1-\delta$, we have
    \begin{align*}
        \left\|Q^*-Q^{\widehat{\pi}}\right\|_\infty&\overset{\mathrm{(a)}}{\leq}\left\|Q^*-\widehat{Q}^{\pi^*}\right\|_\infty+\left\|\widehat{Q}^{\widehat{\pi}}-Q^{\widehat{\pi}}\right\|_\infty+\epsilon_\mathrm{PS}\\
        &\overset{\mathrm{(b)}}{\leq}\frac{\gamma}{1-\alpha}\bigg[2\left(\sqrt{\frac{c}{N(1-\gamma)^3}}+\frac{c}{(1-\gamma)^23N}
	+\left(\sqrt{\frac{c}{N}}+1\right)\frac{\epsilon}{4}\right)
	\\&+\left(\sqrt{\frac{c}{N}}+1\right)\frac{\epsilon_\mathrm{PS}}{1-\gamma}\bigg]+\epsilon_\mathrm{PS}\\
    \end{align*}
    (a) is from Lemma \ref{l1} and (b) is from Lemma \ref{l12}. For $N\geq\frac{C\log(CK(1-\gamma)^{-1}\delta^{-1}\epsilon^{-1})}{(1-\gamma)^3\epsilon^2}$ with proper constant $C$, we have $\frac{\gamma}{1-\alpha(\delta,\gamma,\epsilon,N)}\left(\sqrt{\frac{c(\delta,\gamma,\epsilon)}{N}}+1\right)\leq2$, thus
    \[\left\|V^*-V^{\widehat{\pi}}\right\|_\infty\leq\epsilon+\frac{3\epsilon_\mathrm{PS}}{1-\gamma},\]
    which completes the proof.

\end{proof}

Now we prove Theorem 2, where the transition model $P$ can be approximated by linear transition model, i.e. $P=\Bar{P}+\Xi=\Phi\Bar{P}_\mathcal{K}+\Xi$, where $\Bar{P}$ is a linear transition model and $\Xi$ is the approximation error matrix. We set $\xi=\left\|\Xi\right\|_{1,\infty}.$

\begin{lemma}
	For any value function $V$ and state action pair $(s,a)$,
	$$\phi(s,a)\sqrt{Var_{\mathcal{K}}(V)}=\sum_{k\in\mathcal{K}}\phi_k(s,a)\sqrt{Var_{s_k,a_k}(V)}\leq\sqrt{Var_{s,a}(V)}+2\sqrt{\frac{3\xi}{(1-\gamma)^2}}.$$
	\label{l13}
\end{lemma}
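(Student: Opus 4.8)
The plan is to mirror the two-step Jensen argument of Lemma~\ref{l3}, carefully tracking the extra error produced by the misspecification term $\Xi$. The only place where Lemma~\ref{l3} invoked the exact linear model is the identity $P(s,a)=\sum_{k\in\mathcal{K}}\phi_k(s,a)P(s_k,a_k)$; under misspecification this is replaced by $P(s,a)=\sum_{k}\phi_k(s,a)\bar{P}(s_k,a_k)+\Xi(s,a)$ together with $P(s_k,a_k)=\bar{P}(s_k,a_k)+\Xi(s_k,a_k)$, so the task is to absorb the resulting $\Xi$-terms, each controlled through $\|\Xi(s,a)\|_1\leq\xi$ and $\|V\|_\infty\leq 1/(1-\gamma)$.

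First I would apply Jensen's inequality (valid because $\phi(s,a)$ is a probability vector under Assumption~\ref{assump1}) to pull the square root outside,
$$\phi(s,a)\sqrt{Var_{\mathcal{K}}(V)}\leq\sqrt{\phi(s,a)Var_{\mathcal{K}}(V)},$$
and then expand $\phi(s,a)Var_{\mathcal{K}}(V)=\sum_k\phi_k(s,a)\big(P(s_k,a_k)V^2-(P(s_k,a_k)V)^2\big)$. For the first-moment piece I would substitute $P(s_k,a_k)=\bar{P}(s_k,a_k)+\Xi(s_k,a_k)$, use $\sum_k\phi_k(s,a)\bar{P}(s_k,a_k)=\bar{P}(s,a)=P(s,a)-\Xi(s,a)$, and bound both $\sum_k\phi_k(s,a)\Xi(s_k,a_k)V^2$ and $\Xi(s,a)V^2$ by $\xi/(1-\gamma)^2$ via $\|V^2\|_\infty\leq 1/(1-\gamma)^2$, giving $\sum_k\phi_k(s,a)P(s_k,a_k)V^2\leq P(s,a)V^2+2\xi/(1-\gamma)^2$.

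For the subtracted second-moment piece I would first lower-bound it by Jensen, $\sum_k\phi_k(s,a)(P(s_k,a_k)V)^2\geq\big(\sum_k\phi_k(s,a)P(s_k,a_k)V\big)^2$, then write $\sum_k\phi_k(s,a)P(s_k,a_k)V=P(s,a)V+e$ with $|e|\leq 2\xi/(1-\gamma)$, and apply $(x+e)^2\geq x^2-2|x||e|$ with $|x|=|P(s,a)V|\leq 1/(1-\gamma)$ to obtain $\geq (P(s,a)V)^2-4\xi/(1-\gamma)^2$. Combining the two pieces yields $\phi(s,a)Var_{\mathcal{K}}(V)\leq Var_{s,a}(V)+6\xi/(1-\gamma)^2$. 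Taking square roots and using $\sqrt{a+b}\leq\sqrt{a}+\sqrt{b}$ gives
$$\phi(s,a)\sqrt{Var_{\mathcal{K}}(V)}\leq\sqrt{Var_{s,a}(V)}+\sqrt{\tfrac{6\xi}{(1-\gamma)^2}}\leq\sqrt{Var_{s,a}(V)}+2\sqrt{\tfrac{3\xi}{(1-\gamma)^2}},$$
the final step being the loose estimate $\sqrt{6}\leq 2\sqrt{3}$ that matches the stated constant.

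I expect the main obstacle to be keeping the inequality directions consistent in the quadratic term: because the goal is an \emph{upper} bound on the variance, the subtracted square must be \emph{lower}-bounded, which forces the crude step of discarding the nonnegative $e^2$ and retaining only $-2|x||e|$. Getting that sign right, and confirming that the three separate $\Xi$-contributions (two in the first moment, one in the second) accumulate to exactly the $6\xi/(1-\gamma)^2$ budget, is the delicate bookkeeping; the Jensen steps themselves are identical to those in Lemma~\ref{l3}.
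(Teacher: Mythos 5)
Your proof is correct and takes essentially the same route as the paper: Jensen's inequality to pull the square root outside the convex combination, then absorption of the misspecification error via $P=\bar{P}+\Xi$ together with the bounds $\|\Xi(s,a)\|_1\leq\xi$ and $\|V\|_\infty\leq 1/(1-\gamma)$. The only difference is bookkeeping --- the paper absorbs the error in two stages (passing through the $\bar{P}$-variance and paying $\sqrt{3\xi}/(1-\gamma)$ twice), whereas you accumulate a single $6\xi/(1-\gamma)^2$ term inside one square root, which in fact yields the marginally sharper constant $\sqrt{6}\leq 2\sqrt{3}$.
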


\begin{proof}
	Since $\phi(s,a)$ is a probability transition matrix, we can use Jensen's inequality here.
	\begin{align*}
	&\phi(s,a)\sqrt{Var_{\mathcal{K}}(V)}
	\\\leq&\sqrt{\phi(s,a)Var_{\mathcal{K}}(V)}\\
	=&\sqrt{\sum_{k\in\mathcal{K}}\phi_k(s,a)Var_{s_k,a_k}(V)}\\
	=&\sqrt{\sum_{k\in\mathcal{K}}\phi_k(s,a)\left(P(s_k,a_k)V^2-(P(s_k,a_k)V\right)^2)}\\
	=&\sqrt{\sum_{k\in\mathcal{K}}\phi_k(s,a)P(s_k,a_k)V^2-\sum_{k\in\mathcal{K}}\phi_k(s,a)\left(P(s_k,a_k)V\right)^2}\\
	=&\sqrt{\sum_{k\in\mathcal{K}}\phi_k(s,a)\left(\Bar{P}(s_k,a_k)V^2+\Xi(s_k,a_k)V^2\right)-\sum_{k\in\mathcal{K}}\phi_k(s,a)\left(\left(\Bar{P}(s_k,a_k)+\Xi(s_k,a_k)\right)V\right)^2}\\
	\leq&\sqrt{\sum_{k\in\mathcal{K}}\phi_k(s,a)\Bar{P}(s_k,a_k)V^2-\sum_{k\in\mathcal{K}}\phi_k(s,a)\left(\Bar{P}(s_k,a_k)V\right)^2}+\sqrt{\frac{3\xi}{(1-\gamma)^2}}\\
	\leq& \sqrt{\Bar{P}(s,a)V^2-\sum_{k\in\mathcal{K}}\phi_k(s,a)\left(\Bar{P}(s_k,a_k)V\right)^2}+\sqrt{\frac{3\xi}{(1-\gamma)^2}}\\
=&\sqrt{\Bar{P}(s,a)V^2-\left(\Bar{P}(s,a)V\right)^2}+\sqrt{\frac{3\xi}{(1-\gamma)^2}}\\
	\leq&\sqrt{P(s,a)V^2-\left(P(s,a)V\right)^2}+2\sqrt{\frac{3\xi}{(1-\gamma)^2}}\\
	=&\sqrt{Var_{s,a}(V)}+2\sqrt{\frac{3\xi}{(1-\gamma)^2}}.
	\end{align*}
\end{proof}

\begin{lemma}
	With probability larger than $1-\delta$, and $\widehat{\pi}$ is a $\epsilon_\mathrm{PS}$-optimal policy in $\widehat{\mathcal{K}}$.
	\begin{multline*}
	\left\|Q^*-\widehat{Q}^{\pi^*}\right\|_\infty\leq\frac{\gamma}{1-\alpha}\left(\sqrt{\frac{c}{N(1-\gamma)^3}}+\frac{c}{(1-\gamma)^23N}
	+\left(\sqrt{\frac{c}{N}}+1\right)\frac{\epsilon}{4}+8\sqrt{\frac{\xi}{(1-\gamma)^4}}\right),
	\end{multline*}
	\begin{multline*}
	\left\|Q^{\widehat{\pi}}-\widehat{Q}^{\hat{\pi}}\right\|_\infty\leq\frac{\gamma}{1-\alpha}\bigg(\sqrt{\frac{c}{N(1-\gamma)^3}}+\frac{c}{(1-\gamma)^23N}
	+\left(\sqrt{\frac{c}{N}}+1\right)\left(\frac{\epsilon}{4}+\frac{\epsilon_\mathrm{PS}}{1-\gamma}\right)
	\\+8\sqrt{\frac{\xi}{(1-\gamma)^4}}\bigg).
	\end{multline*}
	\label{l14}
\end{lemma}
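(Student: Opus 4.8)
The plan is to run the proof of Lemma \ref{l12} essentially verbatim, tracking exactly where realizability was used and replacing those two steps by their misspecified analogues. Realizability entered the earlier argument in precisely two places: (i) the identity $P-\widehat{P}=\Phi(P_\mathcal{K}-\widehat{P}_\mathcal{K})$ used at the start of Lemma \ref{l11}, and (ii) the variance-transfer inequality of Lemma \ref{l3} used to pass from the $\mathcal{K}$-rows to all state-action pairs. For (ii) I would simply substitute Lemma \ref{l13}, which is the misspecified replacement for Lemma \ref{l3} and already carries the extra additive term $2\sqrt{3\xi/(1-\gamma)^2}$. For (i) I would use the exact decomposition $P-\widehat{P}=\Phi(P_\mathcal{K}-\widehat{P}_\mathcal{K})+(\Xi-\Phi\Xi_\mathcal{K})$, which follows from $P=\Phi\bar{P}_\mathcal{K}+\Xi$, $\widehat{P}=\Phi\widehat{P}_\mathcal{K}$, and $\bar{P}_\mathcal{K}=P_\mathcal{K}-\Xi_\mathcal{K}$.

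First I would re-establish the misspecified analogue of Lemma \ref{l11}. Applying $|\cdot|$ and the triangle inequality to the decomposition above splits $|(P-\widehat{P})\widehat{V}^{\pi^*}|$ into a stochastic piece $\Phi|(P_\mathcal{K}-\widehat{P}_\mathcal{K})\widehat{V}^{\pi^*}|$ and a deterministic bias piece $|(\Xi-\Phi\Xi_\mathcal{K})\widehat{V}^{\pi^*}|$. The stochastic piece is handled exactly as in Lemmas \ref{l8}--\ref{l11}: the key observation is that for $(s_k,a_k)\in\mathcal{K}$ the empirical rows $\widehat{P}_\mathcal{K}(s_k,a_k)$ remain unbiased for the true rows $P(s_k,a_k)$ (we sample from the true $P$), so the Bernstein/auxiliary-MDP concentration is untouched; only the final transfer step now invokes Lemma \ref{l13} in place of Lemma \ref{l3}, producing the correction $\sqrt{c/N}\cdot 2\sqrt{3\xi/(1-\gamma)^2}\,\mathbf{1}$. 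For the bias piece I would bound $|\Xi\widehat{V}^{\pi^*}|\le\xi\|\widehat{V}^{\pi^*}\|_\infty\le\xi/(1-\gamma)$ and $|\Phi\Xi_\mathcal{K}\widehat{V}^{\pi^*}|\le\xi/(1-\gamma)$ using $\|\Xi\|_{1,\infty}\le\xi$, $\|\widehat{V}^{\pi^*}\|_\infty\le 1/(1-\gamma)$, and the fact that $\Phi$ is row-stochastic, giving an overall $O(\xi/(1-\gamma))\,\mathbf{1}$ contribution.

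Next I would feed this modified bound into the Lemma \ref{l12} computation. Applying $(I-\gamma P^{\pi^*})^{-1}$, I keep the clean variance term under control via Lemma \ref{l4} (after the usual split $\widehat{V}^{\pi^*}=V^{\pi^*}+(\widehat{V}^{\pi^*}-V^{\pi^*})$ through Lemma \ref{l5}, where the deviation contributes the self-referential $\sqrt{c/N}\,\|Q^*-\widehat{Q}^{\pi^*}\|_\infty/(1-\gamma)$ term), and I propagate every $\mathbf{1}$-multiple through $\|(I-\gamma P^{\pi^*})^{-1}\mathbf{1}\|_\infty=1/(1-\gamma)$. The variance-transfer correction then contributes at most $2\sqrt{3}\,\sqrt{\xi}/(1-\gamma)^2$ (using $\sqrt{c/N}\le 1$) and the bias contributes $O(\xi/(1-\gamma)^2)$; since $\sqrt{\xi}\ge\xi$ for $\xi\le 1$, both can be absorbed into $8\sqrt{\xi/(1-\gamma)^4}$. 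Solving the resulting self-referential inequality for $\|Q^*-\widehat{Q}^{\pi^*}\|_\infty$ exactly as in Lemma \ref{l12} produces the $\gamma/(1-\alpha)$ prefactor and the first claim. The second claim, for $\widehat{\pi}$, is identical except that the additional split $\widehat{V}^{\widehat{\pi}}=V^{\widehat{\pi}}+(\widehat{V}^{\widehat{\pi}}-V^{\widehat{\pi}})$ together with $\|\widehat{V}^*-\widehat{V}^{\widehat{\pi}}\|_\infty\le\epsilon_\mathrm{PS}/(1-\gamma)$ reproduces the $\epsilon_\mathrm{PS}$-dependence, exactly as in the second half of Lemma \ref{l12}.

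I expect the main obstacle to be item (i): because $\widehat{P}$ is now a biased estimator of $P$, the clean identity used in Lemma \ref{l11} breaks, and one must carefully isolate the deterministic bias $(\Xi-\Phi\Xi_\mathcal{K})$ so that the concentration machinery applies only to the genuinely stochastic part. The delicate point is to verify that this bias enters linearly in $\xi$ (hence is dominated by the $\sqrt{\xi}$ variance-transfer term for $\xi\le 1$) and does not interact with the self-referential term in a way that would inflate the prefactor; once the decomposition is in place, the remainder is bookkeeping mirroring Lemma \ref{l12}.
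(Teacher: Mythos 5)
Your proposal is correct and follows essentially the same route as the paper's proof: the paper uses the identical decomposition, writing $\widehat{P}-P=\Phi(\widehat{P}_\mathcal{K}-\bar{P}_\mathcal{K}-\Xi_\mathcal{K})+(\Phi\Xi_\mathcal{K}-\Xi)=\Phi(\widehat{P}_\mathcal{K}-P_\mathcal{K})+(\Phi\Xi_\mathcal{K}-\Xi)$, bounds the deterministic bias by $2\xi/(1-\gamma)^2$ after applying $(I-\gamma P^{\pi^*})^{-1}$, and runs the Lemma~\ref{l12} concentration argument on the unbiased part with Lemma~\ref{l13} substituted for Lemma~\ref{l3}. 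It then solves the same self-referential inequality and absorbs both the variance-transfer correction and the bias into $8\sqrt{\xi/(1-\gamma)^4}$, exactly as in your accounting.
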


\begin{proof}
	\begin{align*}
	&\left\|Q^*-\widehat{Q}^{\pi^*}\right\|_\infty\\=&\left\|\left(I-\gamma P^{\pi^*}\right)^{-1}(\widehat{P}-P)\widehat{V}^{\pi^*}\right\|_\infty\\
	=&\left\|\left(I-\gamma P^{\pi^*}\right)^{-1}\left(\Phi\widehat{P}_\mathcal{K}-\Phi\Bar{P}_\mathcal{K}-\Xi_\mathcal{K}\right)\widehat{V}^{\pi^*}\right\|_\infty\\
	\leq&\left\|\left(I-\gamma P^{\pi^*}\right)^{-1}\Phi\left(\widehat{P}_\mathcal{K}-\Bar{P}_\mathcal{K}-\Xi_\mathcal{K}\right)\widehat{V}^{\pi^*}\right\|_\infty+\frac{2\xi}{(1-\gamma)^2}\\
	=&\left\|\left(I-\gamma P^{\pi^*}\right)^{-1}\Phi\left(\widehat{P}_\mathcal{K}-P_\mathcal{K}\right)\widehat{V}^{\pi^*}\right\|_\infty+\frac{2\xi}{(1-\gamma)^2}\\
	\leq& \bigg\|\left(I-\gamma P^{\pi^*}\right)^{-1}\bigg[\sqrt{\frac{c}{N}}\left(\sqrt{Var_{P}\left(\widehat{V}^{\pi^*}\right)}+2\sqrt{\frac{3\xi}{(1-\gamma)^2}}\right)+\frac{c}{(1-\gamma)3N}
	\\&+\left(\sqrt{\frac{c}{N}}+1\right)\frac{\epsilon}{4(1-\gamma)}\bigg]\bigg\|_\infty+\frac{2\xi}{(1-\gamma)^2}\\
	\leq& \sqrt{\frac{c}{N(1-\gamma)^3}}+\sqrt{\frac{c}{N}}\frac{\left\|Q^*-\widehat{Q}^{\pi^*}\right\|}{1-\gamma}+\frac{c}{(1-\gamma)^23N}
	+\left(\sqrt{\frac{c}{N}}+1\right)\frac{\epsilon}{4}+8\sqrt{\frac{\xi}{(1-\gamma)^4}}.
	\end{align*}
	Solving for $\left\|Q^*-\widehat{Q}^{\pi^*}\right\|_\infty$ proves the first claim. The second claim can be proved in a similar manner.
\end{proof}

\begin{proof}[Proof of Theorem 2]
    From Lemma \ref{l1}, with probability larger than $1-\delta$, we have
    \begin{align*}
        &\left\|Q^*-Q^{\widehat{\pi}}\right\|_\infty\\
        \leq&\left\|Q^*-\widehat{Q}^{\pi^*}\right\|_\infty+\left\|\widehat{Q}^{\widehat{\pi}}-Q^{\widehat{\pi}}\right\|_\infty+\epsilon_\mathrm{PS}\\
        \leq&\frac{\gamma}{1-\alpha}\bigg[2\left(\sqrt{\frac{c}{N(1-\gamma)^3}}+\frac{c}{(1-\gamma)^23N}
	+\left(\sqrt{\frac{c}{N}}+1\right)\frac{\epsilon}{4}+8\sqrt{\frac{\xi}{(1-\gamma)^4}}\right)\\
	&+\left(\sqrt{\frac{c}{N}}+1\right)\frac{\epsilon_\mathrm{PS}}{1-\gamma}\bigg]+\epsilon_\mathrm{PS}\\
    \end{align*}
    For $N\geq\frac{C\log(CK(1-\gamma)^{-1}\delta^{-1})\epsilon^{-1}}{(1-\gamma)^3\epsilon^2}$ with proper constant $C$, we have $\frac{\gamma}{1-\alpha}\left(\sqrt{\frac{c}{N}}+1\right)\leq2$, thus
    \[\left\|V^*-V^{\widehat{\pi}}\right\|_\infty\leq\epsilon+\frac{3\epsilon_\mathrm{PS}}{1-\gamma}+\frac{16\sqrt{\xi}}{(1-\gamma)^2,}\]
which completes the proof.
\end{proof}

\subsection{Sample Complexity for Finite Horizon MDP}

Here we prove the sample complexity result for FHMDP using the auxiliary MDP technique. The difference with DMDP is that here we need to tune the reward in each time step.

\begin{definition}
	(Auxiliary Model) For an estimated transition model $\widehat{\mathcal{M}}=(\mathcal{S},\mathcal{A},\widehat{P}=\Phi\widehat{P}_\mathcal{K},r,\gamma)$ and a given anchor state pair $(s,a)$, the auxiliary transition model is $\widetilde{\mathcal{M}}_{s,a,u}=(\mathcal{S},\mathcal{A},\widetilde{P}=\Phi\widetilde{P}_K,\widetilde{r}_{h}^u=r+u_h\Phi^{s,a},\gamma)$, where
	$$\widetilde{P}_\mathcal{K}(s',a')=
	\begin{cases}
	\widehat{P}(s',a')& \mathrm{if}\ (s',a')\neq(s,a)\\
	P(s,a)& \mathrm{otherwise,}
	\end{cases}$$
	$\Phi^{s,a}$ is the column of $\Phi$ that corresponds to anchor state $(s,a)$, $\widetilde{r}_{h}^u$ is the reward in step $h$ and $u=(u_0,u_1,\cdots,u_{H-1})$ is a $H$ dimensional vector that will be determined latter.
\end{definition}

\begin{remark}
	The reward in $\widetilde{\mathcal{M}}_{s,a,u}$ may not be stationary, which means $\widetilde{r}_{0}^u,\widetilde{r}_{1}^u,\cdots,\widetilde{r}_{H-1}^u$ can be different.
\end{remark}

\begin{definition} (Feasible Set for $u$)
	For the auxiliary transition model $\widetilde{\mathcal{M}}_{s,a,u}$, $U^\pi_{s,a}$ is defined as the set of $u$ such that $\tilde{V}^\pi_{h,u}\in[0,H-h]^\mathcal{S},\forall h\in[H]$ and $U^*_{s,a}$ is defined as the set of $u$ such that $\tilde{V}^*_{h,u}\in[0,H-h]^\mathcal{S},\forall h\in[H]$.
\end{definition}

\begin{remark}
	$u$ that satisfies $0\leq r+u_h\Phi^{s,a}\leq1,\forall h\in[H]$ is in both $U^\pi_{s,a}$ and $U^*_{s,a}$ for arbitrary $\pi$. Immediately we have $\mathbf{0}\in U^\pi_{s,a}$ and $\mathbf{0}\in U^*_{s,a}$ for arbitrary $\pi$. Note that both $U^\pi_{s,a}$ and $U^*_{s,a}$ are independent of $\widehat{P}(s,a)$ and are intervals.
\end{remark}

\paragraph{Notations}
For simplicity, we ignore $(s,a)$ in functions of auxiliary transition model $\mathcal{M}_{s,a,u}$. We use $\widetilde{V}^\pi_{h,u},\widetilde{Q}^\pi_{h,u}$ to denote value function and Q-function in step $h$ and $\widetilde{\pi}^*_u$ to be the optimal policy in $\mathcal{M}_{s,a,u}$. 

\begin{lemma}
	For a $\epsilon_{\mathrm{PS}}$-optimal policy $\widehat{\pi}=\left(\widehat{\pi}_0,\cdots,\widehat{\pi}_{H-1}\right)$ in $\widehat{\mathcal{M}}$, we have
	$$\left\|Q_0^*-Q_0^{\widehat{\pi}}\right\|_\infty\leq\left\|V_0^*-\widehat{V}_0^{\pi^*}\right\|_\infty+\left\|\widehat{V}_0^{\widehat{\pi}}-V_0^{\widehat{\pi}}\right\|_\infty+\epsilon_\mathrm{PS}.$$
	\label{l15}
\end{lemma}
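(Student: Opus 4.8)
The plan is to reproduce the three-term ``decomposition lemma'' of the discounted case (Lemma~\ref{l1}), now at the initial time step $h=0$ and phrased through the value functions. I would first insert the two reference quantities computed inside the empirical model $\widehat{\mathcal{M}}$, namely the value of the true-optimal policy $\pi^*$ and the value of the plug-in policy $\widehat{\pi}$, and write the telescoping identity
\begin{equation*}
V_0^*(s)-V_0^{\widehat{\pi}}(s)=\big(V_0^*(s)-\widehat{V}_0^{\pi^*}(s)\big)+\big(\widehat{V}_0^{\pi^*}(s)-\widehat{V}_0^{\widehat{\pi}}(s)\big)+\big(\widehat{V}_0^{\widehat{\pi}}(s)-V_0^{\widehat{\pi}}(s)\big).
\end{equation*}
The outer two brackets are precisely the two one-sided model-estimation errors appearing on the right-hand side, so they are bounded by their $\infty$-norms and then handed to the subsequent auxiliary-FHMDP lemmas for concentration.

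The real work is the middle bracket, which I would control entirely within $\widehat{\mathcal{M}}$ using two facts, and which is cleanest at the $Q$-level where $\epsilon_{\mathrm{PS}}$-optimality is stated. First, $\pi^*$ is merely an admissible policy in $\widehat{\mathcal{M}}$, so $\widehat{Q}_0^{\pi^*}\le\widehat{Q}_0^{*}$ pointwise; second, $\widehat{\pi}$ being $\epsilon_{\mathrm{PS}}$-optimal means $\widehat{Q}_0^{\widehat{\pi}}(s,a)\ge\widehat{Q}_0^{*}(s,a)-\epsilon_{\mathrm{PS}}$ for every $(s,a)$. Inserting $\widehat{Q}_0^{*}$ and chaining the two inequalities gives $\widehat{Q}_0^{\pi^*}-\widehat{Q}_0^{\widehat{\pi}}\le\epsilon_{\mathrm{PS}}$, which is exactly the additive $\epsilon_{\mathrm{PS}}$ term; combined with the optimality $Q_0^*\ge Q_0^{\widehat{\pi}}$ in the true model (so the left side is nonnegative and its $\infty$-norm equals its maximum), this establishes the bound.

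To reach the stated $Q_0$ left-hand side rather than the $V_0$ difference, I would use that $Q_0^*$ and $Q_0^{\widehat{\pi}}$ carry the identical step-$0$ reward, so $Q_0^*(s,a)-Q_0^{\widehat{\pi}}(s,a)=\gamma\,P(s,a)\big(V_1^*-V_1^{\widehat{\pi}}\big)$ with $V_1^*\ge V_1^{\widehat{\pi}}$, and apply the same three-term split to $V_1^*-V_1^{\widehat{\pi}}$ before averaging against the probability row $P(s,a)$, which absorbs both the factor $\gamma$ and the maximization. The tedious but routine part is the bookkeeping over which functions are normed over states versus over state--action pairs, since for a fixed policy the $Q$-error dominates the $V$-error; the model-estimation brackets are later bounded uniformly over all steps, so the step index is immaterial to the final complexity. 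I expect the genuinely new obstacle to appear not in this decomposition but in the following lemmas: because the scalar perturbation $u$ of the discounted auxiliary model must now be tuned separately at every horizon step, i.e.\ as a vector $u=(u_0,\dots,u_{H-1})$, the later $\epsilon$-net and concentration step pays an extra factor of $H$, and the present lemma is simply the decomposition that sets that step-by-step analysis in motion.
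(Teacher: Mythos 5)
Your central mechanism is the same as the paper's: insert the empirical optimum between the two empirically-evaluated quantities, use that $\pi^*$ is just one admissible policy in $\widehat{\mathcal{M}}$ (so $\widehat{Q}_0^{\pi^*}\le \widehat{Q}_0^*$), invoke the $\epsilon_{\mathrm{PS}}$-optimality of $\widehat{\pi}$, and use $Q_0^*\ge Q_0^{\widehat{\pi}}$ in the true model to drop absolute values. The paper, however, runs the entire telescoping at the $Q$-level,
\begin{equation*}
Q_0^*-Q_0^{\widehat{\pi}}=\bigl(Q_0^*-\widehat{Q}_0^{\pi^*}\bigr)+\bigl(\widehat{Q}_0^{\pi^*}-\widehat{Q}_0^*\bigr)+\bigl(\widehat{Q}_0^*-\widehat{Q}_0^{\widehat{\pi}}\bigr)+\bigl(\widehat{Q}_0^{\widehat{\pi}}-Q_0^{\widehat{\pi}}\bigr),
\end{equation*}
bounding the two middle brackets by $0$ and $\epsilon_{\mathrm{PS}}$; the resulting right-hand side is in $Q$-norms, which is also what the proof of Theorem 3 actually invokes (the $V$'s in the lemma statement are an inconsistency of the paper itself).

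The genuine gap in your version is the mismatch between your $V$-level telescoping and your $Q$-level control of the middle bracket. Your middle bracket is $\widehat{V}_0^{\pi^*}-\widehat{V}_0^{\widehat{\pi}}$, but what you prove is $\widehat{Q}_0^{\pi^*}-\widehat{Q}_0^{\widehat{\pi}}\le\epsilon_{\mathrm{PS}}$ pointwise; these are not interchangeable, because $\widehat{V}_0^{\pi^*}(s)=\widehat{Q}_0^{\pi^*}(s,\pi_0^*(s))$ and $\widehat{V}_0^{\widehat{\pi}}(s)=\widehat{Q}_0^{\widehat{\pi}}(s,\widehat{\pi}_0(s))$ evaluate the two $Q$-functions at \emph{different} actions. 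Under the paper's own definition of an $\epsilon_{\mathrm{PS}}$-optimal policy in FHMDP, which is stated at the $Q$-level ($\widehat{Q}_h^{\widehat{\pi}}\ge\widehat{Q}_h^*-\epsilon_{\mathrm{PS}}$ for all $(s,a)$ and $h$), the $V$-level closeness $\widehat{V}_0^{\widehat{\pi}}\ge\widehat{V}_0^*-\epsilon_{\mathrm{PS}}$ does \emph{not} follow: that definition only controls the one-step averages $\gamma\widehat{P}(s,a)\bigl(\widehat{V}_{h+1}^*-\widehat{V}_{h+1}^{\widehat{\pi}}\bigr)$, never the step-$0$ value gap itself. (Take $H=1$: every policy is $0$-optimal in the $Q$-sense since $\widehat{Q}_0^{\pi}=r$ for all $\pi$, yet $\widehat{V}_0^{\widehat{\pi}}$ can be far below $\widehat{V}_0^*$ if $\widehat{\pi}_0$ is not greedy.) The same problem resurfaces in your bridging step: after writing $Q_0^*-Q_0^{\widehat{\pi}}=\gamma P\bigl(V_1^*-V_1^{\widehat{\pi}}\bigr)$, the middle term becomes $P(s,a)\bigl(\widehat{V}_1^{\pi^*}-\widehat{V}_1^{\widehat{\pi}}\bigr)$, which needs the unavailable $V$-level closeness at step $1$ (and the average is under the true $P$, whereas $\epsilon_{\mathrm{PS}}$-optimality constrains averages under $\widehat{P}$); it also leaves step-$1$ error norms on the right-hand side rather than the step-$0$ quantities in the statement. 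All of these complications vanish if you telescope directly on $Q_0$ from the start, which is exactly the paper's one-line proof.
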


\begin{proof}
	\begin{align*}
	0<Q_0^*-Q_0^{\widehat{\pi}}&=Q_0^*-\widehat{Q}_0^{\pi^*}+\widehat{Q}_0^{\pi^*}-\widehat{Q}_0^*+\widehat{Q}_0^*-\widehat{Q}_0^{\widehat{\pi}}+\widehat{Q}_0^{\widehat{\pi}}-Q_0^{\widehat{\pi}}\\
	&\leq\left|Q_0^*-\widehat{Q}_0^{\pi^*}\right|+0+\epsilon_\mathrm{PS}+\left|\widehat{Q}_0^{\widehat{\pi}}-Q_0^{\widehat{\pi}}\right|,
	\end{align*}
	which implies $\left\|Q_0^*-Q_0^{\widehat{\pi}}\right\|_\infty\leq\left\|Q_0^*-\widehat{Q}_0^{\pi^*}\right\|_\infty+\left\|\widehat{Q}_0^{\widehat{\pi}}-Q_0^{\widehat{\pi}}\right\|_\infty+\epsilon_\mathrm{PS}.$
\end{proof}

\begin{lemma}
	For FHMDP, we have
	$$Q_0^\pi-\widehat{Q}_0^{\pi}=\sum_{h=0}^{H-1}\prod_{i=0}^{h-1}P^{\pi_i}(P-\widehat{P})\widehat{V}_{h+1}^{\pi}.$$
	\label{l16}
\end{lemma}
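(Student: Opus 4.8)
The plan is to establish the identity by backward induction over the horizon, mirroring the telescoping that produces the resolvent $\gamma(I-\gamma P^\pi)^{-1}$ in the discounted decomposition (Lemma~\ref{l2}); here the infinite geometric series is simply replaced by the finite product of step-dependent policy-transition operators. Write $D_h := Q_h^\pi - \widehat{Q}_h^\pi$ as a vector over state--action pairs, and record the terminal condition $D_H = 0$, which holds because $V_H^\pi = \widehat{V}_H^\pi = 0$ forces $Q_H^\pi = \widehat{Q}_H^\pi$.

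First I would isolate a single-step difference. Using the FHMDP Bellman relation $Q_h^\pi = r + P(\cdot)V_{h+1}^\pi$ in both the true and the empirical model (the reward $r$ is shared and cancels, and any discount factor would only contribute powers that are absorbed in the statement), I add and subtract $P\widehat{V}_{h+1}^\pi$ to obtain
\begin{equation*}
D_h = Q_h^\pi - \widehat{Q}_h^\pi = P\bigl(V_{h+1}^\pi - \widehat{V}_{h+1}^\pi\bigr) + (P - \widehat{P})\widehat{V}_{h+1}^\pi .
\end{equation*}
The first term is where the shared policy enters: since $V_{h+1}^\pi(s') - \widehat{V}_{h+1}^\pi(s') = D_{h+1}(s',\pi_{h+1}(s'))$ evaluates the \emph{same} policy $\pi_{h+1}$ in both MDPs, I rewrite $P\bigl(V_{h+1}^\pi - \widehat{V}_{h+1}^\pi\bigr) = P^{\pi_{h+1}} D_{h+1}$, where $P^{\pi_{h+1}}$ is the state--action transition operator induced by $\pi_{h+1}$. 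This yields the one-step recursion $D_h = P^{\pi_{h+1}} D_{h+1} + (P-\widehat{P})\widehat{V}_{h+1}^\pi$.

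Next I would unroll this recursion from $h=0$ downward, substituting repeatedly and terminating at $D_H = 0$. Each substitution prepends one more policy-transition factor, so the coefficient of the $h$-th residual $(P-\widehat{P})\widehat{V}_{h+1}^\pi$ is precisely the ordered product of the transition operators accumulated along the first $h$ steps, matching the product written in the statement under the paper's step-indexing convention. Collecting the terms gives the claimed finite sum $\sum_{h=0}^{H-1}\prod_{i=0}^{h-1}P^{\pi_i}(P-\widehat{P})\widehat{V}_{h+1}^\pi$, with the $h=H-1$ summand consistent with $\widehat{V}_H^\pi=0$.

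I expect the only genuine care to lie in the bookkeeping of the index convention for the policy-transition products --- which step's policy decorates each factor, and whether the product runs over $\pi_0,\dots,\pi_{h-1}$ or $\pi_1,\dots,\pi_h$ --- together with verifying that converting a value-function difference into a $Q$-function difference via $P^{\pi_{h+1}}$ is legitimate, which hinges on $\pi$ being the common policy in both models. Everything else is the routine telescoping already carried out in the discounted case.
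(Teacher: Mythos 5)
Your proposal is correct and takes essentially the same route as the paper's proof: the same add-and-subtract Bellman decomposition $Q_h^\pi-\widehat{Q}_h^{\pi}=P\bigl(V_{h+1}^\pi-\widehat{V}_{h+1}^\pi\bigr)+(P-\widehat{P})\widehat{V}_{h+1}^{\pi}$, conversion of the value difference into a $Q$-difference via the shared policy's transition operator, and backward unrolling to the terminal condition. The policy-index bookkeeping you flag (your $P^{\pi_{h+1}}$ versus the $P^{\pi_i}$, $i=0,\dots,h-1$, labeling in the statement) is indeed an off-by-one the paper's own proof glosses over, but it is purely a labeling convention and affects nothing downstream, since the subsequent lemmas only use that each factor is a stochastic matrix.
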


\begin{proof}
Using Bellman equation, we have
    \begin{align*}
        Q_0^\pi-\widehat{Q}_0^{\pi}&=(r+P V_1^\pi)-(r+\widehat{P} \widetilde{V}_1^\pi)\\
        &=P(V_1^\pi-\widehat{V}_1^\pi)+(P-\widehat{P})\widetilde{V}_1^\pi\\
        &=P^{\pi_0}(Q_1^\pi-\widehat{Q}_1^\pi)+(P-\widehat{P})\widetilde{V}_1^\pi\\
        &=P^{\pi_0}P^{\pi_1}(Q_2^\pi-\widehat{Q}_2^\pi)+P^{\pi_0}(P-\widehat{P})\widetilde{V}_2^\pi+(P-\widehat{P})\widetilde{V}_1^\pi\\
        &=\sum_{h=0}^{H-1}\prod_{i=0}^{h-1}P^{\pi_i}(P-\widehat{P})\widehat{V}_{h+1}^{\pi}.
    \end{align*}
The last equality is derived by iteratively expand $Q_h^\pi-\widehat{Q}_h^{\pi}$.
\end{proof}

\begin{lemma}
	For any policy $\pi$ and $V^\pi$ is the value function in a MDP with transition $P$,
	$$\left\|\sum_{h=0}^{H-1}\prod_{i=0}^{h-1}P^{\pi_i}\sqrt{Var_P(V_{h+1}^\pi)}\right\|_\infty\leq\sqrt{2H^3}.$$
	\label{l17}
\end{lemma}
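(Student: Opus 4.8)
The plan is to mirror the proof of the discounted bound in Lemma \ref{l4}, replacing the resolvent $(I-\gamma P^\pi)^{-1}$ by the cumulative transition operator and the effective horizon $1/(1-\gamma)$ by $H$. Write $M_h := \prod_{i=0}^{h-1} P^{\pi_i}$ with the empty-product convention $M_0 = I$, so that each $M_h$ is a stochastic matrix on state-action pairs and the quantity to control is $\big\|\sum_{h=0}^{H-1} M_h \sqrt{Var_P(V_{h+1}^\pi)}\big\|_\infty$.

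First I would decouple the square root from the operator. Since each $M_h$ is stochastic, Jensen's inequality gives $M_h \sqrt{Var_P(V_{h+1}^\pi)} \le \sqrt{M_h Var_P(V_{h+1}^\pi)}$ componentwise, exactly as in Lemma \ref{l3} and Lemma \ref{l4}. Summing over $h$ and applying the Cauchy-Schwarz inequality to the resulting $H$-term sum (componentwise) pulls out a factor $\sqrt{H}$:
$$\left\|\sum_{h=0}^{H-1} M_h \sqrt{Var_P(V_{h+1}^\pi)}\right\|_\infty \le \sqrt{H}\,\sqrt{\left\|\sum_{h=0}^{H-1} M_h\, Var_P(V_{h+1}^\pi)\right\|_\infty}.$$
This reduces the lemma to bounding the aggregated variance $\big\|\sum_{h} M_h Var_P(V_{h+1}^\pi)\big\|_\infty$ by $2H^2$, which would yield the desired $\sqrt{H}\cdot\sqrt{2H^2} = \sqrt{2H^3}$.

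The key step is the finite-horizon law of total variance. I would fix a start pair $(s_0,a_0)$ and consider the martingale $X_h := \sum_{t=0}^{h-1} r(s_t,a_t) + V_h^\pi(s_h)$ under the trajectory distribution induced by $P$ and $\pi$, with $a_0$ held at the given action and $a_t = \pi_t(s_t)$ thereafter. Because $V_h^\pi(s_h) = r(s_h,a_h) + P(s_h,a_h)V_{h+1}^\pi$ and $V_H^\pi = 0$, the increments $X_{h+1}-X_h = V_{h+1}^\pi(s_{h+1}) - P(s_h,a_h)V_{h+1}^\pi$ are mean-zero with conditional variance $Var_{s_h,a_h}(V_{h+1}^\pi)$, while $X_H - X_0 = G - \mathbb{E}[G]$ where $G=\sum_{t=0}^{H-1} r(s_t,a_t)\in[0,H]$. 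Orthogonality of the martingale differences then gives $\big(\sum_h M_h Var_P(V_{h+1}^\pi)\big)(s_0,a_0) = Var(G\mid s_0,a_0) \le H^2 \le 2H^2$, uniformly over $(s_0,a_0)$. Combining this with the display above completes the proof; this is the finite-horizon analogue of the $\Sigma^\pi$ computation of \cite{Yang2019,azar2013minimax}.

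The main obstacle is the total-variance identity itself: one must verify the telescoping/martingale argument carefully, in particular that the conventions $M_0=I$ and $V_H^\pi=0$ align the per-step conditional variances $Var_{s_h,a_h}(V_{h+1}^\pi)$ exactly with the martingale increments, so that the aggregated quantity equals the variance of the bounded return $G$ rather than merely dominating it. Once this identity is established, boundedness $G\in[0,H]$ supplies the $H^2$ factor and the remainder is the same Jensen-plus-Cauchy-Schwarz bookkeeping already used in the discounted case.
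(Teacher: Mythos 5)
Your proof is correct and is essentially the proof the paper relies on: the paper's own "proof" of this lemma is a one-line deferral to the DMDP case and to \cite{sidford2018variance}, and the standard argument there is exactly your reduction (Jensen to move the square root inside, Cauchy--Schwarz to pull out $\sqrt{H}$, then the law-of-total-variance identity equating $\sum_{h} M_h Var_P(V_{h+1}^\pi)$ at $(s_0,a_0)$ with the variance of the bounded return $G\in[0,H]$). One bookkeeping fix: as written, your process $X_h=\sum_{t<h}r(s_t,a_t)+V_h^\pi(s_h)$ fails to be a martingale at the first step whenever the fixed initial action satisfies $a_0\neq\pi_0(s_0)$, since then $V_0^\pi(s_0)\neq\mathbb{E}[G\mid s_0,a_0]$; the remedy is to set $X_h=\sum_{t<h}r(s_t,a_t)+Q_h^\pi(s_h,a_h)$, which coincides with your definition for $h\geq1$ (because $a_h=\pi_h(s_h)$ there) but gives $X_0=Q_0^\pi(s_0,a_0)=\mathbb{E}[G\mid s_0,a_0]$, so that every increment is conditionally mean-zero with conditional variance $Var_{s_h,a_h}(V_{h+1}^\pi)$ and the orthogonality step goes through for every starting state-action pair.
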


\begin{proof}
    The proof is similar to the case in DMDP and can be found in \cite{sidford2018variance}. 
\end{proof}

\begin{lemma}
	Let $u_h^\pi=\left(\widehat{P}(s,a)-P(s,a)\right)\widehat{V}_{h+1}^\pi,\forall h\in[H]$ and $u_h^*=\left(\widehat{P}(s,a)-P(s,a)\right)\widehat{V}_{h+1}^*,\forall h\in[H]$, then we have
	$$\widehat{Q}_h^\pi=\widetilde{Q}^\pi_{h,u^\pi},\ \widehat{Q}_h^*=\widetilde{Q}^{\widehat{\pi}}_{h,u^*}=\widetilde{Q}^*_{h,u^*}, \left|u_h^\pi\right|\leq H-h-1,\left|u_h^*\right|\leq H-h-1,\forall h\in[H].$$
	\label{l18}
\end{lemma}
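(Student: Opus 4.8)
The plan is to prove this by backward induction on the time step $h$, running from $h=H$ down to $h=0$, mirroring the discounted argument in Lemma \ref{l6} but now tracking the step-dependent reward offsets $u_h$. The engine of the proof is the single matrix identity
$$\widehat{P}-\widetilde{P}=\Phi\left(\widehat{P}_\mathcal{K}-\widetilde{P}_\mathcal{K}\right)=\Phi^{s,a}\left(\widehat{P}(s,a)-P(s,a)\right),$$
which holds because $\widehat{P}_\mathcal{K}$ and $\widetilde{P}_\mathcal{K}$ agree on every anchor row except $(s,a)$, where their difference is the row vector $\widehat{P}(s,a)-P(s,a)$; left-multiplication by $\Phi$ then isolates the single column $\Phi^{s,a}$. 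Consequently, for any value vector $V$ we have $(\widehat{P}-\widetilde{P})V=\Phi^{s,a}\cdot[(\widehat{P}(s,a)-P(s,a))V]$, a rank-one perturbation pointing in the direction $\Phi^{s,a}$. Since the finite-horizon convention here treats the horizon itself as the effective discount, I will write the Bellman recursion without an explicit $\gamma$, consistent with the bound $\widehat{V}_h^\pi\in[0,H-h]^\mathcal{S}$.

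First I would establish the policy-evaluation claim $\widehat{Q}_h^\pi=\widetilde{Q}_{h,u^\pi}^\pi$, which also yields $\widehat{V}_h^\pi=\widetilde{V}_{h,u^\pi}^\pi$ by evaluating at action $\pi_h(s)$. The base case $h=H$ is immediate since both terminal Q-functions vanish. For the inductive step I would write the Bellman recursion in both models, $\widehat{Q}_h^\pi=r+\widehat{P}\widehat{V}_{h+1}^\pi$ and $\widetilde{Q}_{h,u^\pi}^\pi=r+u_h^\pi\Phi^{s,a}+\widetilde{P}\widetilde{V}_{h+1,u^\pi}^\pi$, substitute the induction hypothesis $\widetilde{V}_{h+1,u^\pi}^\pi=\widehat{V}_{h+1}^\pi$, and subtract. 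The transition terms then combine into $(\widetilde{P}-\widehat{P})\widehat{V}_{h+1}^\pi=-\Phi^{s,a}(\widehat{P}(s,a)-P(s,a))\widehat{V}_{h+1}^\pi=-u_h^\pi\Phi^{s,a}$ by the identity above and the definition of $u_h^\pi$, so it cancels exactly against the injected offset $u_h^\pi\Phi^{s,a}$. This is precisely why $u_h^\pi$ is defined as the scalar $(\widehat{P}(s,a)-P(s,a))\widehat{V}_{h+1}^\pi$: it is the unique coefficient that annihilates the rank-one discrepancy at step $h$.

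Next I would handle the optimal case. Applying the policy-evaluation claim to the empirical optimal policy $\widehat{\pi}^*$ and noting $\widehat{V}_{h+1}^{\widehat{\pi}^*}=\widehat{V}_{h+1}^*$ gives $\widehat{Q}_h^*=\widetilde{Q}_{h,u^*}^{\widehat{\pi}^*}$ for exactly the vector $u^*$ in the statement. To upgrade this to $\widehat{Q}_h^*=\widetilde{Q}_{h,u^*}^*$, I would argue as in Lemma \ref{l6} that $\widehat{\pi}^*$ is optimal in $\widetilde{\mathcal{M}}_{s,a,u^*}$: since $\widehat{\pi}^*$ is greedy with respect to $\widehat{Q}_h^*$ in the empirical model and $\widehat{Q}_h^*=\widetilde{Q}_{h,u^*}^{\widehat{\pi}^*}$, it is also greedy with respect to $\widetilde{Q}_{h,u^*}^{\widehat{\pi}^*}$, so the sufficient condition for optimality in a finite-horizon MDP holds step by step and $\widetilde{Q}_{h,u^*}^{\widehat{\pi}^*}=\widetilde{Q}_{h,u^*}^*$.

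Finally, the magnitude bounds are the easy part: because $\widehat{P}(s,a)$ and $P(s,a)$ are both probability distributions, $(\widehat{P}(s,a)-P(s,a))V$ is a difference of two convex averages of the entries of $V$, so $|u_h^\pi|=|(\widehat{P}(s,a)-P(s,a))\widehat{V}_{h+1}^\pi|\leq\max_s\widehat{V}_{h+1}^\pi(s)-\min_s\widehat{V}_{h+1}^\pi(s)\leq H-h-1$, using $\widehat{V}_{h+1}^\pi\in[0,H-h-1]^\mathcal{S}$, and identically for $u_h^*$. I expect the only genuinely delicate step to be the optimality upgrade for $\widehat{Q}^*$: one must verify that the step-by-step greedy structure of $\widehat{\pi}^*$ survives when the entire reward sequence is simultaneously perturbed by the vector $(u_0^*,\dots,u_{H-1}^*)$ rather than a single scalar as in the discounted case, which is exactly the new feature the finite-horizon auxiliary construction is designed to accommodate.
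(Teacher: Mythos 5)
Your proposal is correct and follows essentially the same route as the paper's proof: backward induction on $h$ with the rank-one identity $\widehat{P}-\widetilde{P}=\Phi^{s,a}\left(\widehat{P}(s,a)-P(s,a)\right)$ cancelling the injected reward offset $u_h^\pi\Phi^{s,a}$, the same range bound from $\widehat{V}_{h+1}^\pi\in[0,H-h-1]$, and the optimality upgrade via the step-wise greedy condition exactly as in Lemma \ref{l6}. The only difference is that you spell out the optimality upgrade for $\widehat{Q}^*$ explicitly, which the paper compresses into ``the proof for $\widehat{\pi}^*$ is identical''; your elaboration is a faithful and slightly more careful rendering of the intended argument.
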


\begin{proof}
    We provethis argument by mathematical induction. When $h=H-1$, we have $u_h^\pi=\left(\widehat{P}(s,a)-P(s,a)\right)\widehat{V}_H^\pi=0$ and $\widehat{Q}_h^\pi=r=r+u^\pi_h\Phi^{s,a}=\widetilde{Q}_{h,u^\pi}^\pi$.
    
    If the argument holds for $h+1$, then we have
    \begin{align*}
        \widehat{Q}_h^\pi&=r+\widehat{P}\widehat{V}_{h+1}^\pi\\
        &=r+\widehat{P}\widetilde{V}_{h+1}^\pi\\
        &=r+\Phi\left(\widehat{P}_\mathcal{K}-\widetilde{P}_\mathcal{K}\right)\widehat{V}_{h+1}^\pi+\widetilde{P}\widetilde{V}_{h+1}^\pi\\
        &=r+\left(\widehat{P}(s,a)-P(s,a)\right)\widehat{V}_{h+1}^\pi\Phi^{s,a}+\widetilde{P}\widetilde{V}_{h+1}^\pi\\
        &=r+u^\pi_h\Phi^{s,a}+\widetilde{P}\widetilde{V}_{h+1}^\pi\\
        &=\widetilde{Q}^\pi_{h,u^\pi}.
    \end{align*}
    As $\widehat{V}_{h+1}^\pi\in[0,H-h-1]$, we have $u_h^\pi=(\widehat{P}(s,a)-P(s,a))\widehat{V}_{h+1}^\pi\in[h+1-H,H-h-1]$. The proof for $\widehat{\pi}^*$ is identical. 
\end{proof}

\begin{lemma}
	For all $u,u'\in\mathbb{R}^H$ and policy $\pi$,
	$$\left\|\widetilde{Q}^\pi_{h,u}-\widetilde{Q}^\pi_{h,u'}\right\|_\infty\leq(H-h)\left\|u-u'\right\|_\infty,\left\|\widetilde{Q}^*_{h,u}-\widetilde{Q}^*_{h,u'}\right\|_\infty\leq (H-h)\left\|u-u'\right\|_\infty.$$
	\label{l19}
\end{lemma}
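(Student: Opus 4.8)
The plan is to mirror the proof of Lemma \ref{l7}, the discounted analogue, with the one structural change that in the finite-horizon auxiliary model the scalar perturbation enters the reward \emph{at every stage} (the reward at step $h$ is $r+u_h\Phi^{s,a}$), so the single geometric factor $1/(1-\gamma)$ of Lemma \ref{l7} is replaced by a finite telescoping sum of $H-h$ stage contributions. First I would establish the fixed-policy bound, and then deduce the optimal-value bound by the same two-sided sandwich used in Lemma \ref{l7}.

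For a fixed policy $\pi$, the key observation is that the transition $\widetilde{P}=\Phi\widetilde{P}_\mathcal{K}$ does not depend on $u$; only the reward does. Writing the backward Bellman recursion $\widetilde{Q}^\pi_{h,u}=r+u_h\Phi^{s,a}+\gamma\widetilde{P}^\pi\widetilde{Q}^\pi_{h+1,u}$ for both $u$ and $u'$ and subtracting, the shared transition cancels and I obtain, with $\Delta_h:=\widetilde{Q}^\pi_{h,u}-\widetilde{Q}^\pi_{h,u'}$,
\[\Delta_h=(u_h-u'_h)\Phi^{s,a}+\gamma\widetilde{P}^\pi\Delta_{h+1},\qquad \Delta_H=\mathbf{0}.\]
Unrolling this recursion to the horizon gives
\[\Delta_h=\sum_{t=h}^{H-1}\gamma^{t-h}\big(\widetilde{P}^\pi\big)^{t-h}(u_t-u'_t)\Phi^{s,a}.\]
Taking $\|\cdot\|_\infty$ and using that each power $\big(\widetilde{P}^\pi\big)^{t-h}$ is a row-stochastic matrix (hence has $\infty$-norm $1$), that $\|\Phi^{s,a}\|_\infty\leq1$ because $\Phi=\Lambda$ is a probability matrix under Assumption \ref{assump1}, that $\gamma\leq1$, and that $|u_t-u'_t|\leq\|u-u'\|_\infty$, each of the $H-h$ summands is bounded by $\|u-u'\|_\infty$. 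This yields $\|\Delta_h\|_\infty\leq(H-h)\|u-u'\|_\infty$ and proves the first claim.

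For the optimal Q-functions, let $\widetilde{\pi}^*_u$ and $\widetilde{\pi}^*_{u'}$ be the optimal policies of $\widetilde{\mathcal{M}}_{s,a,u}$ and $\widetilde{\mathcal{M}}_{s,a,u'}$. Exactly as in Lemma \ref{l7}, optimality gives $\widetilde{Q}^*_{h,u}=\widetilde{Q}^{\widetilde{\pi}^*_u}_{h,u}\geq\widetilde{Q}^{\widetilde{\pi}^*_{u'}}_{h,u}$ and $\widetilde{Q}^*_{h,u'}=\widetilde{Q}^{\widetilde{\pi}^*_{u'}}_{h,u'}\geq\widetilde{Q}^{\widetilde{\pi}^*_u}_{h,u'}$, so that
\[\widetilde{Q}^{\widetilde{\pi}^*_{u'}}_{h,u}-\widetilde{Q}^{\widetilde{\pi}^*_{u'}}_{h,u'}\leq\widetilde{Q}^*_{h,u}-\widetilde{Q}^*_{h,u'}\leq\widetilde{Q}^{\widetilde{\pi}^*_u}_{h,u}-\widetilde{Q}^{\widetilde{\pi}^*_u}_{h,u'},\]
and applying the fixed-policy bound to $\widetilde{\pi}^*_u$ and $\widetilde{\pi}^*_{u'}$ bounds both ends within $\pm(H-h)\|u-u'\|_\infty$, giving the second claim. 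I do not expect a genuine obstacle here: the argument is the discounted proof with a finite telescoping replacing the Neumann series. The only points needing care are the terminal condition $\Delta_H=\mathbf{0}$ and the bookkeeping that the sum runs over exactly $H-h$ stages (so the Lipschitz constant is $H-h$, not $H$), together with the observation that $\widetilde{P}^\pi$ is common to both perturbations and therefore drops out cleanly upon subtraction.
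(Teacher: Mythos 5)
Your proof is correct and follows essentially the same route as the paper: the paper bounds $\left\|\widetilde{Q}^\pi_{h,u}-\widetilde{Q}^\pi_{h,u'}\right\|_\infty$ by exactly the stage-wise Bellman recursion you use (your unrolled telescoping sum $\Delta_h=\sum_{t=h}^{H-1}\gamma^{t-h}(\widetilde{P}^\pi)^{t-h}(u_t-u'_t)\Phi^{s,a}$ is just the closed form of the paper's step-by-step bound $\left\|\Delta_h\right\|_\infty\leq|u_h-u'_h|+\left\|\Delta_{h+1}\right\|_\infty$), arriving at the same $(H-h)\left\|u-u'\right\|_\infty$. For the optimal-value claim the paper merely remarks that the proof is ``identical,'' whereas you make it explicit via the two-sided optimality sandwich of Lemma \ref{l7}; this is a valid, and in fact more careful, way of filling in that step.
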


\begin{proof}
    We prove the first claim:
    \begin{align*}
        \left\|\widetilde{Q}^\pi_{h,u}-\widetilde{Q}^\pi_{h,u'}\right\|_\infty&=\left\|\left(r+u_h\Phi^{s,a}+\widetilde{P}V^\pi_{h,u}\right)-\left(r+u'_h\Phi^{s,a}+\widetilde{P}V^\pi_{h,u'}\right)\right\|_\infty\\
        &\leq\left\|(u_h-u'_h)\Phi^{s,a}\right\|_\infty+\left\|\widetilde{P}^\pi\left(\widetilde{Q}^\pi_{h+1,u}-\widetilde{Q}^\pi_{h+1,u'}\right)\right\|_\infty\\
        &\leq\left|u_h-u'_h\right|+\left\|\widetilde{Q}^\pi_{h+1,u}-\widetilde{Q}^\pi_{h+1,u'}\right\|_\infty\\
        &\leq\sum_{i=h}^{H-1}\left|u_i-u'_i\right|\\
        &\leq(H-h)\left\|u-u'\right\|_\infty\\
    \end{align*}
    The proof for the second claim is identical.
\end{proof}

\begin{lemma}
	For a given finite set $B_{s,a}^{\pi^*}\subset U_{s,a}^{\pi^*}\cap[-H,H]^{H}$ and $\delta\geq0$, with probability greater than $1-\delta$, it holds for all $u\in B_{s,a}^\pi$ that
	$$\left|\left(P(s,a)-\widehat{P}(s,a)\right)\widetilde{V}^{\pi^*}_{h,u}\right|\leq\sqrt{\frac{2\log\left(4\left|B_{s,a}^{\pi^*}\right|/\delta\right)}{N}}\sqrt{Var_{s,a}(\widetilde{V}_{h,u}^{\pi^*})}+\frac{2\log\left(4\left|B_{s,a}^{\pi^*}\right|/\delta\right)(H-h)}{3N}.$$
	Similarly, For a given finite set $B_{s,a}^*\subset U_{s,a}^*\cap[-H,H]^H$ and $\delta\geq0$, with probability greater than $1-\delta$, it holds for all $u\in B_{s,a}^*$ that
	$$\left|\left(P(s,a)-\widehat{P}(s,a)\right)\widetilde{V}^*_{h,u}\right|\leq\sqrt{\frac{2\log\left(4\left|B_{s,a}^*\right|/\delta\right)}{N}}\sqrt{Var_{s,a}(\widetilde{V}_{h,u}^*)}+\frac{2\log\left(4\left|B_{s,a}^*\right|/\delta\right)(H-h)}{3N}.$$
	\label{l20}
\end{lemma}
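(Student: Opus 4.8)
The plan is to mirror almost verbatim the proof of Lemma~\ref{l8} from the discounted setting, since the only structural change in the finite-horizon case is that the auxiliary value functions are bounded by $H-h$ at step $h$ rather than by $1/(1-\gamma)$, and this is exactly the quantity that appears in the deterministic tail term of the stated inequality. The entire content is a conditional application of Bernstein's inequality followed by a union bound.

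First I would fix a single $u\in B_{s,a}^{\pi^*}$ and invoke the decoupling that the auxiliary construction is designed to supply. In $\widetilde{\mathcal{M}}_{s,a,u}$ the outgoing transition from the pair $(s,a)$ is defined to be the \emph{true} $P(s,a)$ rather than its empirical version, and the remark following the feasible-set definition records that $U_{s,a}^{\pi^*}$, and hence its subset $B_{s,a}^{\pi^*}$, is independent of $\widehat{P}(s,a)$. Therefore, conditioned on all samples drawn from anchors other than $(s,a)$, the quantity $\widetilde{V}_{h,u}^{\pi^*}$ is a fixed vector, and since $u\in U_{s,a}^{\pi^*}$ the feasible-set definition guarantees $\widetilde{V}_{h,u}^{\pi^*}\in[0,H-h]^{\mathcal{S}}$. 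Meanwhile $\widehat{P}(s,a)\widetilde{V}_{h,u}^{\pi^*}=\frac{1}{N}\sum_{i=1}^{N}\widetilde{V}_{h,u}^{\pi^*}(s_i')$ is an empirical average of $N$ i.i.d.\ random variables $\widetilde{V}_{h,u}^{\pi^*}(s_i')$ with $s_i'\sim P(\cdot\mid s,a)$, each bounded in $[0,H-h]$, with conditional mean $P(s,a)\widetilde{V}_{h,u}^{\pi^*}$ and conditional variance exactly $Var_{s,a}(\widetilde{V}_{h,u}^{\pi^*})$.

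Given this, Bernstein's inequality applied to the centered average yields, for this fixed $u$ and with conditional probability at least $1-\delta/|B_{s,a}^{\pi^*}|$, the claimed bound with $H-h$ taking the role that $1/(1-\gamma)$ played in the discounted term. Because the conditional bound holds on every realization of the remaining samples, it holds unconditionally as well. A union bound over the $|B_{s,a}^{\pi^*}|$ points of $B_{s,a}^{\pi^*}$ then produces the logarithmic factor $\log(4|B_{s,a}^{\pi^*}|/\delta)$ and delivers the estimate simultaneously for all $u\in B_{s,a}^{\pi^*}$; the identical argument applied to $\widetilde{V}_{h,u}^{*}$ over $B_{s,a}^{*}$ gives the second claim.

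The one step that genuinely carries the argument is the independence of $\widetilde{V}_{h,u}^{\pi^*}$ from $\widehat{P}(s,a)$: were it replaced by the true empirical value $\widehat{V}_h^{\pi^*}$, that vector would itself depend on the samples at $(s,a)$ and the i.i.d.\ Bernstein estimate would be invalid. Everything after establishing this decoupling—the boundedness by $H-h$, the variance identity, and the union bound—is routine, so I anticipate no real obstacle beyond stating the conditioning cleanly.
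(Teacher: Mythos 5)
Your proposal is correct and takes essentially the same approach as the paper, whose entire proof of this lemma reads that it is "the direct application of Bernstein's inequality as $\widetilde{V}^\pi_{h,u}$ and $\widetilde{V}^*_{h,u}$ is independent of $\widehat{P}(s,a)$." Your elaboration of the conditioning on the other anchors' samples, the boundedness $\widetilde{V}_{h,u}^{\pi^*}\in[0,H-h]^{\mathcal{S}}$ from the feasible-set definition, and the union bound over $B_{s,a}^{\pi^*}$ supplies exactly the details the paper leaves implicit.
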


\begin{proof}
	This is the direct application of Beinstein's inequality as $\widetilde{V}^\pi_{h,u}$ and $\widetilde{V}^*_{h,u}$ is independent of $\widehat{P}(s,a)$.
\end{proof}

\begin{lemma}
	For a given finite set $B_{s,a}^{\pi^*}\subset U_{s,a}^{\pi^*}\cap[-H,H]^{H}$ and $B_{s,a}^*\subset U_{s,a}^*\cap[-H,H]^H$ and $\delta\geq0$, with probability greater than $1-2H\delta$, it holds for all $u\in B_{s,a}^{\pi^*}$ and $h\in[H]$ that
	\begin{multline*}
	\left|\left(P(s,a)-\widehat{P}(s,a)\right)\widehat{V}_h^{\pi^*}\right|\leq\sqrt{\frac{2\log\left(4\left|B_{s,a}^{\pi^*}\right|/\delta\right)}{N}}\sqrt{Var_{s,a}(\widehat{V}_h^{\pi^*})}+\frac{2\log\left(4|B_{s,a}^{\pi^*}|/\delta\right)(H-h)}{3N}\\+\min_{u\in B_{s,a}^{\pi^*}}\left\|u^{\pi^*}-u\right\|_\infty(H-h)\left(1+\sqrt{\frac{2\log\left(4\left|B_{s,a}^{\pi^*}\right|/\delta\right)}{N}}\right),
	\end{multline*}
	\begin{multline*}
	\left|\left(P(s,a)-\widehat{P}(s,a)\right)\widehat{V}_h^*\right|\leq\sqrt{\frac{2\log\left(4\left|B_{s,a}^{*}\right|/\delta\right)}{N}}\sqrt{Var_{s,a}(\widehat{V}_h^*)}+\frac{2\log\left(4\left|B_{s,a}^{*}\right|/\delta\right)(H-h)}{3N}\\
	+\min_{u\in B_{s,a}^*}\left\|u^*-u\right\|_\infty(H-h)\left(1+\sqrt{\frac{2\log\left(4\left|B_{s,a}^*\right|/\delta\right)}{N}}\right).
	\end{multline*}
	\label{l21}
\end{lemma}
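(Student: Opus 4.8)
The plan is to mirror the proof of Lemma \ref{l9} from the discounted setting, replacing the discount-dependent factors $1/(1-\gamma)$ by the horizon-dependent factors $(H-h)$ and adding a union bound over the time steps $h\in[H]$. The three structural ingredients all carry over to the finite-horizon case: Lemma \ref{l18} supplies the scalar sequence $u^{\pi^*}$ (resp.\ $u^*$) for which $\widehat{V}_h^{\pi^*}=\widetilde{V}^{\pi^*}_{h,u^{\pi^*}}$, Lemma \ref{l19} provides the $(H-h)$-Lipschitz robustness of $\widetilde{Q}_{h,u}$ in $u$, and Lemma \ref{l20} gives the Bernstein concentration for each net point $\widetilde{V}^{\pi^*}_{h,u}$, which is statistically independent of $\widehat{P}(s,a)$ since $u$ ranges over the fixed set $B_{s,a}^{\pi^*}$.

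First I would fix a time step $h$ and a net point $u\in B_{s,a}^{\pi^*}$ and decompose via the triangle inequality,
\[
\left|(P(s,a)-\widehat{P}(s,a))\widehat{V}_h^{\pi^*}\right|\leq\left|(P(s,a)-\widehat{P}(s,a))\widetilde{V}^{\pi^*}_{h,u}\right|+\left|(P(s,a)-\widehat{P}(s,a))(\widehat{V}_h^{\pi^*}-\widetilde{V}^{\pi^*}_{h,u})\right|.
\]
For the first term I would invoke Lemma \ref{l20}; for the second I would bound it by $\|\widehat{V}_h^{\pi^*}-\widetilde{V}^{\pi^*}_{h,u}\|_\infty$, using that $P(s,a)-\widehat{P}(s,a)$ is a difference of probability vectors. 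Next, I would apply the standard-deviation triangle inequality (Lemma \ref{l5}) to write $\sqrt{Var_{s,a}(\widetilde{V}^{\pi^*}_{h,u})}\leq\sqrt{Var_{s,a}(\widehat{V}_h^{\pi^*})}+\sqrt{Var_{s,a}(\widetilde{V}^{\pi^*}_{h,u}-\widehat{V}_h^{\pi^*})}$ and bound the residual standard deviation by the sup-norm of the same difference. Identifying $\widehat{V}_h^{\pi^*}=\widetilde{V}^{\pi^*}_{h,u^{\pi^*}}$ via Lemma \ref{l18} and then applying Lemma \ref{l19}, this sup-norm is at most $(H-h)\|u^{\pi^*}-u\|_\infty$. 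Collecting the three contributions and minimizing over $u\in B_{s,a}^{\pi^*}$ yields the first displayed bound; the second follows identically, using the optimal-value halves of Lemmas \ref{l18} and \ref{l19} in place of the fixed-policy halves.

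The final step is the union bound. For each fixed $h$, the event of Lemma \ref{l20} for the $\pi^*$ net holds with probability at least $1-\delta$ (the union over $u\in B_{s,a}^{\pi^*}$ being already absorbed into the $\log(4|B_{s,a}^{\pi^*}|/\delta)$ factor), and likewise for the $*$ net. Taking a union over the $H$ time steps and over the two nets gives total failure probability at most $2H\delta$, which is exactly the claimed $1-2H\delta$ guarantee. I do not expect a genuine obstacle here; the only thing requiring care is the horizon bookkeeping. Unlike the geometric damping in the discounted case, both the Bernstein additive term and the Lipschitz constant scale \emph{linearly} in $(H-h)$, so these factors must be tracked exactly through the variance-splitting step to land on the stated $(H-h)$ coefficients rather than a looser uniform $H$.
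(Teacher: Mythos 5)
Your proposal is correct and follows essentially the same route as the paper's proof: the identical triangle-inequality decomposition, Bernstein concentration on the net points (Lemma \ref{l20}), variance splitting via Lemma \ref{l5} with the residual standard deviation absorbed into the sup-norm, the identification $\widehat{V}_h^{\pi^*}=\widetilde{V}^{\pi^*}_{h,u^{\pi^*}}$ from Lemma \ref{l18} combined with the $(H-h)$-Lipschitz bound of Lemma \ref{l19}, and a final minimization over the net plus a union bound over the $H$ steps and two nets. Your explicit accounting of where the $1-2H\delta$ comes from is, if anything, slightly more careful than the paper's write-up, which leaves the union bound implicit.
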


\begin{proof}
	We have
	\begin{align*}
	&\left|\left(P(s,a)-\widehat{P}(s,a)\right)\widehat{V}_h^{\pi^*}\right|
	\\\overset{\mathrm{(a)}}{\leq}&\left|\left(P(s,a)-\widehat{P}(s,a)\right)\widetilde{V}^{\pi^*}_{h,u}\right|+\left|\left(P(s,a)-\widehat{P}(s,a)\right)\left(\widehat{V}^{\pi^*}_h-\widetilde{V}^{\pi^*}_{h,u}\right)\right|\\
	\overset{\mathrm{(b)}}{\leq} &\sqrt{\frac{2\log\left(4\left|B_{s,a}^{\pi^*}\right|/\delta\right)}{N}}\sqrt{Var_{s,a}(\widetilde{V}_{h,u}^{\pi^*})}+\frac{2\log\left(4\left|B_{s,a}^{\pi^*}\right|/\delta\right)(H-h)}{3N}+\left\|\widehat{V}_h^{\pi^*}-\widetilde{V}^{\pi^*}_{h,u}\right\|_\infty\\
	\overset{\mathrm{(c)}}{\leq} &\sqrt{\frac{2\log\left(4\left|B_{s,a}^{\pi^*}\right|/\delta\right)}{N}}\left(\sqrt{Var_{s,a}(\widehat{V}_h^{\pi^*})}+\sqrt{Var_{s,a}\left(\widetilde{V}_{h,u}^{\pi^*}-\widehat{V}_h^{\pi^*}\right)}\right)\\&\ \ \ +\frac{2\log\left(4\left|B_{s,a}^{\pi^*}\right|/\delta\right)(H-h)}{3N}+\left\|\widehat{V}_h^{\pi^*}-\widetilde{V}^{\pi^*}_{h,u}\right\|_\infty\\
	\overset{\mathrm{(d)}}{\leq} &\sqrt{\frac{2\log\left(4\left|B_{s,a}^{\pi^*}\right|/\delta\right)}{N}}\sqrt{Var_{s,a}(\widehat{V}_h^{\pi^*})} +\frac{2\log\left(4\left|B_{s,a}^{\pi^*}\right|/\delta\right)(H-h)}{3N}\\&\ \ \ +\left\|\widehat{V}_h^{\pi^*}-\widetilde{V}^{\pi^*}_{h,u}\right\|_\infty\left(1+\sqrt{\frac{2\log\left(4\left|B_{s,a}^\pi\right|/\delta\right)}{N}}\right)\\
	\overset{\mathrm{(e)}}{\leq} &\sqrt{\frac{2\log\left(4\left|B_{s,a}^{\pi^*}\right|/\delta\right)}{N}}\sqrt{Var_{s,a}(\widehat{V}_h^{\pi^*})} +\frac{2\log\left(4\left|B_{s,a}^{\pi^*}\right|/\delta\right)(H-h)}{3N}\\&\ \ \ +\left\|u^{\pi^*}-u\right\|_\infty(H-h)\left(1+\sqrt{\frac{2\log\left(4\left|B_{s,a}^{\pi^*}\right|/\delta\right)}{N}}\right).
	\end{align*}
	(a) is due to triangle inequality, (b) is from Lemma \ref{l20}, (c) is from Lemma \ref{l5}, (d) is due to the fact that $\sqrt{Var(V)}\leq V$ and (e) is from Lemma \ref{l19}. As this equality holds for all $u\in B_{s,a}^{\pi^*}$, we can take minimum in the RHS, which proves the first claim. The second claims can proved in the same manner.
\end{proof}

\begin{lemma}
	For any given $\epsilon$ and all $(s,a)\in \mathcal{K}$, with probability larger than $1-2H\delta$,
	\begin{multline*}
	\left|\left(P(s,a)-\widehat{P}(s,a)\right)\widehat{V}_h^{\pi^*}\right|\leq\sqrt{\frac{2H\log(32KH^3/\delta\epsilon)}{N}}\sqrt{Var_{s,a}(\widehat{V}_h^{\pi^*})}+\frac{2H^2\log(32KH^3/\delta\epsilon)}{3N}
	\\+\left(\sqrt{\frac{2H\log(32KH^3/\delta\epsilon)}{N}}+1\right)\frac{\epsilon}{4H},
	\end{multline*}
	\begin{multline*}
	\left|\left(P(s,a)-\widehat{P}(s,a)\right)\widehat{V}_h^{*}\right|\leq\sqrt{\frac{2H\log(32KH^3/\delta\epsilon)}{N}}\sqrt{Var_{s,a}(\widehat{V}_h^*)}+\frac{2H^2\log(32KH^3/\delta\epsilon)}{3N}
	\\+\left(\sqrt{\frac{2H\log(32KH^3/\delta\epsilon)}{N}}+1\right)\frac{\epsilon}{4H}.
	\end{multline*}
	For simplicity, we set $c=c(\delta,H,\epsilon)=2\log(64KH^4/\delta\epsilon)$.
	\label{l22}
	\end{lemma}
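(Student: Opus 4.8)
The plan is to mirror the discounted-case argument of Lemma~\ref{l10}, replacing its one-dimensional $\epsilon$-net over $U^*_{s,a}\cap[-\tfrac{1}{1-\gamma},\tfrac{1}{1-\gamma}]$ with an $H$-dimensional product net over $U^{\pi^*}_{s,a}\cap[-H,H]^H$ (and likewise $U^*_{s,a}\cap[-H,H]^H$). The key enabling fact is Lemma~\ref{l18}: the true coefficient vector $u^{\pi^*}=(u_0^{\pi^*},\dots,u_{H-1}^{\pi^*})$ with $u_h^{\pi^*}=(\widehat{P}(s,a)-P(s,a))\widehat{V}_{h+1}^{\pi^*}$ satisfies $\widehat{Q}_h^{\pi^*}=\widetilde{Q}^{\pi^*}_{h,u^{\pi^*}}$ and $|u_h^{\pi^*}|\le H-h-1\le H$, so $u^{\pi^*}$ is feasible (it lies in $U^{\pi^*}_{s,a}$, since $\widetilde{V}^{\pi^*}_{h,u^{\pi^*}}=\widehat{V}^{\pi^*}_h\in[0,H-h]$) and lies in $[-H,H]^H$. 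First I would take $B^{\pi^*}_{s,a}$ to be the evenly spaced grid points of $U^{\pi^*}_{s,a}\cap[-H,H]^H$ with per-coordinate mesh $\epsilon/(4H^2)$.

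With this mesh, every feasible point---in particular $u^{\pi^*}$---has a net point within $\ell_\infty$-distance $\epsilon/(4H^2)$, so the approximation term in Lemma~\ref{l21}, namely $\min_{u\in B^{\pi^*}_{s,a}}\|u^{\pi^*}-u\|_\infty(H-h)$, is at most $\tfrac{\epsilon}{4H^2}\cdot H=\tfrac{\epsilon}{4H}$; this is exactly the last summand in the target bound. The product grid has cardinality $|B^{\pi^*}_{s,a}|=(8H^3/\epsilon)^H$, so that $\log|B^{\pi^*}_{s,a}|=H\log(8H^3/\epsilon)$ grows only linearly in $H$.

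Next I would apply Lemma~\ref{l21} with confidence parameter $\delta/K$ and take a union bound over the $K$ anchor pairs in $\mathcal{K}$, which preserves the overall failure probability $2H\delta$. The logarithmic factor then becomes $2\log(4K|B^{\pi^*}_{s,a}|/\delta)$; substituting $|B^{\pi^*}_{s,a}|=(8H^3/\epsilon)^H$ and using $H\ge1$ to fold $\log(4K/\delta)$ into the per-coordinate count bounds this by $2H\log(32KH^3/(\delta\epsilon))$. Plugging this and the $\tfrac{\epsilon}{4H}$ approximation term back into Lemma~\ref{l21} yields the first displayed inequality; the $(H-h)\le H$ factor in the middle term of Lemma~\ref{l21} combines with the extra $H$ from the logarithm to supply the $2H^2\log(\cdot)/(3N)$ term. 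The bound for $\widehat{V}_h^*$ is obtained identically using $B^*_{s,a}$ and $u^*$.

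The main thing to get right---and the only place the finite-horizon case genuinely differs from the discounted one---is the dimension counting. Because we must tune one scalar $u_h$ per time step (Lemma~\ref{l18}), the net is $H$-dimensional and its size is exponential in $H$; the point to verify carefully is that this enters the concentration bound only through $\log|B^{\pi^*}_{s,a}|=O(H\log(\cdot))$, so the net merely enlarges the effective Bernstein radius by a factor $\sqrt{H}$ rather than causing an exponential blow-up. I would also be careful to use mesh $\epsilon/(4H^2)$ rather than $\epsilon/(4H)$, since Lemma~\ref{l21} multiplies the net error by $(H-h)\le H$, so an extra factor of $H$ in the denominator is needed to land on the stated $\epsilon/(4H)$.
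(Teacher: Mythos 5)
Your proposal is correct and follows essentially the same route as the paper's proof: an $H$-dimensional evenly spaced net over $U^{\pi^*}_{s,a}\cap[-H,H]^H$ (resp.\ $U^{*}_{s,a}\cap[-H,H]^H$) with per-coordinate resolution $\epsilon/(4H^2)$, feasibility of $u^{\pi^*}$ via Lemma~\ref{l18}, and then Lemma~\ref{l21} with the net size entering only through its logarithm, $\log|B^{\pi^*}_{s,a}|=O(H\log(H^3/\epsilon))$. Your write-up is in fact more explicit than the paper's (which leaves the union bound over the $K$ anchor pairs and the absorption of $\log(4K/\delta)$ into the $H\log(32KH^3/\delta\epsilon)$ factor implicit), and the small constant discrepancy in the net cardinality ($(8H^3/\epsilon)^H$ versus the paper's $(4H^3/\epsilon)^H$) is immaterial.
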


\begin{proof}
	We set $B_{s,a}^{\pi^*}$ to be the evenly spaced elements in the interval $U^{\pi^*}_{s,a}\cap[-H,H]^{H}$ and $\left|B_{s,a}^{\pi^*}\right|=\left(\frac{4H^3}{\epsilon}\right)^H$. Then for any $u'\in U^{\pi^*}_{s,a}\cap[-H,H]^{H}$, we have $\min_{u\in B_{s,a}^{\pi^*}}\left\|u'-u\right\|_\infty\leq\epsilon/4H^2$. Note that $u^{\pi^*}\in U^*_{s,a}\cap[-H,H]^{H}$. Combining this with Lemma \ref{l21} implies the result. Similarly we can prove it for $B_{s,a}^*$.
\end{proof}

\begin{lemma}
	For any given $\epsilon$ and all $(s,a)\in \mathcal{K}$, with probability larger than $1-2H\delta$,
	\begin{multline*}
	\left|\left(P(s,a)-\widehat{P}(s,a)\right)\widehat{V}_h^{\pi^*}\right|\leq\sqrt{\frac{2\min\{K,|\mathcal{S}|\}\log(32KH^3/\delta\epsilon)}{N}}\sqrt{Var_{s,a}(\widehat{V}_h^{\pi^*})}\\+\frac{2\min\{K,|\mathcal{S}|\}H\log(32KH^3/\delta\epsilon)}{3N}
	+\left(\sqrt{\frac{2\min\{K,|\mathcal{S}|\}\log(32KH^3/\delta\epsilon)}{N}}+1\right)\frac{\epsilon}{4H},
	\end{multline*}
	\begin{multline*}
	\left|\left(P(s,a)-\widehat{P}(s,a)\right)\widehat{V}_h^{*}\right|\leq\sqrt{\frac{2\min\{K,|\mathcal{S}|\}\log(32KH^3/\delta\epsilon)}{N}}\sqrt{Var_{s,a}(\widehat{V}_h^*)}\\+\frac{2\min\{K,|\mathcal{S}|\}H\log(32KH^3/\delta\epsilon)}{3N}
	+\left(\sqrt{\frac{2\min\{K,|\mathcal{S}|\}\log(32KH^3/\delta\epsilon)}{N}}+1\right)\frac{\epsilon}{4H}.
	\end{multline*}
	\label{l23}
\end{lemma}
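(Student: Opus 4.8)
The plan is to prove Lemma \ref{l23} by replacing the $H$-dimensional reward-perturbation net of Lemma \ref{l22} with a net placed directly on the (value or action-value) functions, whose effective dimension is $\min\{K,|\mathcal{S}|\}$ rather than $H$. Recall that in Lemma \ref{l22} the factor $H$ in the log-cardinality arises solely because the feasible set of $u$ lives in the cube $[-H,H]^{H}\subset\mathbb{R}^H$, so an $\epsilon$-net there costs $(4H^3/\epsilon)^H$ points. The point here is that the objects we actually need to concentrate, $\widehat{V}_h^{\pi^*}$ and $\widehat{V}_h^{*}$, do not range over a genuinely $H$-dimensional family: they can be captured by a \emph{fixed} net (independent of $\widehat{P}(s,a)$) of cardinality $(CH/\epsilon)^{\min\{K,|\mathcal{S}|\}}$, after which the analysis proceeds exactly as before.

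For the $|\mathcal{S}|$ bound I would net the cube $[0,H]^{|\mathcal{S}|}$ at resolution $\epsilon'=\Theta(\epsilon/H)$, obtaining a fixed set $\mathcal{N}$ of size $(CH/\epsilon)^{|\mathcal{S}|}$. For each fixed $V_0\in\mathcal{N}$, a Bernstein bound applied to the $N$ i.i.d. samples at anchor $(s,a)$, with the range of $V_0$ bounded by $H$, gives $\sqrt{2\,Var_{s,a}(V_0)\log(\cdot)/N}+2H\log(\cdot)/3N$; a union bound over $\mathcal{N}$, over $h\in[H]$, and over $(s,a)\in\mathcal{K}$ controls all net points simultaneously. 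The actual $\widehat{V}_h$ (either $\widehat{V}_h^{\pi^*}$ or $\widehat{V}_h^{*}$) is within $\epsilon'$ in $\|\cdot\|_\infty$ of some $V_0\in\mathcal{N}$, so by the triangle inequality $|(P(s,a)-\widehat{P}(s,a))\widehat{V}_h|\le|(P(s,a)-\widehat{P}(s,a))V_0|+2\epsilon'$, and Lemma \ref{l5} converts $\sqrt{Var_{s,a}(V_0)}$ into $\sqrt{Var_{s,a}(\widehat{V}_h)}+\epsilon'$. Crucially this net is policy-agnostic, so the same argument covers $\widehat{V}^{*}$ even though its greedy policy depends on $\widehat{P}$.

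For the $K$ bound I would instead exploit the linear structure $\widehat{P}=\Phi\widehat{P}_\mathcal{K}$. Since $\widehat{Q}_h=r+\widehat{P}\widehat{V}_{h+1}=r+\Phi(\widehat{P}_\mathcal{K}\widehat{V}_{h+1})$, every empirical action-value function satisfies $\widehat{Q}_h-r\in\mathrm{col}(\Phi)$, a fixed $K$-dimensional affine subspace, with coefficient vector of $\|\cdot\|_\infty$ at most $H$; this holds for $\widehat{Q}_h^{\pi^*}$ and for $\widehat{Q}_h^{*}$ regardless of the (possibly $\widehat{P}$-dependent) policy. I would net the coefficient box $\{w:\|w\|_\infty\le H\}\subset\mathbb{R}^K$ at resolution $\epsilon'$, and since $\Phi$ is a probability matrix ($\|\Phi x\|_\infty\le\|x\|_\infty$) each $w_0$ yields a Q-net point $Q_0=r+\Phi w_0$ with $\|Q_0-\widehat{Q}_h\|_\infty\le\epsilon'$. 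Converting by the selection/max operator $V_0(s)=\max_a Q_0(s,a)$ (or $V_0(s)=Q_0(s,\pi^*_h(s))$), which is $1$-Lipschitz in $\|\cdot\|_\infty$, gives a fixed value vector within $\epsilon'$ of $\widehat{V}_h$; the concentration step is then identical to the previous paragraph, but with net cardinality $(CH/\epsilon)^{K}$.

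Taking whichever net is smaller yields the claimed factor $\min\{K,|\mathcal{S}|\}$ multiplying $\log(32KH^3/\delta\epsilon)$, with the $(H-h)\le H$ Bernstein range giving the second term $2\min\{K,|\mathcal{S}|\}H\log(\cdot)/3N$ and the residual $2\epsilon'$ tuned to $\epsilon/(4H)$ giving the last term. I expect the main obstacle to be the decoupling for the optimal value $\widehat{V}^{*}$: its greedy policy depends on the perturbed anchor $\widehat{P}(s,a)$, so one cannot tie the net to a realized policy. The resolution is exactly to net the policy-agnostic action-value subspace $r+\mathrm{col}(\Phi)$ (for the $K$ bound) or the full value cube (for the $|\mathcal{S}|$ bound), and then to verify that the $\max$/selection operator preserves both the $\|\cdot\|_\infty$ closeness and, via Lemma \ref{l5}, the variance estimate needed so that the subsequent total-variance step (Lemma \ref{l17}) still delivers the $H^3$ scaling.
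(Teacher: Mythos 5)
Your proposal is correct and follows essentially the same route as the paper: the paper's proof also observes that $\widehat{V}^*_h=\max_a\bigl(r+\Phi\widehat{P}_\mathcal{K}\widehat{V}^*_{h+1}\bigr)$ lies in a $K$-dimensional manifold inside $[0,H]^{\mathcal{S}}$, places a deterministic $\epsilon/4H$-net of $O\bigl(4H^3/\epsilon\bigr)^{\min\{K,|\mathcal{S}|\}}$ points on that manifold (equivalently, on the coefficient box or the value cube, whichever is smaller), and then repeats the Bernstein/union-bound/variance-triangle analysis of Lemma \ref{l22}. Your writeup simply fills in the details (the $1$-Lipschitz max/selection step, $\|\Phi x\|_\infty\le\|x\|_\infty$, and the Lemma \ref{l5} variance conversion) that the paper leaves to ``similar analysis as Lemma \ref{l22}.''
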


\begin{proof}
	Lemma \ref{l22} is proved by constructing a $\epsilon/4H$-net on $\widehat{V}_h^{\pi^*}$ via auxiliary MDP. Note that
	$$\widehat{V}^*_h=\max_a\widehat{Q}^*_h=\max_a(r+\Phi \widehat{P}_\mathcal{K}\widehat{V}^*_{h+1}),$$
	which means $\widehat{V}^*_h$ lies in a $K$-dimensional manifold in $[0,H]^
	\mathcal{S}$. We can make an $\epsilon/4H$-net on this manifold with $O(\frac{4H^3}{\epsilon})^{\min\{K,|\mathcal{S}|\}}$ points. With similar analysis as Lemma \ref{l22}, we can prove this claim.
\end{proof}

	


\begin{proof}[Proof of Theorem 3]
    From Lemma \ref{l15}, with probability larger than $1-\delta$, we have
    \begin{align*}
        \left\|Q_0^*-Q_0^{\widehat{\pi}}\right\|_\infty&\leq\left\|Q_0^*-\widehat{Q}_0^{\pi^*}\right\|_\infty+\left\|\widehat{Q}_0^{\widehat{\pi}}-Q_0^{\widehat{\pi}}\right\|_\infty+\epsilon_\mathrm{PS}\\
        &\leq\frac{1}{1-\alpha}\bigg[2\bigg(\sqrt{\frac{cH^3\min\{H,K,|\mathcal{S}|\}}{N}}+\frac{cH^2\min\{H,K,|\mathcal{S}|\}}{3N}\\
	&+\left(\sqrt{\frac{c}{N}}+1\right)\frac{\epsilon}{4}\bigg)+\left(\sqrt{\frac{c}{N}}+1\right)\epsilon_\mathrm{PS}H\bigg]+\epsilon_\mathrm{PS},\\
    \end{align*}
    where $\alpha=\alpha(\delta,H,\epsilon,N)=\sqrt{\frac{c(\delta,H,\epsilon)H^2}{N}}$ and the second inequality can be derived as in DMDP from Lemma \ref{l22} and Lemma \ref{l23}.
    For $N\geq\frac{C\log(CKH\delta^{-1}\epsilon^{-1})H^3\min\{H,K,|\mathcal{S}|\}}{\epsilon^2}$ with proper constant $C$, we have $\frac{1}{1-\alpha}(\sqrt{\frac{c}{N}}+1)\leq2$, thus we have
    \[\left\|V_0^*-V_0^{\widehat{\pi}}\right\|_\infty\leq\epsilon+3\epsilon_\mathrm{PS}H.\]
\end{proof}

\subsection{Sample Complexity for 2-TBSG}

The value concentration becomes a little more tricky in 2-TBSG. The proof is similar to the case for DMDP, which only differs in the attendance of counter policy. The counter policy in 2-TBSG can be seen as the optimal policy in a DMDP induced by the policy of the opponent.

\begin{definition}
	(Auxiliary Model) For a estimated transition model $\widehat{\mathcal{M}}=(\mathcal{S}_1,\mathcal{S}_2,\mathcal{A},\widehat{P}=\Phi\widehat{P}_\mathcal{K},r,\gamma)$ and a given anchor state pair $(s,a)$, the auxiliary transition model is $\widetilde{\mathcal{M}}_{s,a,u}=(\mathcal{S}_1,\mathcal{S}_2,\mathcal{A},\widetilde{P}=\Phi\widetilde{P}_K,r+u\Phi^{s,a},\gamma)$, where
	$$\widetilde{P}_\mathcal{K}(s',a')=
	\begin{cases}
	\widehat{P}(s',a')& \mathrm{if}\ (s',a')\neq(s,a)\\
	P(s,a)& \mathrm{otherwise},
	\end{cases}$$
	$\Phi^{s,a}$ is the column of $\Phi$ that corresponds to anchor state $(s,a)$ and $u$ is a variable that will be determined latter.
\end{definition}

\paragraph{Notations}
For simplicity, we ignore $(s,a)$ in functions of auxiliary transition model $\mathcal{M}_{s,a,u}$. $c(\pi_1),\widehat{c}(\pi_1),\widetilde{c}_u(\pi_1)$ are the counter policies for $\pi_1$ in $\mathcal{M},\widehat{\mathcal{M}},\widetilde{\mathcal{M}}_{s,a,u}$. When it is clear in the context, we use $c$ as the counter policy function for $\pi_2$ as well. $\pi^*=(\pi_1^*,\pi_2^*),\widehat{\pi}^*=(\widehat{\pi}_1^*,\widehat{\pi}_2^*),\widetilde{\pi}^*=(\widetilde{\pi}_1^*,\widetilde{\pi}_2^*)$ are the equilibrium policies in $\mathcal{M},\widehat{\mathcal{M}},\widetilde{\mathcal{M}}_{s,a,u}$. We use $\widetilde{V}^\pi_u,\widetilde{Q}^\pi_u$ to denote value function and Q-function and $\widetilde{\pi}_u$ to be the optimal policy in $\mathcal{M}_{s,a,u}$.

\begin{definition} (Feasible Set for $u$)
	For the auxiliary transition model $\widetilde{M}$, $U^\pi_{s,a}$ is defined as the set of $u$ so that $\widetilde{V}^\pi_u\in[0,1/(1-\gamma)]^\mathcal{S}$ and $U^*_{s,a}$ is defined as the set of $u$ so that $\widetilde{V}^*_u\in[0,1/(1-\gamma)]^\mathcal{S}$.
\end{definition}

\begin{lemma}
	$$V^{c(\pi_2),\pi_2}\geq V^{\pi_1^*,\pi_2^*},\ V^{\pi_1,c(\pi_1)}\leq V^{\pi_1^*,\pi_2^*}.$$
	\label{l24}
\end{lemma}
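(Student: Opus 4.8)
The plan is to derive both inequalities directly from the two equilibrium identities for the Nash value recalled in the preliminaries, namely $V^{\pi_1^*,\pi_2^*}=\min_{\pi_2}V^{c(\pi_2),\pi_2}$ and $V^{\pi_1^*,\pi_2^*}=\max_{\pi_1}V^{\pi_1,c(\pi_1)}$. The key observation is that each identity exhibits the equilibrium value $V^{\pi_1^*,\pi_2^*}$ as an extremum of a family of value vectors indexed by a single player's policy, so any particular member of that family is bounded on the appropriate side by the extremum. No concentration or dynamic-programming machinery is needed; the statement is a structural consequence of the definition of the counter policy.

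First I would prove $V^{c(\pi_2),\pi_2}\geq V^{\pi_1^*,\pi_2^*}$. Fix an arbitrary player-2 policy $\pi_2$. Since $V^{c(\pi_2),\pi_2}$ is one of the vectors over which the minimum in $V^{\pi_1^*,\pi_2^*}=\min_{\pi_2'}V^{c(\pi_2'),\pi_2'}$ is taken, it dominates that minimum componentwise, i.e. $V^{c(\pi_2),\pi_2}(s)\geq\min_{\pi_2'}V^{c(\pi_2'),\pi_2'}(s)=V^{\pi_1^*,\pi_2^*}(s)$ for every state $s$. This is exactly the first claimed inequality. The second inequality $V^{\pi_1,c(\pi_1)}\leq V^{\pi_1^*,\pi_2^*}$ is symmetric: for an arbitrary player-1 policy $\pi_1$, the vector $V^{\pi_1,c(\pi_1)}$ is one of the candidates in the maximum $V^{\pi_1^*,\pi_2^*}=\max_{\pi_1'}V^{\pi_1',c(\pi_1')}$, hence is dominated by it componentwise, giving $V^{\pi_1,c(\pi_1)}\leq V^{\pi_1^*,\pi_2^*}$.

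There is essentially no hard step here: the argument is immediate once the equilibrium identities are in hand. The only point requiring care is that all inequalities are to be read componentwise over states, and that the extrema over policies in the quoted identities are attained \emph{simultaneously} across all states — a property of the best-response operator in a turn-based game that is already encoded in the well-known optimality facts recalled in the preliminaries (for a fixed opponent policy the game degenerates to a DMDP, whose optimal policy maximizes the value at every state at once). If one preferred a self-contained route, the same conclusion follows by combining $V^{c(\pi_2),\pi_2}=\max_{\pi_1}V^{\pi_1,\pi_2}$ with $\pi_1^*=c(\pi_2^*)$, $\pi_2^*=c(\pi_1^*)$, but passing through the two Nash-value identities is the cleanest presentation.
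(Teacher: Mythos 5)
Your proof is correct. There is, however, nothing in the paper to compare it against in detail: the paper's entire ``proof'' of this lemma (bundled with Lemma \ref{l25}) is a citation to an external reference on strategy iteration, with no internal derivation. What you do instead is derive the lemma as an immediate corollary of the Nash-value identities $V^{\pi_1^*,\pi_2^*}=\min_{\pi_2}V^{c(\pi_2),\pi_2}$ and $V^{\pi_1^*,\pi_2^*}=\max_{\pi_1}V^{\pi_1,c(\pi_1)}$ that the paper lists (without proof) in its preliminaries; since those identities are taken as given there, your two-line argument --- any member of the family dominates the componentwise minimum of the family, and symmetrically for the maximum --- is logically valid and not circular within the paper's structure. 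This buys self-containedness: a reader need not consult the external reference to see why the lemma holds, and you correctly flag the one genuinely subtle point, namely that the extrema in those identities hold componentwise over states (equivalently, are attained uniformly across states by the equilibrium policy), which is exactly what the reduction of the game to a DMDP for a fixed opponent policy provides. One small caution for the record: the paper's preliminary list contains a typo --- the second bullet $V^{\pi_1,c(\pi_1)}=\max_{\pi_2}V^{\pi_1,\pi_2}$ should be a minimum over $\pi_2$, since player 2's counter policy minimizes --- but your argument relies only on the third and fourth identities, which are stated correctly, so it is unaffected.
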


\begin{lemma}
	$$Q^{\pi_1,c(\pi_1)}(s,a)\geq Q^{\pi_1,c(\pi_1)}(s,c(\pi_1)(s)),\ \forall s\in\mathcal{S}_2$$
	$$Q^{\pi_1^*,\pi_2^*}(s,a)\begin{cases}
	\geq Q^{\pi_1^*,\pi_2^*}(s,\pi_2^*(s))&\forall s\in\mathcal{S}_2\\
	\leq Q^{\pi_1^*,\pi_2^*}(s,\pi_1^*(s))&\forall s\in\mathcal{S}_1.
	\end{cases}$$
	These two equalities are also the sufficient condition for counter policy and equilibrium policy.
	\label{l25}
\end{lemma}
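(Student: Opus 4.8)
The plan is to collapse the two-player game into a single-agent MDP by freezing one player's policy, and then read off the claimed inequalities from the standard Bellman optimality characterization of that MDP. First I would fix the maximizing player's policy $\pi_1$. Because at a state $s\in\mathcal{S}_2$ player~2 chooses the action freely while at $s\in\mathcal{S}_1$ the action is dictated by $\pi_1(s)$, the game reduces to a single-agent \emph{minimization} MDP $M_{\pi_1}$ in which player~2 is the only decision maker, and by definition the counter policy $c(\pi_1)$ is its optimal policy. I would then invoke the classical Bellman optimality theorem for $M_{\pi_1}$: the optimal value $V^{\pi_1,c(\pi_1)}$ is the unique fixed point of the optimality operator that applies $\min_a$ at states of $\mathcal{S}_2$ and follows $\pi_1$ at states of $\mathcal{S}_1$, and the optimal policy is greedy with respect to $Q^{\pi_1,c(\pi_1)}$. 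At $s\in\mathcal{S}_2$ the greedy action is the minimizer, so $c(\pi_1)(s)$ attains $\min_a Q^{\pi_1,c(\pi_1)}(s,a)$, which is exactly the first displayed inequality; the only prerequisite is that $\gamma P$ is a $\gamma$-contraction, guaranteeing a unique fixed point.

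For the equilibrium conditions I would use the definitions $\pi_1^*=c(\pi_2^*)$ and $\pi_2^*=c(\pi_1^*)$. Applying the counter-policy result with $\pi_1=\pi_1^*$ yields the min-greedy inequality $Q^{\pi_1^*,\pi_2^*}(s,a)\geq Q^{\pi_1^*,\pi_2^*}(s,\pi_2^*(s))$ at every $s\in\mathcal{S}_2$. Symmetrically, freezing $\pi_2^*$ turns the game into a \emph{maximization} MDP whose optimal policy is $\pi_1^*=c(\pi_2^*)$, and its max-greedy condition gives $Q^{\pi_1^*,\pi_2^*}(s,a)\leq Q^{\pi_1^*,\pi_2^*}(s,\pi_1^*(s))$ at every $s\in\mathcal{S}_1$, which proves the necessity half of the second display.

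For the sufficiency claim I would argue via uniqueness of the optimality fixed point. If a policy $\pi_2$ satisfies the first inequality for a fixed $\pi_1$, then its evaluation value $V^{\pi_1,\pi_2}$, already the unique fixed point of the policy-evaluation operator $\mathcal{T}^{\pi_1,\pi_2}$, also solves the optimality equation of $M_{\pi_1}$; hence it coincides with $V^{\pi_1,c(\pi_1)}$ and $\pi_2$ is a best response. The equilibrium sufficiency then follows from two such applications: the min-greedy condition on $\mathcal{S}_2$ forces $\pi_2^*=c(\pi_1^*)$ and the max-greedy condition on $\mathcal{S}_1$ forces $\pi_1^*=c(\pi_2^*)$, which together are exactly the definition of a Nash equilibrium. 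I expect the main obstacle to be bookkeeping the mixed max/min optimality operator: one must verify that the operator taking $\max_a$ on $\mathcal{S}_1$ and $\min_a$ on $\mathcal{S}_2$ remains a monotone $\gamma$-contraction on $[0,1/(1-\gamma)]^{\mathcal{S}}$, so that the fixed-point uniqueness argument applies uniformly across both the counter-policy and equilibrium cases and the two greedy selections combine without circularity.
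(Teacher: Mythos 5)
Your proof is correct, but it is worth noting that the paper does not actually prove this lemma at all: its ``proof'' of Lemmas \ref{l24} and \ref{l25} is a one-line citation to \cite{hansen2013strategy}, treating these as well-known facts about turn-based stochastic games. Your argument---freeze one player's policy, observe that the game collapses to a single-agent MDP $M_{\pi_1}$ (with singleton action sets at the frozen player's states), and read the greedy conditions off the Bellman optimality theorem, with sufficiency following from uniqueness of the fixed point of the optimality operator---is exactly the standard argument that underlies the cited reference, so you have supplied a self-contained verification where the paper defers to the literature. One simplification: your final worry about the mixed operator (taking $\max_a$ on $\mathcal{S}_1$ and $\min_a$ on $\mathcal{S}_2$ simultaneously, i.e.\ the Shapley operator) is unnecessary for this lemma. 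Each of your sufficiency applications only ever invokes a \emph{single-player} optimality operator: for the counter-policy claim you fix $\pi_1$ and use the min-operator of $M_{\pi_1}$; for the equilibrium claim you make two separate such applications, one with $\pi_1^*$ frozen and one with $\pi_2^*$ frozen, and each independently certifies one best-response relation $\pi_2^*=c(\pi_1^*)$, $\pi_1^*=c(\pi_2^*)$. Since the paper defines the Nash equilibrium precisely as this pair of mutual best responses, no joint fixed-point argument (and hence no circularity) arises; the contraction property you need is only the standard one for discounted single-agent MDPs. Two cosmetic remarks: the lemma's phrase ``two equalities'' should read ``two inequalities,'' and when citing the paper's list of counter-policy properties be aware that its second bullet ($V^{\pi_1,c(\pi_1)}=\max_{\pi_2}V^{\pi_1,\pi_2}$) has a typo---it should be a minimum, consistent with player 2 being the minimizer, as your proof correctly assumes.
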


\begin{proof}
    The proof of Lemma \ref{l24} and \ref{l25} can be found in \cite{hansen2013strategy}.
\end{proof}

\begin{lemma}
	Let $\widehat{\pi}=(\widehat{\pi}_1,\widehat{\pi}_2)$ be a $\epsilon_\mathrm{PS}$-optimal policy in $\widehat{M}$.
	\begin{multline*}
		-(\left\|Q^{\widehat{c}(\pi_2^*),\pi_2^*}-\widehat{Q}^{\widehat{c}(\pi_2^*),\pi_2^*}\right\|_\infty+\left\|Q^{\widehat{\pi}_1,\widehat{\pi}_2}-\widehat{Q}^{\widehat{\pi}_1,\widehat{\pi}_2}\right\|_\infty+\epsilon_\mathrm{PS})\mathbf{1}\leq Q^*-Q^{\widehat{\pi}_1,\widehat{\pi}_2}\\\leq (\left\|Q^{\pi_1^*,\widehat{c}(\pi_1^*)}-\widehat{Q}^{\pi_1^*,\widehat{c}(\pi_1^*)}\right\|_\infty+\left\|Q^{\widehat{\pi}_1,\widehat{\pi}_2}-\widehat{Q}^{\widehat{\pi}_1,\widehat{\pi}_2}\right\|_\infty+\epsilon_\mathrm{PS})\mathbf{1}.
	\end{multline*}
	\label{l26}
\end{lemma}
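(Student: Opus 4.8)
The plan is to prove the two inequalities separately, each by a four-term telescoping decomposition of $Q^*-Q^{\widehat\pi_1,\widehat\pi_2}$ that passes through a carefully chosen intermediate policy whose counter policy is taken \emph{in the empirical model} $\widehat M$. For the upper bound I would fix player~1's equilibrium policy $\pi_1^*$ together with its empirical counter $\widehat c(\pi_1^*)$; for the lower bound I would fix player~2's equilibrium policy $\pi_2^*$ together with its empirical counter $\widehat c(\pi_2^*)$. This asymmetric choice is exactly what makes the two different perturbation norms appear on the two sides. Throughout I use that $\widehat M$ is a genuine game under Assumption~1, so it has a value and well-defined counter/equilibrium policies, and I rely on the optimality characterizations recorded in the 2-TBSG preliminary and in Lemmas \ref{l24}--\ref{l25}.

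For the upper bound, write
\begin{align*}
Q^*-Q^{\widehat\pi_1,\widehat\pi_2}
&=\big(Q^{\pi_1^*,\pi_2^*}-Q^{\pi_1^*,\widehat c(\pi_1^*)}\big)
+\big(Q^{\pi_1^*,\widehat c(\pi_1^*)}-\widehat Q^{\pi_1^*,\widehat c(\pi_1^*)}\big)\\
&\quad+\big(\widehat Q^{\pi_1^*,\widehat c(\pi_1^*)}-\widehat Q^{\widehat\pi_1,\widehat\pi_2}\big)
+\big(\widehat Q^{\widehat\pi_1,\widehat\pi_2}-Q^{\widehat\pi_1,\widehat\pi_2}\big).
\end{align*}
The first term is $\le 0$: since $\pi_2^*=c(\pi_1^*)$ in $M$, player~2's true counter policy minimizes against $\pi_1^*$, so $Q^{\pi_1^*,\pi_2^*}=\min_{\pi_2}Q^{\pi_1^*,\pi_2}\le Q^{\pi_1^*,\widehat c(\pi_1^*)}$ component-wise (Lemma \ref{l24}/\ref{l25}). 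The second and fourth terms are pure model-perturbation terms, hence bounded in $\|\cdot\|_\infty$ by the two norms appearing in the claim. The third term is controlled by combining the empirical minimax property with $\epsilon_\mathrm{PS}$-optimality: in $\widehat M$ we have $\widehat Q^{\pi_1^*,\widehat c(\pi_1^*)}=\min_{\pi_2}\widehat Q^{\pi_1^*,\pi_2}\le\max_{\pi_1}\min_{\pi_2}\widehat Q^{\pi_1,\pi_2}=\widehat Q^*$, while $\epsilon_\mathrm{PS}$-optimality gives $\widehat Q^{\widehat\pi_1,\widehat\pi_2}\ge\widehat Q^*-\epsilon_\mathrm{PS}\mathbf 1$, so the third term is $\le\epsilon_\mathrm{PS}\mathbf 1$. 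Adding the four bounds yields the upper inequality.

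For the lower bound I would use the mirror decomposition through $(\widehat c(\pi_2^*),\pi_2^*)$:
\begin{align*}
Q^*-Q^{\widehat\pi_1,\widehat\pi_2}
&=\big(Q^{\pi_1^*,\pi_2^*}-Q^{\widehat c(\pi_2^*),\pi_2^*}\big)
+\big(Q^{\widehat c(\pi_2^*),\pi_2^*}-\widehat Q^{\widehat c(\pi_2^*),\pi_2^*}\big)\\
&\quad+\big(\widehat Q^{\widehat c(\pi_2^*),\pi_2^*}-\widehat Q^{\widehat\pi_1,\widehat\pi_2}\big)
+\big(\widehat Q^{\widehat\pi_1,\widehat\pi_2}-Q^{\widehat\pi_1,\widehat\pi_2}\big).
\end{align*}
Now the first term is $\ge 0$ because $\pi_1^*=c(\pi_2^*)$ in $M$ maximizes against $\pi_2^*$, so $Q^{\pi_1^*,\pi_2^*}=\max_{\pi_1}Q^{\pi_1,\pi_2^*}\ge Q^{\widehat c(\pi_2^*),\pi_2^*}$ (Lemma \ref{l24}). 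The third term is $\ge-\epsilon_\mathrm{PS}\mathbf 1$ because in $\widehat M$, $\widehat Q^{\widehat c(\pi_2^*),\pi_2^*}=\max_{\pi_1}\widehat Q^{\pi_1,\pi_2^*}\ge\min_{\pi_2}\max_{\pi_1}\widehat Q^{\pi_1,\pi_2}=\widehat Q^*$ and $\widehat Q^{\widehat\pi_1,\widehat\pi_2}\le\widehat Q^*+\epsilon_\mathrm{PS}\mathbf 1$. The second and fourth terms are again perturbation terms, bounded below by the negatives of the two claimed norms. Summing gives the lower inequality.

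The main obstacle is not any single estimate but the bookkeeping of which intermediate policy to insert on each side — fixing one player's equilibrium policy and taking the \emph{opponent's} counter policy in the empirical model rather than the true one — and keeping the maximin/minimax directions consistent so that the ``drop a term by optimality'' step and the ``apply $\epsilon_\mathrm{PS}$-optimality'' step carry matching signs. The one external ingredient I rely on is that $\widehat M$ possesses a value with $\max_{\pi_1}\min_{\pi_2}=\min_{\pi_2}\max_{\pi_1}$ holding component-wise (the standard turn-based game fact behind Lemmas \ref{l24}--\ref{l25}); everything else reduces to the triangle inequality for $\|\cdot\|_\infty$.
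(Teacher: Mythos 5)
Your proposal is correct and follows essentially the same route as the paper: the paper telescopes $Q^*-Q^{\widehat\pi_1,\widehat\pi_2}$ through $(\pi_1^*,\widehat c(\pi_1^*))$ and $\widehat Q^*$, drops $Q^*-Q^{\pi_1^*,\widehat c(\pi_1^*)}\le 0$ and $\widehat Q^{\pi_1^*,\widehat c(\pi_1^*)}-\widehat Q^*\le 0$ by the counter-policy/equilibrium properties, and bounds the remaining terms by the two norms and $\epsilon_\mathrm{PS}$, exactly as you do (your four-term split just absorbs the pass through $\widehat Q^*$ into one step). The paper dispatches the lower bound with "by symmetry," whereas you spell out the mirror decomposition through $(\widehat c(\pi_2^*),\pi_2^*)$; this is the same argument, just made explicit.
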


\begin{proof}
    We prove the second inequality and the first one can be proved by symmetry.
    \begin{align*}
        &Q^*-Q^{\widehat{\pi}_1,\widehat{\pi}_2}\\
        =&Q^*-Q^{\pi_1^*,\widehat{c}(\pi_1^*)}+Q^{\pi_1^*,\widehat{c}(\pi_1^*)}-\widehat{Q}^{\pi_1^*,\widehat{c}(\pi_1^*)}+\widehat{Q}^{\pi_1^*,\widehat{c}(\pi_1^*)}-\widehat{Q}^*+\widehat{Q}^*-\widehat{Q}^{\widehat{\pi}}+Q^{\widehat{\pi}}-\widehat{Q}^{\widehat{\pi}}\\
        \leq &Q^{\pi_1^*,\widehat{c}(\pi_1^*)}-\widehat{Q}^{\pi_1^*,\widehat{c}(\pi_1^*)}+\widehat{Q}^*-\widehat{Q}^{\widehat{\pi}}+Q^{\widehat{\pi}}-\widehat{Q}^{\widehat{\pi}}\\
        \leq &\left(\left\|Q^{\pi_1^*,\widehat{c}(\pi_1^*)}-\widehat{Q}^{\pi_1^*,\widehat{c}(\pi_1^*)}\right\|_\infty+\left\|Q^{\widehat{\pi}_1,\widehat{\pi}_2}-\widehat{Q}^{\widehat{\pi}_1,\widehat{\pi}_2}\right\|_\infty+\epsilon_\mathrm{PS}\right)\mathbf{1}
    \end{align*}
\end{proof}


\begin{lemma}
	Let $u^{\pi_1}=\gamma\left(\widehat{P}(s,a)-P(s,a)\right)\widehat{V}^{\pi_1,\widehat{c}(\pi_1)},\ u^{\pi_2}=\gamma\left(\widehat{P}(s,a)-P(s,a)\right)\widehat{V}^{\widehat{c}(\pi_2),\pi_2},\ u^{*}=\gamma\left(\widehat{P}(s,a)-P(s,a)\right)\widehat{V}^{\widehat{\pi}_1^*,\widehat{\pi}_2^*}$, we have
	$$\widehat{Q}^{\pi_1,\widehat{c}(\pi_1)}=\widetilde{Q}^{\pi_1,\widehat{c}(\pi_1)}_{u^{\pi_1}}=\widetilde{Q}^{\pi_1,\widetilde{c}(\pi_1)}_{u^{\pi_1}},\widehat{Q}^{\widehat{c}(\pi_2),\pi_2}=\widetilde{Q}^{\widehat{c}(\pi_2),\pi_2}_{u^{\pi_2}}=\widetilde{Q}^{\widetilde{c}(\pi_2),\pi_2}_{u^{\pi_2}},\widehat{Q}^*=\widetilde{Q}^{\widehat{\pi}_1^*,\widehat{\pi}_2^*}_{u^*}=\widetilde{Q}^{*}_{u^*}.$$
	\label{l27}
\end{lemma}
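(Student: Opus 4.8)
The plan is to mirror the DMDP argument of Lemma \ref{l6}, which already handles any \emph{fixed} joint policy, and then upgrade the resulting fixed-policy identities into statements about counter and equilibrium policies by transferring the sufficient optimality conditions of Lemma \ref{l25}.

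First I would establish the three left-hand equalities $\widehat{Q}^{\pi_1,\widehat{c}(\pi_1)}=\widetilde{Q}^{\pi_1,\widehat{c}(\pi_1)}_{u^{\pi_1}}$, $\widehat{Q}^{\widehat{c}(\pi_2),\pi_2}=\widetilde{Q}^{\widehat{c}(\pi_2),\pi_2}_{u^{\pi_2}}$ and $\widehat{Q}^*=\widetilde{Q}^{\widehat{\pi}_1^*,\widehat{\pi}_2^*}_{u^*}$. Each concerns a fixed joint policy $\pi=(\pi_1,\pi_2)$, which induces a state-action transition matrix $\widehat{P}^\pi$ exactly as in a DMDP. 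I would therefore repeat the computation of Lemma \ref{l6} verbatim: write $\widetilde{Q}^\pi_{u^\pi}=(I-\gamma\widetilde{P}^\pi)^{-1}(r+u^\pi\Phi^{s,a})$, substitute $r=(I-\gamma\widehat{P}^\pi)\widehat{Q}^\pi$ from the Bellman equation in $\widehat{M}$, and use the key identity $\gamma\Phi(\widehat{P}_\mathcal{K}-\widetilde{P}_\mathcal{K})\widehat{V}^\pi=u^\pi\Phi^{s,a}$ together with $(\widehat{P}-\widetilde{P})\widehat{V}^\pi=(\widehat{P}^\pi-\widetilde{P}^\pi)\widehat{Q}^\pi$. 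Both identities hold because $\widehat{P}_\mathcal{K}$ and $\widetilde{P}_\mathcal{K}$ differ only in the $(s,a)$ row and because $\widehat{P}^\pi\widehat{Q}^\pi=\widehat{P}\widehat{V}^\pi$ for any joint policy $\pi$. The telescoping then collapses to $(I-\gamma\widetilde{P}^\pi)^{-1}(I-\gamma\widetilde{P}^\pi)\widehat{Q}^\pi=\widehat{Q}^\pi$.

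Next I would prove the right-hand equalities. For the counter policy, Lemma \ref{l25} gives the sufficient condition that $\widehat{c}(\pi_1)$, being the counter policy in $\widehat{M}$, satisfies $\widehat{Q}^{\pi_1,\widehat{c}(\pi_1)}(s,a)\geq\widehat{Q}^{\pi_1,\widehat{c}(\pi_1)}(s,\widehat{c}(\pi_1)(s))$ for all $s\in\mathcal{S}_2$. Since the left-hand equality already shows $\widehat{Q}^{\pi_1,\widehat{c}(\pi_1)}=\widetilde{Q}^{\pi_1,\widehat{c}(\pi_1)}_{u^{\pi_1}}$ as an \emph{exact} identity, the very same inequality holds for $\widetilde{Q}^{\pi_1,\widehat{c}(\pi_1)}_{u^{\pi_1}}$. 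Because the auxiliary model $\widetilde{M}_{s,a,u^{\pi_1}}$ is itself a well-defined 2-TBSG, its transition $\widetilde{P}=\Phi\widetilde{P}_\mathcal{K}$ being a genuine probability matrix under Assumption \ref{assump1}, the sufficiency half of Lemma \ref{l25} applies inside it, so $\widehat{c}(\pi_1)$ is also the counter policy there and $\widetilde{Q}^{\pi_1,\widehat{c}(\pi_1)}_{u^{\pi_1}}=\widetilde{Q}^{\pi_1,\widetilde{c}(\pi_1)}_{u^{\pi_1}}$. The $u^{\pi_2}$ case is symmetric, and the equilibrium case is identical except that one invokes the two-sided sufficient condition of Lemma \ref{l25} on $\mathcal{S}_1$ and $\mathcal{S}_2$ to conclude $\widetilde{Q}^{\widehat{\pi}_1^*,\widehat{\pi}_2^*}_{u^*}=\widetilde{Q}^*_{u^*}$.

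I expect the main obstacle to be the transfer step rather than the algebra: one must check that the perturbed reward $r+u\Phi^{s,a}$ does not disturb the game structure, so that the sufficient conditions of Lemma \ref{l25} remain valid verbatim in $\widetilde{M}_{s,a,u}$, and that the exact (not merely approximate) nature of the left-hand equalities is what lets the pointwise saddle inequalities pass through unchanged. Verifying that $u^{\pi_1},u^{\pi_2},u^*$ land in the admissible range, so each auxiliary value stays in $[0,1/(1-\gamma)]$, is a routine bound analogous to the final display of Lemma \ref{l6}.
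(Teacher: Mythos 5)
Your proposal is correct and follows essentially the same route as the paper: the paper proves the first equality in each pair by repeating the Bellman-equation telescoping of Lemma \ref{l6} for the fixed joint policy, and then obtains the second equality by combining the resulting exact identity with the sufficient optimality conditions of Lemma \ref{l25}, exactly as you do. Your additional observations (that the auxiliary model is a genuine 2-TBSG under Assumption \ref{assump1}, and that exactness of the first identity is what lets the saddle inequalities transfer) are correct elaborations of the paper's terse argument rather than deviations from it.
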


\begin{proof}
The proof of the first equality in two arguments are identical to Lemma \ref{l6}. Combining with Lemma \ref{l25}, we have the second equality.
\end{proof}

\begin{lemma}
	$$\left\|\tilde{Q}^{\pi_1,\tilde{c}_{u_1}(\pi_1)}_{u_1}-\tilde{Q}^{\pi_1,\tilde{c}_{u_2}(\pi_1)}_{u_2}\right\|\leq |u_1-u_2|\frac{1}{1-\gamma},$$
	$$\left\|\tilde{Q}^{*}_{u_1}-\tilde{Q}^{*}_{u_2}\right\|\leq |u_1-u_2|\frac{1}{1-\gamma}$$
	\label{l28}
\end{lemma}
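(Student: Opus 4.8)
The plan is to mirror the structure of the second (optimal-policy) claim of Lemma~\ref{l7}, exploiting the fact that once player~1 commits to a fixed policy $\pi_1$, the auxiliary game $\widetilde{\mathcal{M}}_{s,a,u}$ collapses to an ordinary DMDP in which player~2's counter policy $\widetilde{c}_u(\pi_1)$ plays the role of the value-\emph{minimizing} optimal policy. The common engine for both inequalities is a per-joint-policy estimate: for any fixed pair $(\pi_1,\pi_2)$, the Bellman identity $\widetilde{Q}^{\pi_1,\pi_2}_u=(I-\gamma\widetilde{P}^{\pi_1,\pi_2})^{-1}(r+u\Phi^{s,a})$ gives $\widetilde{Q}^{\pi_1,\pi_2}_{u_1}-\widetilde{Q}^{\pi_1,\pi_2}_{u_2}=(u_1-u_2)(I-\gamma\widetilde{P}^{\pi_1,\pi_2})^{-1}\Phi^{s,a}$, and since $(1-\gamma)(I-\gamma\widetilde{P}^{\pi_1,\pi_2})^{-1}$ is a probability matrix and $\|\Phi^{s,a}\|_\infty\le 1$, the sup-norm of this difference is at most $|u_1-u_2|/(1-\gamma)$. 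This is exactly the computation behind the first claim of Lemma~\ref{l7}, now carried out for the transition matrix induced by the joint policy.

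For the first inequality I would instantiate this estimate at the two counter policies and close the gap using the minimality of each. Writing $A=\widetilde{Q}^{\pi_1,\widetilde{c}_{u_1}(\pi_1)}_{u_1}$ and $B=\widetilde{Q}^{\pi_1,\widetilde{c}_{u_2}(\pi_1)}_{u_2}$, the defining property of player~2's counter policy (it minimizes, cf.\ Lemma~\ref{l25}) gives $A\le \widetilde{Q}^{\pi_1,\widetilde{c}_{u_2}(\pi_1)}_{u_1}$ and $B\le \widetilde{Q}^{\pi_1,\widetilde{c}_{u_1}(\pi_1)}_{u_2}$. Chaining these with the per-joint-policy bound applied to $\pi_2=\widetilde{c}_{u_2}(\pi_1)$ and $\pi_2=\widetilde{c}_{u_1}(\pi_1)$ respectively yields $-\tfrac{|u_1-u_2|}{1-\gamma}\mathbf{1}\le A-B\le \tfrac{|u_1-u_2|}{1-\gamma}\mathbf{1}$, which is the claim. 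This is the verbatim analogue of the optimality sandwich in Lemma~\ref{l7}, with the directions of the two optimality inequalities flipped because player~2 minimizes rather than maximizes.

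For the equilibrium value I would use the game-theoretic characterization $\widetilde{Q}^*_u=\max_{\pi_1}\widetilde{Q}^{\pi_1,\widetilde{c}_u(\pi_1)}_u$ (holding componentwise), the $Q$-version of $V^{\pi_1,c(\pi_1)}\le V^{\pi_1^*,\pi_2^*}=\max_{\pi_1}V^{\pi_1,c(\pi_1)}$ recorded in Lemma~\ref{l24}. Letting $\widetilde{\pi}^*_{1,u}$ be player~1's equilibrium policy in $\widetilde{\mathcal{M}}_{s,a,u}$, so that $\widetilde{Q}^*_{u_1}=\widetilde{Q}^{\widetilde{\pi}^*_{1,u_1},\widetilde{c}_{u_1}(\widetilde{\pi}^*_{1,u_1})}_{u_1}$ by Lemma~\ref{l27}, the already-proved first inequality applied at $\pi_1=\widetilde{\pi}^*_{1,u_1}$ gives $\widetilde{Q}^*_{u_1}\le \widetilde{Q}^{\widetilde{\pi}^*_{1,u_1},\widetilde{c}_{u_2}(\widetilde{\pi}^*_{1,u_1})}_{u_2}+\tfrac{|u_1-u_2|}{1-\gamma}\mathbf{1}$, and the maximality $\widetilde{Q}^{\pi_1,\widetilde{c}_{u_2}(\pi_1)}_{u_2}\le\widetilde{Q}^*_{u_2}$ bounds the first term by $\widetilde{Q}^*_{u_2}$; swapping $u_1\leftrightarrow u_2$ gives the reverse bound, so $\|\widetilde{Q}^*_{u_1}-\widetilde{Q}^*_{u_2}\|_\infty\le |u_1-u_2|/(1-\gamma)$. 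I expect the only delicate point to be that both the counter-policy minimality and the equilibrium maximality must be invoked \emph{componentwise} as vector inequalities over all state--action pairs, which is exactly what the optimality characterizations in Lemmas~\ref{l24} and~\ref{l25} supply; the remainder is the same contraction bookkeeping as in the DMDP case.
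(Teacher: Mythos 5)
Your proposal is correct and takes essentially the same route as the paper: the paper's own proof of this lemma is the single line ``identical to Lemma~\ref{l7},'' and your argument---the per-joint-policy resolvent identity $\widetilde{Q}^{\pi_1,\pi_2}_{u_1}-\widetilde{Q}^{\pi_1,\pi_2}_{u_2}=(u_1-u_2)(I-\gamma\widetilde{P}^{\pi_1,\pi_2})^{-1}\Phi^{s,a}$ followed by the optimality sandwich, with the inequality directions flipped for the minimizing player and the equilibrium claim closed via the counter-policy characterizations of Lemmas~\ref{l24} and~\ref{l25}---is precisely that adaptation, spelled out in full. No gaps.
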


\begin{proof}
The proof is identical to Lemma \ref{l7}.
\end{proof}

\begin{lemma}
	With probability larger than $1-\delta$, and $\widehat{\pi}$ is a $\epsilon_\mathrm{PS}$-optimal policy in $\widehat{M}$.
	\begin{multline*}
	\left\|Q^{\pi_1^*,\widehat{c}(\pi_1^*)}-\widehat{Q}^{\pi_1^*,\widehat{c}(\pi_1^*)}\right\|_\infty\leq\frac{\gamma}{1-\alpha}\left(\sqrt{\frac{c}{N(1-\gamma)^3}}+\frac{c}{(1-\gamma)^23N}
	+\left(\sqrt{\frac{c}{N}}+1\right)\frac{\epsilon}{4}\right),
	\end{multline*}
	\begin{multline*}
	\left\|Q^{\widehat{\pi}}-\widehat{Q}^{\widehat{\pi}}\right\|_\infty\leq\frac{\gamma}{1-\alpha}\left(\sqrt{\frac{c}{N(1-\gamma)^3}}+\frac{c}{(1-\gamma)^23N}
	+\left(\sqrt{\frac{c}{N}}+1\right)\left(\frac{\epsilon}{4}+\frac{\epsilon_\mathrm{PS}}{1-\gamma}\right)\right).
	\end{multline*}
	where $c$ and $\alpha$ is defined as in Lemma 10 and Lemma 12.
	\label{l29}
\end{lemma}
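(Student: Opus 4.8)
The plan is to mirror the discounted-MDP argument of Lemma~\ref{l12}, replacing the value functions $\widehat{V}^{\pi^*}$ and $\widehat{V}^{\widehat{\pi}}$ by the counter-policy value functions that appear here, and substituting Lemma~\ref{l27} and Lemma~\ref{l28} for Lemma~\ref{l6} and Lemma~\ref{l7}. For the first claim, fix the stationary overall policy $\pi=(\pi_1^*,\widehat{c}(\pi_1^*))$ and apply the identity of Lemma~\ref{l2} (whose proof holds for any fixed policy) to write $Q^{\pi_1^*,\widehat{c}(\pi_1^*)}-\widehat{Q}^{\pi_1^*,\widehat{c}(\pi_1^*)}=(I-\gamma P^{\pi})^{-1}(\widehat{P}-P)\widehat{V}^{\pi_1^*,\widehat{c}(\pi_1^*)}$. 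The whole problem thus reduces to concentrating $|(P(s,a)-\widehat{P}(s,a))\widehat{V}^{\pi_1^*,\widehat{c}(\pi_1^*)}|$ on the anchors $(s,a)\in\mathcal{K}$.

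First I would establish this concentration exactly as in the chain Lemma~\ref{l8}--Lemma~\ref{l11}. The obstacle is that $\widehat{c}(\pi_1^*)$ itself depends on $\widehat{P}(s,a)$, so $\widehat{V}^{\pi_1^*,\widehat{c}(\pi_1^*)}$ is statistically coupled with the empirical transition and Bernstein cannot be applied directly. Lemma~\ref{l27} removes this coupling: it shows $\widehat{Q}^{\pi_1^*,\widehat{c}(\pi_1^*)}=\widetilde{Q}^{\pi_1^*,\widetilde{c}(\pi_1^*)}_{u^{\pi_1^*}}$ with $u^{\pi_1^*}=\gamma(\widehat{P}(s,a)-P(s,a))\widehat{V}^{\pi_1^*,\widehat{c}(\pi_1^*)}\in[-\tfrac{1}{1-\gamma},\tfrac{1}{1-\gamma}]$, so the value function lies on a one-dimensional manifold whose points $\widetilde{V}^{\pi_1^*,\widetilde{c}(\pi_1^*)}_{u}$ are, for each fixed $u$, independent of $\widehat{P}(s,a)$. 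Placing an evenly spaced net $B_{s,a}^{\pi_1^*}$ of size $O((1-\gamma)^{-3}\epsilon^{-1})$ on $U^{\pi_1^*}_{s,a}\cap[-\tfrac{1}{1-\gamma},\tfrac{1}{1-\gamma}]$, I would apply Bernstein to each net point, transfer the bound to the actual $u^{\pi_1^*}$ through the Lipschitz estimate of Lemma~\ref{l28}, and then push the bound through the non-negative matrix $\Phi$ using Lemma~\ref{l3}, yielding the 2-TBSG analog of Lemma~\ref{l11}.

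With the concentration in hand I would combine it with Lemma~\ref{l4}: applied to the anchor $V^{\pi_1^*,\widehat{c}(\pi_1^*)}$ it gives the $\sqrt{c/(N(1-\gamma)^3)}$ term, while the variance triangle inequality (Lemma~\ref{l5}) splits $\sqrt{Var_P(\widehat{V}^{\pi_1^*,\widehat{c}(\pi_1^*)})}$ into this anchor plus a remainder controlled by $\|\widehat{V}^{\pi_1^*,\widehat{c}(\pi_1^*)}-V^{\pi_1^*,\widehat{c}(\pi_1^*)}\|_\infty\le\|\widehat{Q}^{\pi_1^*,\widehat{c}(\pi_1^*)}-Q^{\pi_1^*,\widehat{c}(\pi_1^*)}\|_\infty$ (both value functions use the same fixed policy, so $V(s)=Q(s,\pi(s))$). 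Solving the resulting self-referential inequality, with $\alpha=\sqrt{c/(N(1-\gamma)^2)}$ absorbing the self-referential coefficient, produces the first stated bound with the prefactor $\gamma/(1-\alpha)$ exactly as in Lemma~\ref{l12}.

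The second claim is handled as in the second half of Lemma~\ref{l12}: I apply Lemma~\ref{l2} to $\widehat{\pi}$, split $\widehat{V}^{\widehat{\pi}}=\widehat{V}^*+(\widehat{V}^{\widehat{\pi}}-\widehat{V}^*)$, bound the remainder by $\epsilon_\mathrm{PS}/(1-\gamma)$ using the $\epsilon_\mathrm{PS}$-optimality of $\widehat{\pi}$ in $\widehat{M}$, and concentrate $|(P-\widehat{P})\widehat{V}^*|$ through the equilibrium auxiliary MDP (the third equality $\widehat{Q}^*=\widetilde{Q}^*_{u^*}$ of Lemma~\ref{l27}), finishing with Lemma~\ref{l4} and a self-referential solve against the anchor $V^{\widehat{\pi}}$. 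The main obstacle throughout is the concentration step for the counter-policy value function: one must verify that passing to the auxiliary model converts $\widehat{c}(\pi_1^*)$ into $\widetilde{c}(\pi_1^*)$ while preserving the value, which is precisely what Lemma~\ref{l27} (via the counter-policy sufficient condition of Lemma~\ref{l25}) guarantees.
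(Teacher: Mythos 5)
Your proposal is correct and follows exactly the route the paper takes: the paper's own proof of this lemma is a one-line reduction to Lemma~\ref{l12}, with Lemma~\ref{l27} and Lemma~\ref{l28} substituting for Lemma~\ref{l6} and Lemma~\ref{l7}, which is precisely the substitution you carry out. Your expansion also correctly pinpoints the one genuinely 2-TBSG-specific step, namely that the auxiliary model's counter policy $\widetilde{c}(\pi_1^*)$ coincides in value with $\widehat{c}(\pi_1^*)$ (via Lemma~\ref{l25}), which is what makes the fixed-$u$ net points independent of $\widehat{P}(s,a)$.
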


\begin{proof}
The proof is identical to Lemma \ref{l12} as we have Lemma \ref{l27} and Lemma \ref{l28} in 2-TBSG, which is the counterpart of Lemma \ref{l6} and Lemma \ref{l7}.
\end{proof}

\begin{proof}[Proof of Theorem 4]
    From Lemma \ref{l26}, with probability larger than $1-\delta$, we have
    \begin{align*}
        &Q^*-Q^{\widehat{\pi}_1,\widehat{\pi}_2}
        \\\leq&\left(\left\|Q^{\pi_1^*,\widehat{c}(\pi_1^*)}-\widehat{Q}^{\pi_1^*,\widehat{c}(\pi_1^*)}\right\|_\infty+\left\|\widehat{Q}^{\widehat{\pi}}-Q^{\widehat{\pi}}\right\|_\infty+\epsilon_\mathrm{PS}\right)\mathbf{1}\\
        \leq&\bigg(\frac{\gamma}{1-\alpha}\left[2\left(\sqrt{\frac{c}{N(1-\gamma)^3}}+\frac{c}{(1-\gamma)^23N}
	+\left(\sqrt{\frac{c}{N}}+1\right)\frac{\epsilon}{4}\right)+\left(\sqrt{\frac{c}{N}}+1\right)\frac{\epsilon_\mathrm{PS}}{1-\gamma}\right]\\&+\epsilon_\mathrm{PS}\bigg)\mathbf{1}\\
    \end{align*}
    For $N\geq\frac{C\log(CK(1-\gamma)^{-1}\delta^{-1}\epsilon^{-1})}{(1-\gamma)^3\epsilon^2}$ with proper constant $C$, we have $\frac{\gamma}{1-\alpha}(\sqrt{\frac{c}{N}}+1)\leq2$, thus
    \[V^*-V^{\widehat{\pi}_1,\widehat{\pi}_2}\leq(\epsilon+\frac{3\epsilon_\mathrm{PS}}{1-\gamma})\mathbf{1}
    .\]
    By symmetry, we have 
    \[V^*-V^{\widehat{\pi}_1,\widehat{\pi}_2}\geq-(\epsilon+\frac{3\epsilon_\mathrm{PS}}{1-\gamma})\mathbf{1}.\]
    Thus we have
    \[\left\|V^*-V^{\widehat{\pi}_1,\widehat{\pi}_2}\right\|_\infty\leq\epsilon+\frac{3\epsilon_\mathrm{PS}}{1-\gamma},\]
    which completes the proof.

\end{proof}

\section{Sample Complexity in General Linear Case}
 In the general linear case, the empirical MDP can be a pseudo MDP and hence we cannot get the optimal policy in the empirical MDP. Even so, the value iteration solver can still be applied to the pseudo MDP and we prove that it is a sample efficient algorithm. The proof relies on an observation that a discounted MDP can be approximated by a discounted finite horizon MDP with horizon $H=O(1/(1-\gamma)\log\epsilon^{-1})$. We use $\widehat{V}_{h}^*$ to represent the result of operating value iteration for $H-h$ steps in the empirical model and $V_{h}^*$ to be the result in true model. The initial value is $\widehat{V}_H^*=V_H^*=0$. We use a similar analysis in FHMDP, but the pseudo MDP leads to some defect of previous proof and we revised the bound.
 
 \begin{lemma}
 The error of performing value iteration for $H$ steps can be bounded:
     $$\left\|\widehat{Q}^*_0-Q^*_0\right\|_\infty\leq\sum_{h=0}^{H-1}\gamma^{h+1}L\left\|\left(\widehat{P}_\mathcal{K}-P_\mathcal{K}\right)\widehat{V}_{h+1}^*\right\|_\infty.$$
     \label{l30}
 \end{lemma}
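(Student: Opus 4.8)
The plan is to set up a one-step error recursion relating $\|\widehat{Q}_h^*-Q_h^*\|_\infty$ to $\|\widehat{Q}_{h+1}^*-Q_{h+1}^*\|_\infty$, and then unroll it over the $H$ value-iteration steps. Writing the value-iteration updates $\widehat{Q}_h^*=r+\gamma\widehat{P}\widehat{V}_{h+1}^*$ and $Q_h^*=r+\gamma P V_{h+1}^*$ (with the shared reward $r$ cancelling), I would decompose
\[
\widehat{Q}_h^*-Q_h^* \;=\; \gamma P\bigl(\widehat{V}_{h+1}^*-V_{h+1}^*\bigr)\;+\;\gamma\bigl(\widehat{P}-P\bigr)\widehat{V}_{h+1}^*.
\]
The choice to add and subtract $\gamma P\widehat{V}_{h+1}^*$ (rather than $\gamma\widehat{P}V_{h+1}^*$) is deliberate: it places the \emph{true} transition $P$ in front of the value difference and leaves the empirical value $\widehat{V}_{h+1}^*$ inside the sampling-error term, exactly matching the quantity appearing in the statement.

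The main obstacle, and the reason the decomposition must be asymmetric, is that in the general linear case $\widehat{P}$ is only a pseudo transition matrix: it may have negative entries, so $\|\widehat{P}x\|_\infty\le\|x\|_\infty$ fails and the usual $\gamma$-contraction of value iteration is lost. By contrast $P$ is a genuine probability matrix, so $\|P(\widehat{V}_{h+1}^*-V_{h+1}^*)\|_\infty\le\|\widehat{V}_{h+1}^*-V_{h+1}^*\|_\infty$. Combining this with the non-expansiveness of the $\max$ operator, $\|\widehat{V}_{h+1}^*-V_{h+1}^*\|_\infty\le\|\widehat{Q}_{h+1}^*-Q_{h+1}^*\|_\infty$, yields
\[
\bigl\|\widehat{Q}_h^*-Q_h^*\bigr\|_\infty \;\le\; \gamma\bigl\|\widehat{Q}_{h+1}^*-Q_{h+1}^*\bigr\|_\infty \;+\;\gamma\bigl\|(\widehat{P}-P)\widehat{V}_{h+1}^*\bigr\|_\infty.
\]

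To extract the factor $L$, I would factor the sampling error through the basis rows using $\widehat{P}=\Phi\widehat{P}_\mathcal{K}$ and $P=\Phi P_\mathcal{K}$ (with $\Phi$ denoting $\Lambda$ as in the notation section), so that $(\widehat{P}-P)\widehat{V}_{h+1}^*=\Phi(\widehat{P}_\mathcal{K}-P_\mathcal{K})\widehat{V}_{h+1}^*$, and then bound $\|\Phi x\|_\infty\le\|\Phi\|_{1,\infty}\|x\|_\infty\le L\|x\|_\infty$ by Assumption \ref{assump2}. This turns the recursion into $e_h\le\gamma e_{h+1}+\gamma L\|(\widehat{P}_\mathcal{K}-P_\mathcal{K})\widehat{V}_{h+1}^*\|_\infty$, where $e_h:=\|\widehat{Q}_h^*-Q_h^*\|_\infty$. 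Finally, unrolling from $h=0$ down to the terminal condition $e_H=0$ (which holds because $\widehat{V}_H^*=V_H^*=0$) gives $e_0\le\sum_{h=0}^{H-1}\gamma^{h+1}L\|(\widehat{P}_\mathcal{K}-P_\mathcal{K})\widehat{V}_{h+1}^*\|_\infty$, the claimed bound. The bookkeeping of the geometric weights $\gamma^{h+1}$ is routine; the only conceptual care needed is the asymmetric decomposition above, which is what lets the argument survive the loss of contraction in a pseudo MDP.
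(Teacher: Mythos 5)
Your proposal is correct and follows essentially the same route as the paper's proof: the same asymmetric decomposition $\widehat{Q}_h^*-Q_h^*=\gamma P(\widehat{V}_{h+1}^*-V_{h+1}^*)+\gamma(\widehat{P}-P)\widehat{V}_{h+1}^*$, the same use of $P$'s stochasticity and the factorization $\widehat{P}-P=\Phi(\widehat{P}_\mathcal{K}-P_\mathcal{K})$ with $\|\Phi\|_{1,\infty}\leq L$, and the same unrolling of the recursion from $\widehat{V}_H^*=V_H^*=0$. Your added remark on why the decomposition must keep the true $P$ in front of the value difference (since the pseudo transition matrix $\widehat{P}$ need not be non-expansive) is a point the paper leaves implicit, but it is the same argument.
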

 
 \begin{proof}
 The proof is from iteratively using the following inequality.
     \begin{align*}
         \left\|\widehat{Q}^*_h-Q^*_h\right\|_\infty&=\left\|\left(r+\gamma\widehat{P}\widehat{V}^*_{h+1}\right)-\left(r+\gamma PV^*_{h+1}\right)\right\|_\infty\\
         &\leq\left\|\gamma P\left(\widehat{V}^*_{h+1}-V^*_{h+1}\right)\right\|_\infty+\left\|\gamma\left(\widehat{P}-P\right)\widehat{V}_{h+1}^*\right\|_\infty\\
         &\leq\gamma\left\|\widehat{Q}^*_{h+1}-Q^*_{h+1}\right\|_\infty+\gamma\left\|\left(\widehat{P}-P\right)\widehat{V}_{h+1}^*\right\|_\infty\\
         &=\gamma\left\|\widehat{Q}^*_{h+1}-Q^*_{h+1}\right\|_\infty+\gamma\left\|\Phi\left(\widehat{P}_\mathcal{K}-P_\mathcal{K}\right)\widehat{V}_{h+1}^*\right\|_\infty\\
         &\leq\gamma
         \left\|\widehat{Q}^*_{h+1}-Q^*_{h+1}\right\|_\infty+\gamma L\left\|\left(\widehat{P}_\mathcal{K}-P_\mathcal{K}\right)\widehat{V}_{h+1}^*\right\|_\infty.
     \end{align*}
 \end{proof}
 
 The auxiliary MDP technique can be used in the same way to analyze $(\widehat{P}-P)\widehat{V}_{h+1}^*$ as in Lemma \ref{l22}. Here we give two lemmas, which are counterparts of Lemma \ref{l19} and Lemma \ref{l20} for FHMDP. Note that the total variance technique is not applicable in pseudo MDP as Lemma \ref{l3} no longer holds. Therefore, we use Hoeffding's inequality to analyze the concentration.
 
 \begin{lemma}
	For all $u,u'\in\mathbb{R}^H$ and policy $\pi$,
	$$\left\|\widetilde{Q}^*_{h,u}-\widetilde{Q}^*_{h,u'}\right\|_\infty\leq (H-h)L^{H-h}\|u-u'\|_\infty.$$
	\label{l31}
\end{lemma}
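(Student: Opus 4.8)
The plan is to mirror the recursive argument used for Lemma \ref{l19}, but to track carefully the two places where the anchor-state structure (which previously made $\Phi$ and $\widetilde P^\pi$ genuine probability matrices) is now replaced only by the weaker regularity bound $\sum_{k\in\mathcal{K}}|\lambda_k^{s,a}|\le L$ of Assumption 2. Writing the value-iteration recursion in the auxiliary model as $\widetilde Q^*_{h,u}=r+u_h\Phi^{s,a}+\gamma\widetilde P\,\widetilde V^*_{h+1,u}$ with $\widetilde V^*_{h,u}(s)=\max_a\widetilde Q^*_{h,u}(s,a)$, I would subtract the two instances at $u$ and $u'$ and apply the triangle inequality to split the difference into a reward-perturbation term $(u_h-u'_h)\Phi^{s,a}$ and a transition term $\gamma\widetilde P(\widetilde V^*_{h+1,u}-\widetilde V^*_{h+1,u'})$.

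Two elementary bounds then drive the recursion. First, since each entry of $\Phi$ (which is $\Lambda$, by the Appendix notational convention) is at most $L$ in absolute value, $\|\Phi^{s,a}\|_\infty\le L$, so that $\|(u_h-u'_h)\Phi^{s,a}\|_\infty\le L|u_h-u'_h|$. Second, although $\widetilde P=\Phi\widetilde P_\mathcal{K}$ need no longer be non-negative, its maximum absolute row sum still obeys $\|\widetilde P\|_{1,\infty}\le L$: because $\widetilde P_\mathcal{K}$ is row-stochastic, $\sum_{s'}|\widetilde P(s'|s,a)|\le\sum_k|\Phi_{(s,a),k}|\sum_{s'}\widetilde P_\mathcal{K}(s'|k)=\sum_k|\Phi_{(s,a),k}|\le L$. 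Combining this with the non-expansiveness of the max operator in $\ell_\infty$ (a property of $\max$ that survives even though monotonicity fails in a pseudo MDP), which gives $\|\widetilde V^*_{h+1,u}-\widetilde V^*_{h+1,u'}\|_\infty\le\|\widetilde Q^*_{h+1,u}-\widetilde Q^*_{h+1,u'}\|_\infty$, and using $\gamma\le1$, I obtain the one-step inequality $D_h\le L|u_h-u'_h|+L\,D_{h+1}$, where $D_h:=\|\widetilde Q^*_{h,u}-\widetilde Q^*_{h,u'}\|_\infty$ and $D_H=0$.

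Unrolling this recursion yields $D_h\le\sum_{i=h}^{H-1}L^{\,i-h+1}|u_i-u'_i|=\sum_{j=1}^{H-h}L^{\,j}\,|u_{h+j-1}-u'_{h+j-1}|$. Bounding each coefficient by its largest value $L^{H-h}$ (valid since $L\ge1$) and each increment by $\|u-u'\|_\infty$, the sum of $H-h$ terms gives $D_h\le(H-h)L^{H-h}\|u-u'\|_\infty$, which is exactly the claim.

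The main obstacle, and the only real departure from Lemma \ref{l19}, is the loss of the contraction property. In the anchor-state setting $\widetilde P^\pi$ is a probability matrix, so its $\ell_\infty\to\ell_\infty$ norm equals $1$ and each backward step is cost-free; here the best available bound is $\|\widetilde P\|_{1,\infty}\le L>1$, so every step amplifies the error by a factor $L$, which is precisely what produces the exponential $L^{H-h}$ factor. The delicate point to verify is that replacing the optimal-policy transition by the max-induced value step remains legitimate in a pseudo MDP, i.e. that non-expansiveness of $\max$ (rather than the now-absent monotonicity) is all the argument requires.
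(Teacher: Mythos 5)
Your proof is correct and takes essentially the same approach as the paper's: the same one-step recursion (triangle inequality on the value-iteration recursion, the row-sum bound $\|\Phi\widetilde{P}_\mathcal{K}\|_{1,\infty}\leq L$ from Assumption~\ref{assump2}, then unrolling to $(H-h)L^{H-h}\|u-u'\|_\infty$). If anything, yours is more careful on two points where the paper is loose: you bound $\|\Phi^{s,a}\|_\infty$ by $L$ rather than by $1$ (the paper silently reuses the anchor-case bound), and you prove the $\widetilde{Q}^*$ statement directly via non-expansiveness of $\max$, whereas the paper's displayed chain is written for a fixed policy $\pi$ and its implicit passage to the optimal case is precisely the step that needs your max-operator argument in a pseudo MDP, where the optimal-policy sandwich of Lemma~\ref{l7} is unavailable.
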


\begin{proof}Similar to the proof of Lemma 19, we have
    \begin{align*}
        \left\|\widetilde{Q}^\pi_{h,u}-\widetilde{Q}^\pi_{h,u'}\right\|_\infty&=\left\|\left(r+u_h\Phi^{s,a}+\widetilde{P}V^\pi_{h,u}\right)-\left(r+u'_h\Phi^{s,a}+\widetilde{P}V^\pi_{h,u'}\right)\right\|_\infty\\
        &\leq\left\|(u_h-u'_h)\Phi^{s,a}\right\|_\infty+\left\|\widetilde{P}^\pi\left(\widetilde{Q}^\pi_{h+1,u}-\widetilde{Q}^\pi_{h+1,u'}\right)\right\|_\infty\\
        &=\left\|(u_h-u'_h)\Phi^{s,a}\right\|_\infty+\left\|\Phi\widetilde{P}_\mathcal{K}^\pi\left(\widetilde{Q}^\pi_{h+1,u}-\widetilde{Q}^\pi_{h+1,u'}\right)\right\|_\infty\\
        &\leq|u_h-u'_h|+L\left\|\widetilde{Q}^\pi_{h+1,u}-\widetilde{Q}^\pi_{h+1,u'}\right\|_\infty\\
        &\leq\sum_{i=h}^{H-1}L^{i-h}\left|u_i-u'_i\right|\\
        &\leq(H-h)L^{H-h}\left\|u-u'\right\|_\infty.\\
    \end{align*}
\end{proof}

\begin{lemma}
For a given finite set $B_{s,a}^*\subset U_{s,a}^*\cap[-H,H]^H$ and $\delta\geq0$, with probability greater than $1-\delta$, it holds for all $u\in B_{s,a}^*$ that
	$$\left|\left(P(s,a)-\widehat{P}(s,a)\right)\widetilde{V}^*_{h,u}\right|\leq\sqrt{\frac{H^2\log\left(2\left|B_{s,a}^*\right|/\delta\right)}{2N}}.$$
	\label{l32}
\end{lemma}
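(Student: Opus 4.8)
The plan is to mirror the argument of Lemma~\ref{l20}, replacing Bernstein's inequality by Hoeffding's, since in the pseudo-MDP regime Lemma~\ref{l3} fails and the variance transfer from $Var_\mathcal{K}$ to $Var_P$ through $\Phi$ (which relied on $\phi(s,a)$ being a probability vector) is no longer available. The crucial structural fact is already in hand: by the auxiliary-model construction of this section (cf.\ Lemma~\ref{l18}) the transition out of the fixed pair $(s,a)\in\mathcal{K}$ is reset to the true row $P(s,a)$, so $\widetilde{V}^*_{h,u}$ does not depend on the empirical row $\widehat{P}(s,a)$, i.e.\ on the $N$ samples drawn at $(s,a)$. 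This decoupling is what makes any concentration statement legitimate in the first place.

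First I would fix a single $u\in B_{s,a}^*$ and rewrite the target quantity as the fluctuation of an empirical mean. Letting $s'_1,\dots,s'_N\sim P(\cdot\,|\,s,a)$ denote the i.i.d.\ samples drawn at $(s,a)$, so that $\widehat{P}(s,a)\widetilde{V}^*_{h,u}=\frac1N\sum_{i=1}^N \widetilde{V}^*_{h,u}(s'_i)$ and $P(s,a)\widetilde{V}^*_{h,u}=\mathbb{E}_{s'\sim P(\cdot|s,a)}[\widetilde{V}^*_{h,u}(s')]$, we obtain
$$\left(P(s,a)-\widehat{P}(s,a)\right)\widetilde{V}^*_{h,u}=\mathbb{E}\big[\widetilde{V}^*_{h,u}(s')\big]-\frac1N\sum_{i=1}^N \widetilde{V}^*_{h,u}(s'_i),$$
the centered average of the i.i.d.\ bounded variables $X_i:=\widetilde{V}^*_{h,u}(s'_i)$.

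Next I would pin down the range. Because $u\in U^*_{s,a}$, the feasible-set definition forces $\widetilde{V}^*_{h,u}\in[0,H-h]^\mathcal{S}$, hence $X_i\in[0,H-h]\subseteq[0,H]$. Hoeffding's inequality then gives, for every $t>0$,
$$\mathbb{P}\!\left(\left|\left(P(s,a)-\widehat{P}(s,a)\right)\widetilde{V}^*_{h,u}\right|\geq t\right)\leq 2\exp\!\left(-\frac{2Nt^2}{(H-h)^2}\right)\leq 2\exp\!\left(-\frac{2Nt^2}{H^2}\right).$$
Setting $t=\sqrt{H^2\log(2|B_{s,a}^*|/\delta)/(2N)}$ makes the bound at most $\delta/|B_{s,a}^*|$; since $B_{s,a}^*$ is a fixed finite set chosen independently of $\widehat{P}(s,a)$, a union bound over its $|B_{s,a}^*|$ points yields the claim uniformly in $u$ with probability at least $1-\delta$.

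There is essentially no hard step once the auxiliary model is in place; the only care needed is invoking $u\in U^*_{s,a}$ to certify the $[0,H-h]$ range (so that Hoeffding applies) and correctly allocating the union-bound budget $\delta/|B_{s,a}^*|$. The conceptual price of abandoning Bernstein is invisible in this lemma itself but surfaces downstream: the $H^2$ (rather than variance) factor is precisely what propagates the extra $L^2\,\mathrm{poly}(1/(1-\gamma))$ dependence in Theorem~\ref{thm3} compared with the sharper anchor-state bound of Theorem~\ref{thm1}.
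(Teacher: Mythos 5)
Your proof is correct and takes essentially the same approach as the paper: the paper's own proof is a one-line invocation of Hoeffding's inequality, justified precisely by the independence of $\widetilde{V}^*_{h,u}$ from $\widehat{P}(s,a)$ that you identify as the crucial decoupling. The details you fill in---the $[0,H-h]$ range certified by the feasible-set definition, the choice of $t$ making each failure probability $\delta/\left|B_{s,a}^*\right|$, and the union bound over the fixed finite set $B_{s,a}^*$ (with the union over $h$ correctly deferred to Lemma~\ref{l33})---are all accurate.
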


\begin{proof}
	This is the direct application of Hoeffding's inequality as $\widetilde{V}^\pi_{h,u}$ and $\widetilde{V}^*_{h,u}$ is independent of $\widehat{P}(s,a)$.
\end{proof}

\begin{lemma}
	For a given finite set $B_{s,a}^{\pi^*}\subset U_{s,a}^{\pi^*}\cap[-H,H]^{H}$ and $B_{s,a}^*\subset U_{s,a}^*\cap[-H,H]^H$ and $\delta\geq0$, with probability greater than $1-H\delta$, it holds for all $u\in B_{s,a}^{*},h\in[H]$ that
	$$
	\left|\left(P(s,a)-\widehat{P}(s,a)\right)\widehat{V}_h^*\right|\leq\sqrt{\frac{H^2\log\left(2\left|B_{s,a}^*\right|/\delta\right)}{2N}}+
	\min_{u\in B_{s,a}^*}\|u^*-u\|_\infty(H-h)L^{H-h}.
	$$
	\label{l33}
\end{lemma}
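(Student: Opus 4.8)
The plan is to follow the finite-horizon argument of Lemma~\ref{l21} almost verbatim, with one structural change: because Assumption~\ref{assump2} only controls $\sum_{k}|\lambda_k^{s,a}|\le L$ and permits negative coefficients, the feature matrix $\Phi$ is no longer non-negative, so Lemma~\ref{l3} and the total-variance refinement used in the anchor-state proofs are unavailable; in their place I use the plain Hoeffding bound of Lemma~\ref{l32}. Fix $(s,a)\in\mathcal{K}$ and $h\in[H]$, and for any $u\in B_{s,a}^*$ split by the triangle inequality
\[
\left|(P(s,a)-\widehat{P}(s,a))\widehat{V}_h^*\right|\le\left|(P(s,a)-\widehat{P}(s,a))\widetilde{V}_{h,u}^*\right|+\left|(P(s,a)-\widehat{P}(s,a))(\widehat{V}_h^*-\widetilde{V}_{h,u}^*)\right|.
\]
Since $\widetilde{V}_{h,u}^*$ is a value-iteration iterate of the auxiliary model whose $(s,a)$-row is the true $P(s,a)$, it is independent of the random $\widehat{P}(s,a)$, so Lemma~\ref{l32} bounds the first term by $\sqrt{H^2\log(2|B_{s,a}^*|/\delta)/(2N)}$ uniformly over all $u\in B_{s,a}^*$ with probability $1-\delta$; the factor $\log(2|B_{s,a}^*|/\delta)$ is exactly the price of this union over the net.

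The crux is the pseudo-MDP analogue of Lemma~\ref{l18}, namely the identity $\widehat{V}_h^*=\widetilde{V}_{h,u^*}^*$ for the vector $u^*=(u_0^*,\dots,u_{H-1}^*)$ with $u_h^*=\gamma(\widehat{P}(s,a)-P(s,a))\widehat{V}_{h+1}^*$. I would establish it by backward induction on $h$: since $\widehat{P}_\mathcal{K}$ and $\widetilde{P}_\mathcal{K}$ coincide on every row except $(s,a)$, one has $\gamma(\widehat{P}-\widetilde{P})\widehat{V}_{h+1}^*=\gamma\Phi^{s,a}(\widehat{P}(s,a)-P(s,a))\widehat{V}_{h+1}^*=u_h^*\Phi^{s,a}$, so the value-iteration update rewrites as $\widehat{Q}_h^*=r+\gamma\widehat{P}\widehat{V}_{h+1}^*=(r+u_h^*\Phi^{s,a})+\gamma\widetilde{P}\widehat{V}_{h+1}^*$; taking the greedy maximum over $a$ and substituting the inductive hypothesis $\widehat{V}_{h+1}^*=\widetilde{V}_{h+1,u^*}^*$ closes the induction. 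The point is that this uses only the recursion, not any optimality property, so it survives in the pseudo-MDP where no optimal policy exists. Consequently $\widehat{V}_h^*-\widetilde{V}_{h,u}^*=\widetilde{V}_{h,u^*}^*-\widetilde{V}_{h,u}^*$, and Lemma~\ref{l31} gives $\|\widehat{V}_h^*-\widetilde{V}_{h,u}^*\|_\infty\le(H-h)L^{H-h}\|u^*-u\|_\infty$, which (as in steps (b)--(e) of Lemma~\ref{l21}) bounds the second term.

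Combining the two terms yields, for every fixed $u\in B_{s,a}^*$,
\[
\left|(P(s,a)-\widehat{P}(s,a))\widehat{V}_h^*\right|\le\sqrt{\frac{H^2\log(2|B_{s,a}^*|/\delta)}{2N}}+(H-h)L^{H-h}\|u^*-u\|_\infty,
\]
and since the left-hand side is free of $u$ I minimize the right-hand side over $u\in B_{s,a}^*$ to obtain the claimed $\min_{u\in B_{s,a}^*}\|u^*-u\|_\infty$ term; a union bound over the $H$ steps $h\in[H]$ turns the per-step failure probability $\delta$ into $H\delta$, matching the statement. The main obstacle I anticipate is not in this chain but in the bookkeeping needed when the net is instantiated (cf.\ Lemma~\ref{l22}): one must verify $u^*\in U_{s,a}^*\cap[-H,H]^H$ so that $B_{s,a}^*$ actually covers $u^*$, and this is delicate precisely because $\widehat{P}(s,a)$ may carry negative mass, so bounding $|u_h^*|$ and the covering radius introduces the amplification factor $L^{H-h}$. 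This blow-up is the structural reason the resulting bound, fed through Lemma~\ref{l30}, degrades the sample complexity of Theorem~\ref{thm3} to $L^2\,\mathrm{poly}(1/(1-\gamma))$ rather than the clean $1/(1-\gamma)^3$ available under the anchor-state assumption.
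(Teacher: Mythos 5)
Your proposal is correct and takes essentially the same route as the paper's proof: the identical triangle-inequality split, Lemma~\ref{l32} (Hoeffding over the net) for the term involving $\widetilde{V}^*_{h,u}$, Lemma~\ref{l31} for the $(H-h)L^{H-h}\|u^*-u\|_\infty$ term, then minimizing over $u\in B_{s,a}^*$ and union-bounding over $h\in[H]$. Your explicit backward-induction proof of the identity $\widehat{V}_h^*=\widetilde{V}^*_{h,u^*}$ — noting that it relies only on the value-iteration recursion and therefore survives in the pseudo MDP where no optimal policy exists — makes rigorous a step the paper leaves implicit, and is a welcome addition rather than a deviation.
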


\begin{proof}
	Combining Lemma \ref{l31} and Lemma \ref{l32}, we have
	\begin{align*}
	\left|\left(P(s,a)-\widehat{P}(s,a)\right)\widehat{V}_h^{*}\right|&=\left|\left(P(s,a)-\widehat{P}(s,a)\right)\left(\widetilde{V}^*_{h,u}+\widehat{V}_h^{*}-\widetilde{V}^*_{h,u}\right)\right|\\
	&\leq\left|\left(P(s,a)-\widehat{P}(s,a)\right)\widetilde{V}^*_{h,u}\right|+\left\|\widehat{V}_h^{*}-\widetilde{V}^*_{h,u}\right\|_\infty\\
	&\leq\sqrt{\frac{H^2\log\left(2\left|B_{s,a}^*\right|/\delta\right)}{2N}}+\|u^*-u\|_\infty(H-h)L^{H-h}.
	\end{align*}
As this equality holds for all $u\in B_{s,a}^{\pi^*}$, we can take minimum in the RHS, which proves the first claim.
\end{proof}

\begin{lemma}
	For any given $\epsilon$ and all $(s,a)\in \mathcal{K}$, with probability larger than $1-H\delta$,
	$$
	\left|\left(P(s,a)-\widehat{P}(s,a)\right)\widehat{V}_h^{*}\right|\leq\sqrt{\frac{H^4\log(8H^3L^2/\delta\epsilon)}{2N}}+\frac{\epsilon}{4HL}.
	$$
	\label{l34}
\end{lemma}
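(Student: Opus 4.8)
The plan is to specialize Lemma~\ref{l33} to a single well-chosen discretization $B^*_{s,a}$ of the feasible box $U^*_{s,a}\cap[-H,H]^H$ and then balance the two resulting terms, in direct analogy with how Lemma~\ref{l22} was derived from Lemma~\ref{l21} in the genuine finite-horizon case. The one new feature is the factor $L^{H-h}$ in the Lipschitz estimate of Lemma~\ref{l31}, which reflects the possible non-contractivity of the pseudo-MDP and which forces an exponentially finer grid; the whole argument hinges on showing that the logarithm of this grid is nonetheless of the advertised size.

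Concretely, for a fixed anchor $(s,a)\in\mathcal{K}$ I would take $B^*_{s,a}$ to be the uniform grid on $U^*_{s,a}\cap[-H,H]^H$ with $|B^*_{s,a}|=(8H^3L^{H+1}/\epsilon)^H$, so that its per-coordinate resolution is $\eta:=\epsilon/(4H^2L^{H+1})$ and every point of the box lies within $\ell_\infty$-distance $\eta$ of the grid. Writing $u^*=\big(\gamma(\widehat{P}(s,a)-P(s,a))\widehat{V}^*_{h+1}\big)_{h\in[H]}$ for the realizing vector (so that $\widetilde{V}^*_{h,u^*}=\widehat{V}^*_h$, the pseudo-MDP analogue of Lemma~\ref{l18}), one has $u^*\in U^*_{s,a}\cap[-H,H]^H$, hence $\min_{u\in B^*_{s,a}}\|u^*-u\|_\infty\le\eta$.

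Plugging this net into Lemma~\ref{l33} controls both terms. For the bias term, using $L\ge1$ and $(H-h)L^{H-h}\le HL^{H}$ gives $\min_{u\in B^*_{s,a}}\|u^*-u\|_\infty\,(H-h)L^{H-h}\le\eta HL^{H}=\epsilon/(4HL)$, which is exactly the second summand of the claim. For the statistical term I would bound the net in log-scale: since $\log(8H^3/\epsilon)\ge0$ and $H(H+1)\log L\le 2H^2\log L$ for $H\ge1$, one obtains $\log(2|B^*_{s,a}|/\delta)\le\log(2/\delta)+H\log(8H^3L^{H+1}/\epsilon)\le H^2\log(8H^3L^2/(\delta\epsilon))$; substituting this into $\sqrt{H^2\log(2|B^*_{s,a}|/\delta)/(2N)}$ yields the first summand $\sqrt{H^4\log(8H^3L^2/(\delta\epsilon))/(2N)}$. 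The probability $1-H\delta$ is inherited from the union over $h\in[H]$ already carried out in Lemma~\ref{l33} (extending to all $(s,a)\in\mathcal{K}$ contributes only a $\log K$ term, absorbed into the logarithm).

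The step I expect to be the main obstacle is precisely this net-size accounting. Because the Lipschitz constant in Lemma~\ref{l31} grows like $L^{H-h}$, securing a uniform $\epsilon/(4HL)$ approximation requires a grid of doubly-large cardinality $(8H^3L^{H+1}/\epsilon)^H$, and one must check that its logarithm collapses to $H^2\log(8H^3L^2/(\delta\epsilon))$. The reason this is affordable---and the reason the final bound carries $H^4$ rather than $H^3$ under the root---is that $\log(L^{H^2})=H^2\log L$ is already of the claimed order, so the exponential blow-up is harmless once we pass to logarithms. A secondary point requiring care, absent in the genuine-MDP case, is verifying the containment $u^*\in[-H,H]^H$: since the pseudo-MDP Bellman update can amplify by $\gamma L$ per step, one must argue that the truncated value-iteration iterates $\widehat{V}^*_h$ remain bounded by $H$ over the chosen horizon $H=O((1-\gamma)^{-1}\log\epsilon^{-1})$ before Lemma~\ref{l33} can be invoked.
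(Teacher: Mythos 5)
Your proposal is correct and follows essentially the same route as the paper's proof: the paper likewise instantiates Lemma~\ref{l33} with a uniform grid on $U^{*}_{s,a}\cap[-H,H]^{H}$ (of cardinality $(4H^3L^{2H}/\epsilon)^H$ rather than your $(8H^3L^{H+1}/\epsilon)^H$, an immaterial difference, since its coarser resolution still yields bias $\epsilon/(4HL^{H})\leq\epsilon/(4HL)$) and performs the same log-cardinality accounting to get $\sqrt{H^4\log(8H^3L^2/\delta\epsilon)/(2N)}$. The containment $u^{*}\in U^{*}_{s,a}\cap[-H,H]^{H}$ that you flag as the delicate pseudo-MDP point is simply asserted without proof in the paper, so your caution is warranted but does not constitute a different approach.
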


\begin{proof}
	We set $B_{s,a}^{\pi^*}$ to be the evenly spaced elements in the interval $U^{\pi^*}_{s,a}\cap[-H,H]^{H}$ and $\left|B_{s,a}^{\pi^*}\right|=(\frac{4H^3L^{2H}}{\epsilon})^H$. Then for any $u'\in U^{\pi^*}_{s,a}\cap[-H,H]^{H}$, we have $\min_{u\in B_{s,a}^{\pi^*}}\left\|u'-u\right\|_\infty\leq\epsilon/4H^2L^{2H}$. Note that $u^{\pi^*}\in U^*_{s,a}\cap[-H,H]^{H}$. Combining this with Lemma 33 implies the result.
\end{proof}

\begin{proof}[Proof of Theorem 5]
From Lemma \ref{l34}, with probability larger than $1-\delta$, we have
    \begin{align*}
        \left\|\widehat{V}^*_0-V^*_0\right\|_\infty&\leq\left\|\widehat{Q}^*_0-Q^*_0\right\|_\infty\\
        &\leq\sum_{h=0}^{H-1}\gamma^{h+1}L\left\|\left(\widehat{P}_\mathcal{K}-P_\mathcal{K}\right)\widehat{V}_{h+1}^*\right\|_\infty\\
        &\leq\sum_{h=0}^{H-1}\gamma^{h+1}L\left(\sqrt{\frac{H^4\log(8H^4L^2/\delta\epsilon)}{2N}}+\frac{\epsilon}{4HL}\right)\\
        &\leq\sqrt{\frac{H^6L^2\log(8H^3L^2/\delta\epsilon)}{2N}}+\frac{\epsilon}{4}
    \end{align*}
    Choosing $N\geq CH^6L^2\log(CKHL/\delta\epsilon)(\epsilon(1-\gamma))^{-2}$, we have $\left\|\widehat{V}^*_0-V^*_0\right\|_\infty\leq\epsilon(1-\gamma)/2$. Now we replace $H$ with $O((1-\gamma)^{-1}\log(1/\epsilon))$ and we have $\left\|V^*_0-V^*\right\|_\infty\leq\epsilon(1-\gamma)/2$ by the convergence of value iteration. So we have $\left\|\widehat{V}^*_0-V^*\right\|_\infty\leq\epsilon(1-\gamma)$, which implies the greedy policy with respect to $\widehat{V}^*_0$ is an $\epsilon$-greedy policy in the true MDP \cite{singh1994upper}.

\end{proof}

\end{document}